\definecolor{mygray}{gray}{.8}
\newcommand{\yplu}[1]{\textcolor{red}{(2prime: #1)}}
\newcommand{\zi}[1]{\textcolor{ForestGreen}{(Zach: #1)}}
\newcommand{\blockcomment}[1]{}
\newtheorem{theorem}{Theorem}
\newtheorem{remark}{Remark}
\newtheorem{lemma}{Lemma}
\newtheorem{assumption}{Assumption}
\def\x{{\boldsymbol{x}}}
\def\h{{\boldsymbol{h}}}
\def\w{{\boldsymbol{w}}}
\def\W{{\boldsymbol{W}}}
\def\Hb{{\boldsymbol{H}}}
\newcommand{\nn}{\nonumber}
\newcommand{\nmaj}{n_{\mathrm{maj}}}
\newcommand{\nmin}{n_{\mathrm{min}}}
\newcommand{\spu}{w^{\mathrm{use-spu}}}
\newcommand{\core}{w^{\mathrm{use-core}}}
\newcommand{\pmaj}{p_{\mathrm{maj}}}
\title{Importance Tempering:  Group Robustness for Overparameterized Models}
\author{%
  Yiping Lu\\
  ICME\\
  Stanford University\\
  Stanford, CA 94305 \\
  \texttt{yplu@stanford.edu} \\
  \And
  Wenlong Ji\\
  Department of Satistics\\
  Stanford University\\
  Stanford, CA 94305 \\
  \texttt{jwl2000@stanford.edu}\\
  \AND 
  Zachary Izzo\\
  Department of Mathematics\\
  Stanford University\\
  Stanford, CA 94305 \\
  \texttt{zizzo@stanford.edu}\\
  \And
  Lexing Ying\\
  Department of Mathematics\\
  Stanford University\\
  Stanford, CA 94305 \\
    \texttt{lexing@stanford.edu}\\
}
\begin{document}

\maketitle

\begin{abstract}
  Although overparameterized models have shown their success on many machine learning tasks, the accuracy could drop on the testing distribution that is different from the training one. This accuracy drop still limits applying machine learning in the wild. At the same time, importance weighting, a traditional technique to handle distribution shifts, has been demonstrated to have less or even no effect on overparameterized models both empirically and theoretically.  In this paper, we propose importance tempering to improve the decision boundary and achieve consistently better results for overparameterized models. Theoretically, we justify that the selection of group temperature can be different under label shift and spurious correlation setting. At the same time,  we also prove that properly selected temperatures can extricate the minority collapse for imbalanced classification. Empirically, we achieve state-of-the-art results on worst group classification tasks using importance tempering.
\end{abstract}

\section{Introduction}
\vspace{-0.1in}
Overparameterized neural networks have achieved state-of-the-art performance on numerous machine learning tasks. However, they can fail when the test data distribution differs from the training data distribution. In this paper, we consider the generalization properties of overparameterized neural networks on a typical subgroup of the data \cite{sagawa2019distributionally,sagawa2020investigation}, particularly when a certain subgroup of the data is hard to sample \cite{cao2019learning} and overparameterized neural networks become vulnerable to fitting spurious features \cite{torralba2011unbiased,buolamwini2018gender,zech2018variable,geirhos2020shortcut,xiao2020noise}.

\begin{figure}[h]
	\centering
	\vspace{-0.15in}
	\subfigure[Linear Model for Separable Data]{
		\begin{minipage}[b]{0.4\textwidth}
		\centering
			\includegraphics[width=1\textwidth]{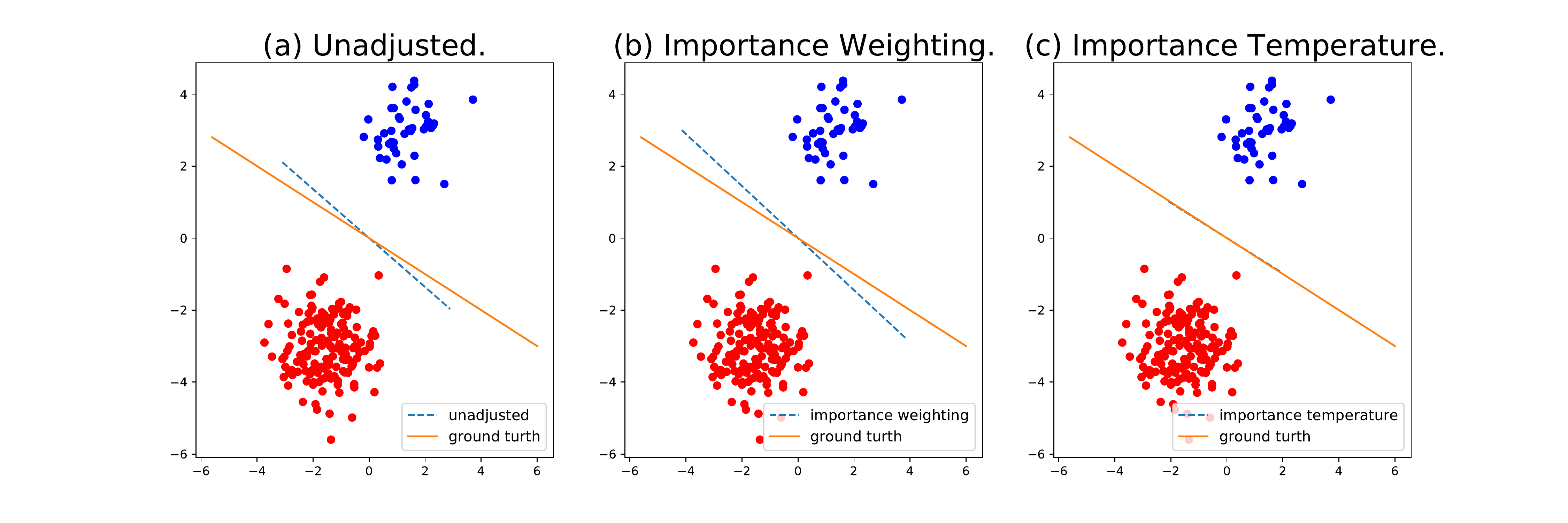}
		\end{minipage}
		\label{fig:hor_2figs_1cap_2subcap_1}
	}
    	\subfigure[Multilayer Perceptron with two hidden layers of size 200]{
    		\begin{minipage}[b]{0.4\textwidth}
    		\centering
   		 	\includegraphics[width=1\textwidth]{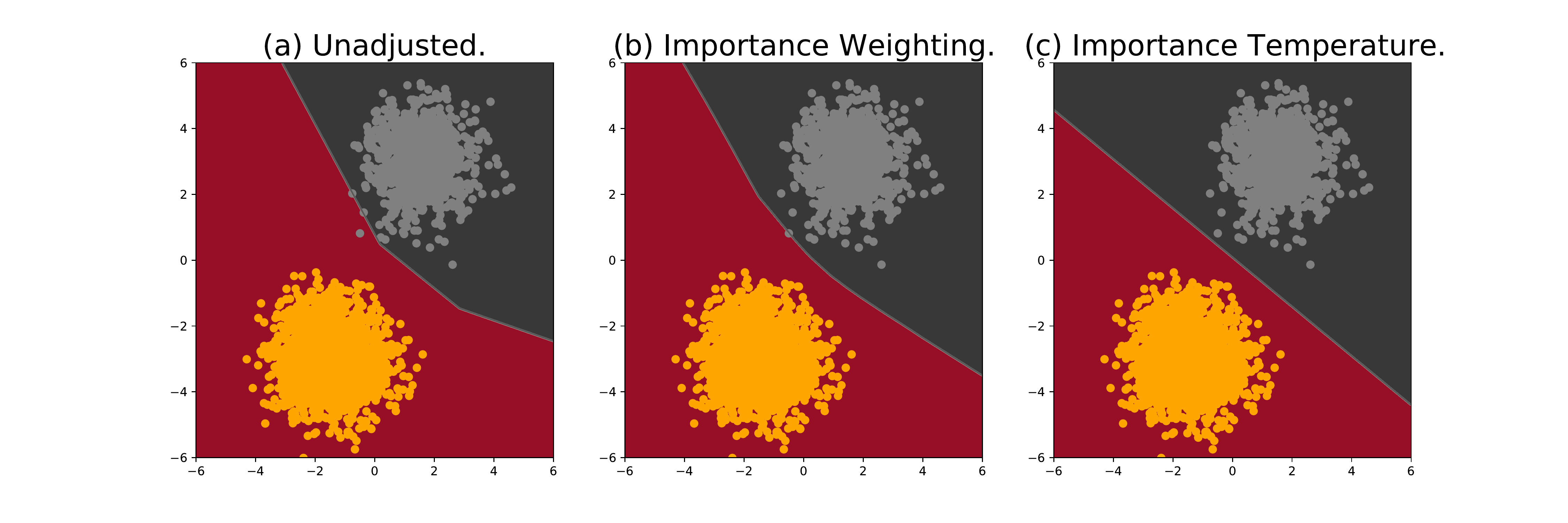}
    		\end{minipage}
		\label{fig:hor_2figs_1cap_2subcap_2}
    	}
    	\vspace{-0.1in}
	\caption{As shown in \cite{byrd2019effect}, importance weighting does not change the decision boundary, but importance tempering can. Training data points are colored according to their true labels. The learned boundary is plotted as dotted blue line in (a) and shown by the different background colors in (b).}
\vspace{-0.1in}
	\label{fig:page1figure}
\end{figure}

Importance weighting \cite{shimodaira2000improving,sugiyama2008direct,cortes2010learning} is a classical statistical technique to train machine learning models that can adapt to class imbalances by re-weighting the loss function during training. Using importance weights, one can construct an unbiased estimator of the test loss via upweighting the training data that are more likely to appear in the test data. However, recent studies show that importance weighting has little to no impact on generalization when training deep neural networks to convergence \cite{byrd2019effect,xu2021understanding,anonymous2022is}, but rather only improves optimization properties \cite{anonymous2022stochastic}. \cite{zhai2022understanding} proved that overparameterized models trained with dynamic importance weightng \cite{fang2020rethinking} also does not improve over ERM. Moreover, in the current deep learning paradigm, practitioners frequently train overparameterized models that can interpolate the training data \cite{belkin2019reconciling,belkin2021fit}. Empirically, importance weighting has an impact only if strong regularization, \emph{i.e.}, early stopping or explicit $l_2$ regularization, is applied \cite{byrd2019effect,sagawa2019distributionally}. Theoretically, it has been shown that overparameterized linear and non-linear models trained with the importance weighted exponential or cross-entropy loss converge to the max-margin model \cite{soudry2018implicit,nacson2019convergence,lyu2019gradient,chizat2020implicit,ji2020directional} and such models will ignore the importance weights \cite{xu2021understanding}.

In this paper, we address these problems by proposing an alternative to importance weighting for overparameterized models, dubbed \textit{importance tempering} (IT). Inspired by \cite{cao2019learning}, we assign different margins to the training examples from different groups by adding temperature parameters to the exponential-tailed point loss. Unlike importance weighting, which has little to no impact when the network interpolates the training data, importance tempering increases the margin for the minority class and finds a better decision boundary, as shown in Figure \ref{fig:page1figure} for a simple Gaussian mixture dataset. Our numerical experiments show that importance tempering increases worst group accuracy even when the model is overparameterized. This observation refutes the hypothesis of \cite{sagawa2020investigation}, which states that overparameterization causes deep neural networks to overfit to spurious features in the data.

\vspace{-0.05in}
\subsection{Related Works}
\vspace{-0.05in}
\textbf{Implicit Bias of Gradient Descent} To understand how gradient descent and its variants help deep learning to find solutions with good generalization performance on the test data, a recent line of research has studied the implicit bias of gradient descent in different settings. For example, gradient descent is biased towards solutions with minimum norm under $\ell_2$ loss \cite{li2018algorithmic,vaswani2020each} and will converge to large margin solutions when using exponential-tailed loss \cite{soudry2018implicit,nacson2019convergence,lyu2019gradient,chizat2020implicit,ji2020gradient}. 

\textbf{Imbalanced Classification} \cite{cao2019learning} considered a label-distribution-aware margin loss for imbalanced data classification and selected the margin to minimize the generalization bound.
\cite{anonymous2022is} considered using a polynomial tailed loss (such as the focal loss \cite{lin2017focal}) instead of an exponential-tailed or cross-entropy loss. For the loss functions they consider, importance weighting can still have an effect even for overparameterized models. In this paper, we focus on using the cross-entropy loss since this is the most commonly used loss for classification problems in practice. \cite{kini2021label,ye2020identifying,narasimhan2021training,menon2020long,wang2021seesaw,anonymous2022learning,fang2020rethinking} proposed different loss functions for imbalanced classification tasks. For a detailed discussion, we refer the reader to Remark \ref{remark:related}. Furthermore, all of these papers focused on the average classification error over all groups, while in the present paper we also address the worst group classification error. This leads to a different selection of the class margins from what \cite{cao2019learning} proposed.  
\vspace{-0.1in}
\subsection{Our Contributions}
\vspace{-0.12in}
In summary, our contributions are as follows:
\vspace{-0.1in}
\begin{itemize}
\setlength{\itemsep}{0pt}
\setlength{\parsep}{0pt}
\setlength{\parskip}{0pt}
    \item We introduce importance tempering to fix the ineffectiveness of importance weighting \cite{byrd2019effect,xu2021understanding} for overparameterized models. Theoretically, we prove that using importance tempering with a homogeneous neural network will result in the assignment of different margins to each group \cite{cao2019learning} via the implicit bias of (stochastic) gradient descent on exponential-tailed loss \cite{soudry2018implicit,lyu2019gradient,ji2020directional}.
    \item We discuss the impact of importance tempering on the recently discovered phenomenon of neural collapse \cite{papyan2020prevalence} on imbalanced datasets. In particular, we show that importance tempering can fix minority collapse \cite{fang2021layer} for overparameterized models.
    We also find that it is consequential whether importance tempering is applied to the last layer \emph{features} or \emph{classifier}. These two settings lead to different geometries for the last layer features, from which we conclude that importance tempering should be applied to the last layer classifier but not the features.
    \item We conduct experiments on two types of distribution shifts. We find that the optimal importance tempering varies for different types of distribution shifts, which is in contrast to the common practice of selecting an importance weight equal to the imbalance ratio. We also show that importance tempering consistently improves the worst group accuracy even when the model is larger, refuting the hypothesis of \cite{sagawa2020investigation} that overparameterization causes deep neural networks to overfit to spurious features in the data.
\end{itemize}
\vspace{-0.05in}
\section{Importance Tempering}
\vspace{-0.05in}
In this section, we introduce our method, importance tempering (\textbf{IT}), which can be viewed as an analogue of importance weighting for overparametrized models trained with an exponential-tailed loss. We apply different temperatures to the exponential loss for different data points to control the model's level of confidence for each data point. Specifically, we show that IT will assign different classification margins to different subgroups of the data. Our proofs use techniques from \cite{soudry2018implicit,lyu2019gradient,ji2020directional}.

\vspace{-0.05in}
\subsection{Problem setup}
\vspace{-0.05in}
We assume that data points $x=\{(x_i,y_i,g_i)\}_{i=1}^{n}$ are sampled from $n_g$ groups.  Here $x_i\in\mathbb{R}^d$ are the features, $y_i\in\mathbb{R}$ is the label, and $g_i\in \{1,2,\ldots,n_g\}$ is the corresponding group label. Empirical risk minimization (ERM) aims to optimize $\mathcal{L}^{\text{ERM}}(\theta)=\frac{1}{n}\sum_{i=1}^n\exp(-y_iq(x_i,\theta))$, where $q(x,\theta)$ denotes the output of a neural network on input $x$ with parameters $\theta$. For simplicity, in this section, we consider a binary classification setting, \emph{i.e.}, $y\in\{-1,1\}$ and our prediction is given by the sign of $q(x,\theta)$. We will discuss how to use importance tempering with cross-entropy loss for multi-class classification problems in Section~\ref{section:multiclass}. IT modifies the ERM setting by adding temperature parameters for each group in the data:
$$
\mathcal{L}^{\text{IT}}(\theta)=\frac{1}{n}\sum_{i=1}^n\exp(-y_iq(x_i,\theta)f[g_i]).
$$
where $f[g_i]$ is the importance weight of group $g_i$. We then train $\theta$ by minimizing $\mathcal{L}^{\text{IT}}$.

\begin{remark} \label{remark:related}
Adding a temperature parameter was first introduced for facial recognition in \cite{guo2017one,khan2019striking}.  Independent work \cite{ye2020identifying,kini2021label} also introduced a temperature for the label shift problem. Our paper is different from these papers from two perspectives. First, these papers only address classification error without distribution shift. In this paper, we mainly discuss the impact of importance tempering on an overparameterized model's worst group performance. At the same time, the theory in \cite{kini2021label} only considers the two-class classification problem with label shift. In Section \ref{section:multiclass}, we show that the geometry of multi-class problems can be very different. Second, label shift is a special case of the problem we consider. In particular, the group variables $g_i$ can be different from the classes, which leads to a different selection of the temperature. 
\end{remark}
\vspace{-0.05in}
\subsection{importance tempering corrects the implicit bias}
\vspace{-0.1in}
In this section, following \cite{lyu2019gradient}, we will show that training an overparametrized homogeneous neural network with IT results in the solution of a cost-sensitive SVM problem \cite{shawe1999optimizing,fumera2002cost}. We make the following assumption on our model:
\begin{assumption}[Homogeneous model]  
\label{asm: homogeneity}
There exists a constant $L>0$ such that
$$
q(x,\alpha\theta)=\alpha^Lq(x,\theta), \forall \alpha>0.
$$
\end{assumption}
\vspace{-0.1in}
This assumption includes $L$-layer fully-connected and convolutional neural networks with ReLU or LeakyReLU activations as widely used examples. For such a model, we can establish the following result:
\vspace{-0.06in}
\begin{theorem}[Informal]
\label{thm: implicit bias}
For a homogeneous model $q(x,\theta)$ with some regularity conditions, let $\theta(t)$ denote the model parameters trained with gradient flow at time $t$. If there exists a time $t_0$ such that $\mathcal{L}^{\text{IT}}(\theta(t_0))<\frac{1}{n}$, then any limit point of $\frac{\theta(t_0)}{\|\theta(t_0)\|}$ is along the direction of (i.e., a scalar multiple of) a Karush-Kuhn-Tucker (KKT) point of the following minimum-norm separation problem:
$$
\min_\theta \|\theta\| \: \text{ s.t. } \:
y_iq(x_i,\theta)\ge 1/f[g_i], \hspace{.1in} i = 1,\ldots,n.
$$
\end{theorem}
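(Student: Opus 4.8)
The plan is to reduce the statement to the now-standard implicit-bias theorem for homogeneous networks trained with an exponential-tailed loss (Lyu--Li, Ji--Telgarsky), applied not to the raw network outputs but to \emph{rescaled} outputs that absorb the temperature parameters. Concretely, for each training example define $\tilde q_i(\theta) := f[g_i]\, q(x_i,\theta)$. Since $f[g_i]$ is a fixed positive scalar, Assumption~\ref{asm: homogeneity} immediately gives $\tilde q_i(\alpha\theta) = \alpha^L \tilde q_i(\theta)$ for all $\alpha>0$, so each $\tilde q_i$ is still $L$-homogeneous in $\theta$; it likewise inherits local Lipschitzness and whatever chain-rule/definability regularity is imposed on $q$ (these are the ``regularity conditions'' referenced in the statement, which in the formal version we would spell out exactly as in Lyu--Li). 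The crucial observation is that
\[
\mathcal{L}^{\text{IT}}(\theta) = \frac1n\sum_{i=1}^n \exp\!\bigl(-y_i \tilde q_i(\theta)\bigr),
\]
i.e.\ the importance-tempered loss is \emph{literally} the plain exponential loss evaluated on the per-example homogeneous functions $\{\, y_i \tilde q_i \,\}$. Hence gradient flow on $\mathcal{L}^{\text{IT}}$ is exactly the dynamics analyzed in the untempered theory, with $q(x_i,\cdot)$ replaced by $\tilde q_i$.

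Given this reduction, the remaining steps are: (i) translate the separability hypothesis; (ii) quote the directional-convergence/KKT theorem; (iii) substitute back. For (i), $\mathcal{L}^{\text{IT}}(\theta(t_0)) < 1/n$ forces every summand to be $<1$, hence $y_i \tilde q_i(\theta(t_0)) > 0$ for all $i$, which is precisely the separability requirement ($\ell<\tfrac1n$ for the exponential loss) of the implicit-bias theorem; monotonicity of the loss along gradient flow keeps this true for all $t \ge t_0$, and the normalized parameters stay well-defined since $\|\theta(t)\|\to\infty$. For (ii), we invoke the main theorem of Lyu--Li / Ji--Telgarsky: under these conditions every limit point of the normalized trajectory $\theta(t)/\|\theta(t)\|$ is, up to positive scaling, a KKT point of $\min_\theta \|\theta\|$ subject to $y_i \tilde q_i(\theta) \ge 1$ for all $i$. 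For (iii), using $f[g_i] > 0$ we rewrite the constraint $y_i f[g_i] q(x_i,\theta) \ge 1$ as $y_i q(x_i,\theta) \ge 1/f[g_i]$, which is the claimed cost-sensitive / margin-rescaled separation problem; the $\min\|\theta\|$ objective is unchanged because it never saw the temperatures. This also matches the interpretation in the surrounding text: a larger $f[g_i]$ shrinks the required margin $1/f[g_i]$, so up-weighting a minority group corresponds to using a \emph{smaller} temperature for it.

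The main technical obstacle is not the algebraic reduction, which is essentially bookkeeping, but making the invocation of the homogeneous implicit-bias theorem fully rigorous: one must check that the hypotheses of that theorem (the chain-rule/definability structure for the loss composed with the network, the a.e.\ differentiability needed for the Clarke-subdifferential/KKT statement, and the non-degeneracy conditions) are preserved when $q(x_i,\cdot)$ is replaced by $f[g_i]\, q(x_i,\cdot)$ while the point loss $e^{-u}$ is kept fixed. Since multiplication by a positive constant is a definable, smooth, homogeneity-preserving operation that commutes with all of these structures, this verification goes through, but it is the step that requires care rather than the final substitution. A secondary point for the formal version is that $f$ must be strictly positive and that the tempered data be separable by the network class at all, which is already implied by the hypothesis $\mathcal{L}^{\text{IT}}(\theta(t_0)) < \tfrac1n$.
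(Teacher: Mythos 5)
Your proposal is correct and is essentially the paper's own argument: the paper likewise reduces to the Lyu--Li directional-convergence theorem by absorbing the positive constant $f[g_i]$ into one factor of the product $y_i\,q(x_i,\theta)$ (the paper folds it into the label, forming the synthetic dataset $\{(x_i, f[g_i]y_i)\}_{i=1}^n$, whereas you fold it into the network output $\tilde q_i = f[g_i]q(x_i,\cdot)$ --- the same rescaling either way), and then rewrites the resulting constraint $f[g_i]y_i q(x_i,\theta)\ge 1$ as $y_i q(x_i,\theta)\ge 1/f[g_i]$. Your extra care in checking that homogeneity and the regularity/chain-rule hypotheses survive the rescaling is a slightly more thorough version of the paper's remark that Theorem 4.4 of Lyu--Li imposes no restriction on the (synthetic) labels.
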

\vspace{-0.1in}
\section{Importance Tempering for Label Shift}
\vspace{-0.1in}
In this section, we focus on the problem of label shift. 
In this case, the subgroups coincide precisely with the different classes (labels) of the data. This setting has been well studied for underparameterized models \cite{storkey2009training,han2018co,garg2020unified}. We provide the corresponding theory for overparameterized models in the label shift setting. Our method is compared with the reweighting-based method \cite{youbi2021simple} in Table \ref{table:result}. In this setting, the ratio of sample sizes across different classes is different for training and testing. For example, training data from a certain group of people may be extremely rare due to a bias in the data collection procedure, but we still want our model to perform well for this under-sampled group after we train and deploy it. Formally, consider a $K$-class classification problem, where $n_i,i\in[K]$ samples in the training data are drawn from class $i$ (sampled from distribution $p_i$). In the imbalanced setting, we may expect that the $n_i$ are of vastly different sizes.
\vspace{-0.15in}
\subsection{A Generalization Theorem for the Binary Case}
\vspace{-0.10in}
Below we provide a generalization bound in the binary label shift setting, which suggests setting the temperature as the square root of imbalance ratio. The proof of the theorem is shown in Appendix \ref{appendix:generalization}.
\vspace{-0.05in}
\begin{theorem}(Informal)\label{thm:sqrt} Let $\mathcal{F}$ denote the function class of two-layer 2-homogeneous neural networks, and let $\mathcal{C}(\mathcal{F})$ denote some proper complexity measure of the model class. If we fix the sum of temperatures $\sum_i f[i]$ to be a constant, then with high probability over the randomness of the training data, we have
$$
\max_i \mathbb{P}_{x\sim p_i}\left[y_iq(x,\theta)\le0\right]\lesssim \max_i {f[i]}\sqrt{\frac{\mathcal{C}(\mathcal{F})}{n_j}}.
$$
Furthermore, selecting $f[i]\propto {\sqrt{n_i}}$ minimizes the resulting bound. 
\end{theorem}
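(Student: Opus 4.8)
The plan is to turn the implicit‑bias characterization of Theorem~\ref{thm: implicit bias} into per‑class margins and then feed those margins into a standard margin‑based uniform‑convergence bound, applied separately to each class. Binary label shift is exactly the instance $n_g=2$ with $g_i=y_i$, so under the regularity conditions of Theorem~\ref{thm: implicit bias} the gradient‑flow limit, up to a positive rescaling, is a KKT point $\bar\theta$ of $\min_\theta\|\theta\|$ subject to $y_iq(x_i,\theta)\ge 1/f[y_i]$. Because a KKT point is feasible, every training example of class $i$ is classified by $q(\cdot,\bar\theta)$ with functional margin at least $1/f[i]$; and since the predicted label depends only on $\operatorname{sign} q$, the quantity we must control, $\mathbb{P}_{x\sim p_i}[y_iq(x,\theta)\le 0]$, equals $\mathbb{P}_{x\sim p_i}[y_iq(x,\bar\theta)\le 0]$.

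Next I would bound the two class‑conditional errors separately. For each $i$ the $n_i$ training points of class $i$ are i.i.d.\ draws from $p_i$, so a margin‑based generalization bound of Koltchinskii--Panchenko / Bartlett--Mendelson type, applied to the (norm‑bounded) class $\mathcal{F}$ of two‑layer $2$‑homogeneous networks with threshold $\gamma=1/f[i]$, holds uniformly over $\mathcal{F}$ and hence for the trained $\bar\theta$ (which lies in $\mathcal{F}$). Since no class‑$i$ training point violates the margin $1/f[i]$, the empirical margin‑error term vanishes, giving with probability $1-\delta$
$$\mathbb{P}_{x\sim p_i}[y_iq(x,\bar\theta)\le 0]\ \lesssim\ f[i]\,\widehat{\mathrm{Rad}}_{n_i}(\mathcal{F})+\sqrt{\tfrac{\log(1/\delta)}{n_i}}\ \lesssim\ f[i]\sqrt{\tfrac{\mathcal{C}(\mathcal{F})}{n_i}},$$
where $\mathcal{C}(\mathcal{F})$ is defined so that $\sqrt{\mathcal{C}(\mathcal{F})/n_i}$ dominates the empirical Rademacher complexity of the norm‑bounded shallow class — this is where known $O(\|\theta\|^2\|x\|/\sqrt{n})$‑type capacity bounds for two‑layer homogeneous networks enter. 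A union bound over the two classes costs only a constant factor, and taking the maximum over $i$ yields the displayed inequality.

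For the ``furthermore'' claim I would solve the resulting one‑line optimization: among temperature vectors with $\sum_i f[i]$ equal to a fixed constant, minimize $\max_i f[i]/\sqrt{n_i}$ (the data‑independent factor $\sqrt{\mathcal{C}(\mathcal{F})}$ drops out). A standard exchange argument shows the optimum equalizes the terms $f[i]/\sqrt{n_i}$: if two differ, shifting a little mass from the larger to the smaller strictly decreases the maximum. Hence $f[i]\propto\sqrt{n_i}$; in the binary case this is simply $f[1]/f[2]=\sqrt{n_1/n_2}$, i.e.\ the temperature ratio is the square root of the imbalance ratio.

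The main obstacle is honest bookkeeping of $\|\bar\theta\|$, the norm of the min‑norm interpolant, in the second step: the function class over which the Rademacher complexity is taken must not itself depend on $f$, or else $\mathcal{C}(\mathcal{F})$ is not a genuine complexity measure and the last‑step optimization is not legitimate. Using $2$‑homogeneity, $\|\bar\theta\|^2 = 1/\big(\max_{\|\phi\|=1}\min_i f[i]\,y_iq(x_i,\phi)\big)$, which is bounded above by $1/(\gamma_0\min_i f[i])$ with $\gamma_0$ the temperature‑free unit‑norm max‑margin of the separable data; since $\sum_i f[i]$ is fixed, this factor is controlled and can be folded into $\mathcal{C}(\mathcal{F})$. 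Pinning down exactly how $\mathcal{C}(\mathcal{F})$ absorbs this dependence — and, relatedly, which concrete two‑layer Rademacher bound to cite — is the delicate part; everything else (the margin inequality, the two‑way union bound, the optimization) is routine.
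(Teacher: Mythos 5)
Your overall route is the same as the paper's: use Theorem~\ref{thm: implicit bias} to convert the gradient-flow limit into a per-class functional margin of $1/f[i]$, apply a margin-based generalization bound separately to each class (so the empirical margin-error term vanishes and the population error is bounded by $f[i]\cdot\mathrm{Rad}_{n_i}$ plus a $\sqrt{\log(1/\delta)/n_i}$ term), take a union bound over the two classes, and then equalize $f[i]/\sqrt{n_i}$ under the sum constraint. The paper's proof of Theorem~\ref{thm: it gen bound} does exactly this, citing Kakade--Sridharan--Tewari for the margin bound and Bach's $\mathcal{F}_1$-norm Rademacher bound for the capacity term, and its final optimization step is the same equalization you describe.

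The one substantive point of divergence is the step you yourself flag as delicate: controlling the norm of the min-norm separator so that $\mathcal{C}(\mathcal{F})$ is legitimately temperature-independent. The paper handles this with Lemma~\ref{lemma:boundongamma}: it explicitly constructs a separating function (built from distance functions to the projected positive and negative classes) satisfying the tempered margin constraints and bounds its $\mathcal{F}_1$ norm via Bach's approximation theory; the resulting bound depends on the temperatures only through $\gamma_1+\gamma_{-1}$, which is held constant, so the capacity factor really is fixed across the optimization over $f$. Your proposed substitute, $\|\bar\theta\|^2\le 1/(\gamma_0\min_i f[i])$, is not controlled by fixing $\sum_i f[i]$ alone, since $\min_i f[i]$ can tend to $0$ on that constraint set, so it cannot simply be ``folded into $\mathcal{C}(\mathcal{F})$'' as stated. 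This happens not to derail your final conclusion --- the offending factor $1/\min_j f[j]$ is common to every class, so it cancels when you equalize $f[i]/\sqrt{n_i}$ and you still recover $f[i]\propto\sqrt{n_i}$ --- but to obtain the bound in the stated form, with an $f$-free complexity measure, you need either the paper's constructive lemma or an explicit restriction to temperature vectors bounded away from the boundary of the simplex.
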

\begin{remark}
Similarly, if we consider a balanced test distribution, the average test error can be bounded by $\sum_i {f[i]}\sqrt{\frac{\mathcal{C}(\mathcal{F})}{n_j}}$. In this situation, $f[i]\propto {\sqrt{n_i}}$ still minimizes the bound. The corresponding empirical results are also plotted in Figure~\ref{figure:temperature}.
\end{remark}
\vspace{-0.1in}
We tested our theory on both the CIFAR-10 \cite{krizhevsky2009learning} and Fashion MNIST \cite{xiao2017fashion} datasets with a ResNet-32 \cite{he2016deep} model. We specify minority and majority groups with $n_1$ and $n_2$ training points, respectively, and set the importance tempering for the minority group to be $\left(\frac{n_1}{n_2}\right)^\gamma$, where $\gamma$ is a hyperparameter to be tuned. The majority group has an importance tempering equal to 1. (Note that this is equivalent to setting the temperature for each group.) We then vary $\gamma$ from 0 to 1. The experiment confirms our theory as the model achieves the best performance with $\gamma \approx 0.5$. For more details, we refer to Section~\ref{section:exp1}.
\vspace{-0.15in}
\subsection{Multi-class and Neural Collapse}
\label{section:multiclass}
\vspace{-0.10in}
Recently, \cite{papyan2020prevalence} observed that during the terminal phase of training (\emph{i.e.}, the stage after achieving zero training error) over a balanced dataset, the features for data points within the same class collapse to their mean, and the feature means for each class will converge to the simplex equiangular tight frame (ETF). This \emph{neural collapse} \cite{papyan2020prevalence} phenomenon enables us to understand the benefit of training after achieving zero training error to
achieve better performance in terms of generalization and robustness. For imbalanced datasets, \cite{fang2021layer} discovered that the minority classes are not distinguishable in terms of their last layer classifiers when the imbalance ratio exceeds a threshold. This phenomenon is known as \emph{minority collapse}, and it fundamentally limits the performance of feature-learning models for the minority classes. In this section, we aim to show that IT can extricate the minority collapse for multi-class classification problems.

For the multi-class classification problem, there are two ways to introduce importance tempering. Following \cite{fang2021layer}, in this section, we study the two resulting loss functions and explore their differences in the extremely imbalanced limit. We analyze this setting by way of the layer peeled model \cite{fang2020rethinking,zhu2021geometric,ji2021unconstrained} as follows. A standard neural network architecture computes an output of the form
\begin{equation}
f\left(\boldsymbol{x} ; \W_{full} \right)=\W_{L} \sigma\left(\boldsymbol{b}_{L-1}+\W_{L-1} \sigma\left(\cdots \sigma\left(\boldsymbol{b}_{1}+\W_{1} \boldsymbol{x}\right)\right)\right)
\end{equation} 
In the layer peeled model, for each data point in the dataset $\bigcup_{k=1}^K\{\x_{k,i},i=1,\cdots,n_k\}$, its last layer representation $\h_{k,i}=\sigma\left(\boldsymbol{b}_{L-1}+\W_{L-1} \sigma\left(\cdots \sigma\left(\boldsymbol{b}_{1}+\W_{1} \boldsymbol{x}_{k,i}\right)\right)\right)\in\mathbb{R}^d$ is considered as a free variable which we can choose directly. The same holds for the last layer classifier $\W\in\mathbb{R}^{K \times d}=\W_L=[\w_1,\w_2,\cdots,\w_K]^\top$ which will be applied to the representations $\h_{k,i}$. The unconstrained layer-peeled model (ULPM) \cite{ji2021unconstrained} simplifies the cross entropy loss as
\begin{equation}
    \label{loss:original}
    \mathcal{L}(\W,\Hb) =  -\sum_{k=1}^{K}\sum_{i=1}^{n_k} \log\left( \frac{\exp(\w_k^\top \h_{k,i})}{\sum_{j=1}^K\exp(\w_j^\top \h_{k,i})} \right).
\end{equation}

\begin{figure}[h]
 \vspace{-0.2in}
	\centering
	\subfigure[CIFAR 10, Average]{
			\includegraphics[width=0.2\textwidth]{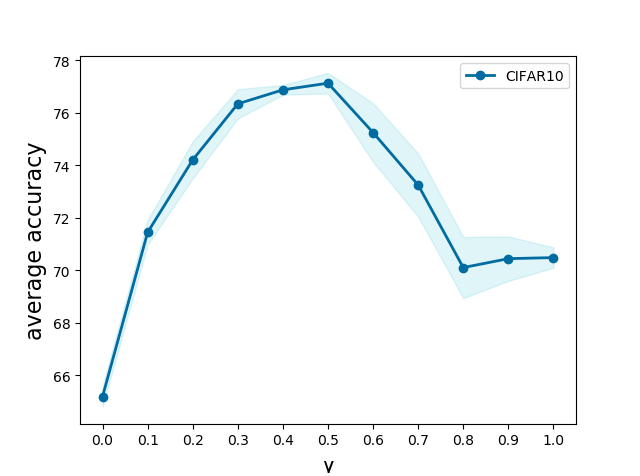}
		\label{fig:cifaravg}
	}
	\subfigure[CIFAR 10, Worst]{
			\includegraphics[width=0.2\textwidth]{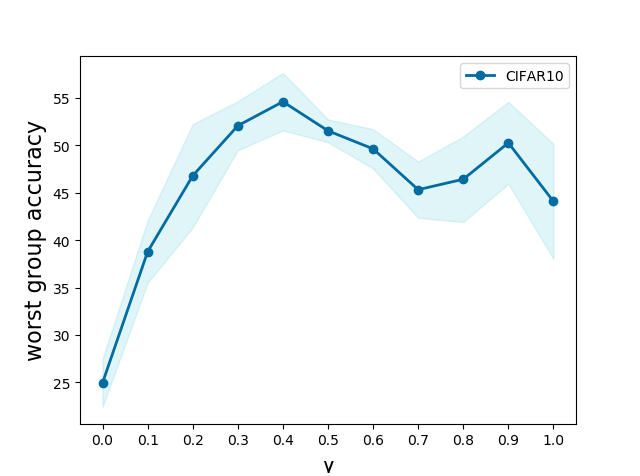}
		\label{fig:cifarworst}
	}
	\subfigure[Fashion MNIST, Average]{
			\includegraphics[width=0.2\textwidth]{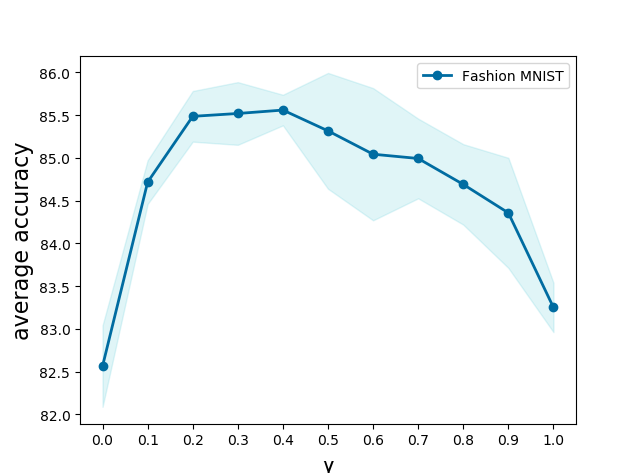}
		\label{fig:mnistavg}
	}
	\subfigure[Fashion MNIST, Worst]{
			\includegraphics[width=0.2\textwidth]{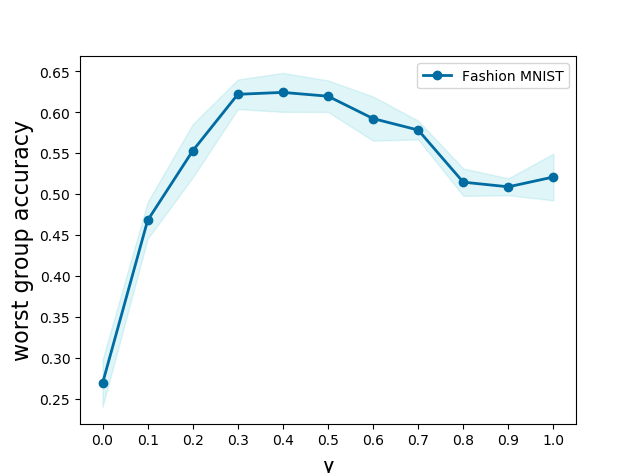}
		\label{fig:mnistworst}
	}\vspace{-0.15in}
	\label{figure:temperature}
	\caption{Effect of the minority class temperature $\left(\frac{n_1}{n_2}\right)^\gamma$ on CIFAR-10 and Fashion MNIST with an imbalanced of 1:100. The best performance on both average accuracy and worst group accuracy occur when $\gamma \approx 0.5$.}
	\vspace{-0.12in}
\end{figure}
Following \cite{buda2018systematic,cao2019learning,fang2021layer}, we consider the step imbalance setting in this section. We consider two different class sizes during training time: the majority classes each contain $n_{A}$ training examples ($n_1=n_2=\cdots=n_{[K/2]}=n_A$), and the minority classes each contain $n_B$ training examples ($n_{[K/2]+1}=n_{[K/2]+2}=\cdots=n_{K}=n_B$). We call $R:=\frac{n_A}{n_B}$ the imbalance ratio. At test time, however, the classes are balanced, i.e., each class has the same number of data points.

To incorporate importance tempering into the cross-entropy loss, we can either add the temperature to the features $\h$ or to the last layer classifier $\w$. Introducing the temperature at these different positions results in two different objective functions:
\vspace{-0.15in}
\begin{multicols}{2}
\begin{equation}
\footnotesize
		\label{loss:h}
		\mathcal{L}^{\text{IT(H)}}(\theta)=-\sum_{k=1}^{K}\sum_{i=1}^{n_k}\log\frac{\exp(\w_k^\top \lambda_{k} \h_{k,i})}{\sum_{j=1}^{K}\exp(\w_j^\top \lambda_{k}\h_{k,i})},
\end{equation}
\begin{equation}
\footnotesize
	\label{loss:w}
	\mathcal{L}^{\text{IT(W)}}(\theta)=-\sum_{k=1}^{K}\sum_{i=1}^{n_k}\log\frac{\exp(\lambda_{k}\w_k^\top \h_{k,i})}{\sum_{j=1}^{K}\exp(\lambda_{j}\w_j^\top \h_{k,i})}.
\end{equation}
\end{multicols}

\begin{remark}
The ambiguity in where to add IT only appears in the case of label shift. For a general worst group problem (i.e., where the groups are not necessarily aligned with the labels), one can only add the temperature to the last layer features $\h$. In this case, the objective function is different from the independent work of \cite{kini2021label}.
\end{remark}
	
\begin{table*}[]
\centering
\footnotesize
\caption{Effect of incorporating importance tempering on the last layer features vs. classifiers for imbalanced CIFAR-10. We find that introducing the temperature on the last layer classifier is better.}
\begin{tabular}{c||cc|cc|cc}
\hline
\hline
\multirow{2}{*}{\thead{\textbf{Imbalance}\\
\textbf{ Ratio}}} & \multicolumn{2}{c|}{\textbf{Vanilla}}                        & \multicolumn{2}{c|}{\textbf{Temperature over feature}}       & \multicolumn{2}{c}{\textbf{Temperature over classifier}}    \\ \cline{2-7} 
                                         & \multicolumn{1}{c|}{\textbf{Worst Group}} & \textbf{Average} & \multicolumn{1}{c|}{\textbf{Worst Group}} & \textbf{Average} & \multicolumn{1}{c|}{\textbf{Worst Group}} & \textbf{Average} \\ \hline\hline
\textbf{1:10}                                     & \multicolumn{1}{c|}{64.3}                     &          85.65        & \multicolumn{1}{c|}{67.2}                     &         86.27         & \multicolumn{1}{c|}{72.7}                     &      87.89            \\ \hline
\textbf{1:100}                                    & \multicolumn{1}{c|}{21.5}                 & 67.52            & \multicolumn{1}{c|}{25.9}                 & 70.11            & \multicolumn{1}{c|}{57.2}                 & 76.65            \\ \hline
\end{tabular}
\label{table:tempposition}
\vspace{-0.2in}
\end{table*}
\vspace{-0.1in}
\subsubsection{Theoretical Results}
\vspace{-0.1in}
Here we show how the choice of importance tempering and the position at which it is introduced can impact the geometry of the last layer features and classifiers in the extremely imbalanced setting (\emph{i.e.}, $R\rightarrow\infty$) considered by \cite{fang2021layer}. We first link the converged solution of gradient flow on homogeneous neural networks to the KKT point of the corresponding minimum-norm separation problem. We then consider the global solution of the cost-sensitive SVM problem to study the geometry of the last layer features.  From \cite{lyu2019gradient,ji2021unconstrained}, we know that the gradient descent dynamics of objective function \eqref{loss:original} converges to a KKT point of
\begin{equation}
\begin{aligned}
\label{SVM:or}
&\min _{\W, \Hb} \frac{1}{2}||\W||_F^2+\frac{1}{2}||\Hb||_F^2\quad
s.t.& \w_k^\top \h_{k,i}-\w_j^\top \h_{k,i} \geq 1,\quad  k\not=j\in[K],i\in [n_k],
\end{aligned}
\end{equation} 
the gradient descent dynamics of \eqref{loss:h} converges to a KKT point of
\begin{equation}
\begin{aligned}
\label{SVM:h}
&\min _{\W, \Hb} \frac{1}{2}||\W||_F^2+\frac{1}{2}||\Hb||_F^2\quad
s.t.& \lambda_k\w_k^\top \h_{k,i}-\lambda_k\w_j^\top \h_{k,i} \geq 1,\quad  k\not=j\in[K],i\in [n_k],
\end{aligned}
\end{equation}
and the gradient descent dynamics of \eqref{loss:w} converges to a KKT point of
\begin{equation}
\begin{aligned}
\label{SVM:w}
&\min _{\W, \Hb} \frac{1}{2}||\W||_F^2+\frac{1}{2}||\Hb||_F^2\quad
s.t.& \lambda_k\w_k^\top \h_{k,i}-\lambda_j\w_j^\top \h_{k,i} \geq 1,\quad  k\not=j\in[K],i\in [n_k].
\end{aligned}
\end{equation}

\begin{theorem}\label{theorem:collapse}(Informal) Assume $R:=\frac{n_A}{n_B}\rightarrow\infty$ and select the temperature as Theorem \ref{thm:sqrt} suggests, \emph{i.e.} $\lambda_j=\sqrt{n_j}$. Then the following statements hold:
\begin{itemize}
    \item \textbf{(a)} If the global solution $(\Hb^\ast,\W^\ast)$ of (\ref{SVM:or}) has a limit, then the limit is a minority collapse solution, \emph{i.e.} $\lim_{R\rightarrow \infty} w_k^\ast-w_{k'}^\ast = 0, \text{ for all } K/2<k<k'\le K.$
    \item \textbf{(b)} The global solution $(\Hb^\ast,\W^\ast)$ of (\ref{SVM:h}) converges to the neural collapse solution, \emph{i.e} a simplex ETF solution: the vectors of the class means (after centering by their global mean) converge to vectors of equal length, form equal-sized angles between any given pair, and are the maximally
pairwise-distanced configuration subject to having the previous two properties $\cos(\bar{{\h}}_k,\bar{{\h}}_j)=-\frac{1}{K-1},\quad||\bar{{\h}}_k||=||\bar{{\h}}_j||,\quad k\not = j.$
    \item \textbf{(c)} If the global solution of problem (\ref{SVM:w}) has a directional limit, the directional limit of the global solution $(\Hb^\ast,\W^\ast)$ of (\ref{SVM:w}) satisfies $\lim_{R\rightarrow \infty}\cos(\bar{{\h}}_k,\bar{{\h}}_j)=-\frac{1}{\frac{K}{2}-1},\quad||\bar{{\h}}_k||=||\bar{{\h}}_j||,$ for all $K/2+1\le k\not = j\le K$.
\end{itemize}
\end{theorem}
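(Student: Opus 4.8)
The plan is to reduce all three parts to finite‑dimensional, essentially symmetric optimization problems and then read off the limiting geometry. The common first step: in each of \eqref{SVM:or}, \eqref{SVM:h}, \eqref{SVM:w} the $n_k$ constraints coming from class $k$ are identical functions of the single variable $\h_{k,i}$, so replacing $\h_{k,1},\dots,\h_{k,n_k}$ by their average stays feasible and does not increase $\|\Hb\|_F^2$; hence a global optimum has $\h_{k,i}\equiv\h_k$, and I may work with the reduced objective $\tfrac12\|\W\|_F^2+\tfrac12\sum_k n_k\|\h_k\|^2$. The second step is a change of variables: $\tilde\h_k=\lambda_k\h_k$ for \eqref{SVM:h} and $\tilde\w_k=\lambda_k\w_k$ for \eqref{SVM:w}; with $\lambda_k=\sqrt{n_k}$ each of these turns the corresponding (feature, resp.\ classifier) penalty into an \emph{unweighted} sum of squared norms.

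For (b) this already finishes the job: after $\tilde\h_k=\sqrt{n_k}\,\h_k$, problem \eqref{SVM:h} is exactly the balanced (equal‑class‑size) instance of \eqref{SVM:or}, namely $\min\tfrac12\|\W\|_F^2+\tfrac12\sum_k\|\tilde\h_k\|^2$ subject to $(\w_k-\w_j)^\top\tilde\h_k\ge1$; its global minimizer is the simplex ETF $\w_k\propto m_k$, $\tilde\h_k\propto m_k$ with $m_k^\top m_j=-1/(K-1)$, by the neural‑collapse characterizations of \cite{papyan2020prevalence,fang2021layer,ji2021unconstrained} (or, for a self‑contained argument, by symmetrizing the optimum over class permutations and solving the resulting two‑parameter program in $\|\w\|,\|\tilde\h\|$). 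Since $\bar\h_k=\tilde\h_k/\sqrt{n_k}$ is a positive rescaling and cosine is scale‑free, $\cos(\bar\h_k,\bar\h_j)=-1/(K-1)$; no $R\to\infty$ limit enters here.

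For (a) and (c) I would next argue the reduced optimum is block‑symmetric — invariant under permutations of majority indices among themselves and of minority indices among themselves — so it is parametrized by finitely many scalars (block norms and within/between‑block inner products of the $\w$'s and $\h$'s, subject to PSD/rank feasibility of the Gram matrix, automatic for $d\ge K$). Then (a) follows by rescaling $\h_k\mapsto\sqrt{n_k}\,\h_k$ and then by the common growth rate: the majority margin constraints become $(\w_k-\w_j)^\top\h_k\ge1$ and the minority ones $(\w_k-\w_j)^\top\h_k\ge\sqrt{n_B/n_A}\to0$, so in the limit the minority features can be taken to $0$, after which every minority classifier is the unique minimum‑norm solution of the \emph{same} linear system $\w^\top\h_k\le\w_k^\top\h_k-1$ over majority $k$, forcing all minority classifiers to coincide (minority collapse). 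For (c), after $\tilde\w_k=\sqrt{n_k}\,\w_k$ and a further rescaling the constraints read $\hat\w_k^\top\hat\h_k-\sqrt{n_j/n_k}\,\hat\w_j^\top\hat\h_k\ge1$; as $R\to\infty$ the cross‑term linking a minority feature to a majority classifier carries the factor $\sqrt{n_A/n_B}\to\infty$ and forces $\hat\w_j^\top\hat\h_k\to0$ (minority features orthogonal to the majority block), while the reciprocal cross‑term carries $\sqrt{n_B/n_A}\to0$ and drops out, so the minority block decouples into an independent balanced $(K/2)$‑class copy of \eqref{SVM:or}; its minimizer is the $(K/2)$‑simplex ETF, giving $\cos(\bar\h_k,\bar\h_j)\to-1/(K/2-1)$ with equal norms among the minority classes.

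The hard parts will be two. First, the passage to the block‑symmetric finite‑dimensional program: the objective is strictly convex but the feasible set is non‑convex (the constraints are bilinear in $(\W,\Hb)$), so I need either a genuine uniqueness statement for the global optimum or a careful averaging/perturbation argument showing the optimum itself — not merely the optimal value — is symmetric, together with control of the Gram‑matrix rank. Second, making the $R\to\infty$ limits uniform: ruling out that the majority block escapes to infinity faster than the constraints require, showing that the $O(1/\sqrt R)\times O(\sqrt R)$ cross‑terms genuinely vanish rather than leaving an $O(1)$ residual coupling of the minority block to the majority block, and pinning down the normalization in which ``$\w_k^\ast-\w_{k'}^\ast\to0$'' is to be read (it is naturally a statement about the suitably rescaled, i.e.\ directional, solution). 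I would package both via a convergence‑of‑minimizers (epi‑convergence) lemma: the rescaled programs converge, in objective and in constraints, to the limiting programs identified above, and their minimizers converge to the stated configurations.
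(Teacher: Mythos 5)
Your proposal follows essentially the same route as the paper's appendix: collapse the within-class features to their means, absorb the temperatures $\lambda_k=\sqrt{n_k}$ by rescaling, and read off the limiting geometry --- for (b) the rescaled problem is exactly the balanced instance whose global optimum is the simplex ETF (the paper proves this directly by centering the classifiers, summing the margin constraints over $j$, and analyzing the equality case of Cauchy--Schwarz, rather than citing the balanced result), while for (a) and (c) the majority/minority cross-couplings vanish as $R\to\infty$ so the minority classifiers collapse, resp.\ the minority block decouples into a balanced $K/2$-class copy with cosine $-1/(K/2-1)$. The two ``hard parts'' you flag --- symmetry/uniqueness of the non-convex optimum and uniformity of the $R\to\infty$ limits --- are treated only at the same heuristic level in the paper (its arguments for (a) and (c) are explicitly presented as intuition), so your sketch is at parity with the published argument.
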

\vspace{-0.2in}
\paragraph{Discussion} Theorem \ref{theorem:collapse} shows that tempering the last layer features $h$ enables the class means (centered at the global-mean) to form the largest possible equal-sized angles between any pair of class means, while tempering the last layer classifier $w$ only enlarges the angles between the minority classes. This leads to larger angles (from $\arccos\left(-\frac{1}{K-1}\right)$ when tempering $h$ to $\arccos\left(-\frac{1}{\frac{K}{2}-1}\right)$ when tempering $w$) between the minority class vectors and thus better results on the minority classes. At the same time, $\arccos\left(-\frac{1}{\frac{K}{2}-1}\right)$ is the largest possible angle that can be achieved when all of the minority feature vectors form an equiangular frame. 

\begin{wrapfigure}{r}{0.5\textwidth}
	\centering
	\vspace{-0.8in}
	\subfigure[Average angle between majority groups]{
			\includegraphics[width=0.2\textwidth]{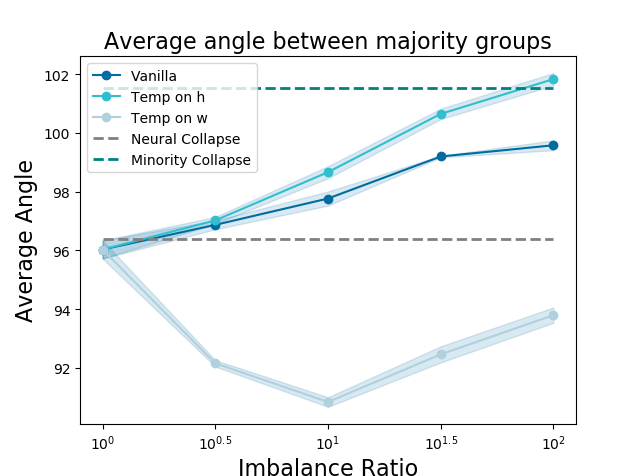}
		\label{fig:majorcollapse}
	}
	\subfigure[Average angle between minority groups]{
			\includegraphics[width=0.2\textwidth]{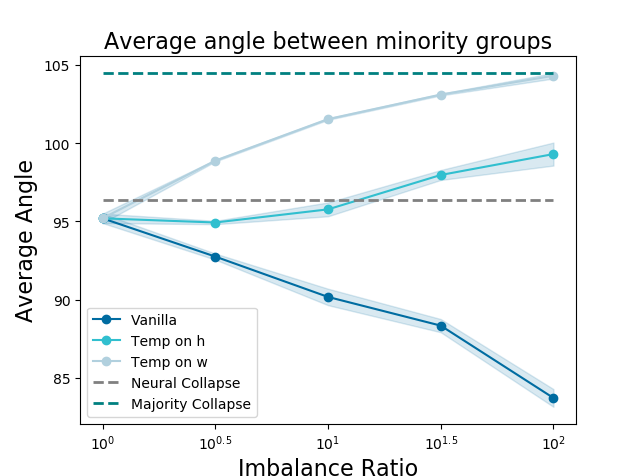}
		\label{fig:minorcollapse}
	}\vspace{-0.15in}
	\caption{Average angle of majority group and minority group under different imbalance ratio on Fashion MNIST. As our theory suggests, the angle between minority groups is always roughly $\arccos\left(\frac{1}{K-1}\right)$ when we add IT to the features, while the angle between the minority groups converges to $\arccos\left(\frac{1}{\frac{K}{2}-1}\right)$ if we add IT to the last layer classifier.}
	\label{figure:collapse}
	\vspace{-0.2in}
\end{wrapfigure}

\subsection{Experimental Results}
\label{section:exp1}
\vspace{-0.1in}
Following \cite{fang2021layer}, we test our algorithms on the FashionMNIST \cite{xiao2017fashion} and CIFAR-10 \cite{krizhevsky2009learning} datasets. We choose the first five classes as majority classes and make the second five classes into minority classes by subsampling. We test adding IT to either the features or the last layer classifier, with the temperature proportional to the square root of the number of samples. The results are shown in Table~\ref{table:tempposition}. \cite{fang2021layer} has shown that importance weighting can also mitigate minority collapse. \cite{fang2021layer} only trains the network for 300 epochs. As shown in \cite{byrd2019effect}, after training the network for 1000 epochs, the effect of importance weighting will become negligible. In this paper, we mainly consider fully-trained networks and leave the regularization of early stopping for future work.

As our theory suggests, one should add the temperature to the last layer linear classifier; this is in agreement with the results of \cite{khan2019striking}. However, we find that the class feature means do not converge to an equiangular tight frame as \cite{papyan2020prevalence} suggested. The effect of the imbalance ratio on angles between majority/minority classes is shown in Figure \ref{figure:collapse}, where the constants $\arccos\left(\frac{1}{K-1}\right)$ (gray line) and $\arccos\left(\frac{1}{\frac{K}{2}-1}\right)$ (green line) are marked for comparison. Our experimental results matches what our theory (Theorem \ref{theorem:collapse}) predicts: adding IT to the last layer classifier leads to the largest possible angle in the extremely imbalanced limit.


\vspace{-0.1in}
\section{Importance Tempering for Spurious Correlations} 
\vspace{-0.1in}
Worst group accuracy \cite{sagawa2019distributionally,sagawa2020investigation,liu2021just} is a relevant metric for reducing the the reliance of machine learning models on spurious correlations \cite{torralba2011unbiased,geirhos2020shortcut,arjovsky2019invariant}.  In this setting, each example is composed of the input $x$, a label (core attribute) $y \in \mathcal{Y}$, and a spurious attribute $a\in\mathcal{A}$. Each data point belongs to a group $g=(y,a)\in\mathcal{Y}\times\mathcal{A}$.  Spurious correlations refer to correlations between the label and the spurious attribute for a particular group (which in general will not generalize across different groups). Here we focus on the binary case $\mathcal{Y}=\{0,1\}$ and $\mathcal{A}=\{0,1\}$. Following \cite{sagawa2019distributionally,sagawa2020investigation,liu2021just}, we test our objective function with a ResNet-50 \cite{he2016deep} on the CelebA and Waterbird dataset and Bert \cite{devlin2018bert} on the MultiNLI dataset. In CelebA, the label $y$ is whether or not the image contains a person with blonde hair. The spurious attribute is the gender of the person in the image. In the Waterbird dataset, we aim to classify land and water birds. Here, the spurious attribute is the background of the image (land or water).
For natural language processing, \cite{gururangan2018annotation} recently found that there is a spurious correlation between contradictions and the presence of negation words such as nobody, no, never, and nothing. We use the MultiNLI dataset to distinguish between entailed, neutral, and contradictory examples and aim to achieve good accuracy regardless of the spurious attribute (presence or absence of negation words). More details on these datasets can be found in \cite{sagawa2019distributionally}. The experiment details are shown in Appendix \ref{appendix:expdetail}. As shown in Table \ref{table:result}, importance tempering achieves comparable results with Group DRO \cite{sagawa2019distributionally}.

\begin{table*}[h]
\footnotesize
\caption{Comparison of Empirical Risk Minimization (ERM), Importance Weighting (IW), group DRO, and importance tempering (IT) models on several group shift and spurious correlation benchmarks. Large Models refers to results using WideResNet-50 for computer vision and Bert Large for natural language processing.}
\begin{tabular}{|ll||l|l|l|l|l|ll|l|}
\hline\hline
\multicolumn{2}{|c||}{\multirow{2}{*}{\textbf{\normalsize Dataset}}}
 & \multicolumn{8}{c|}{\textbf{Worst-Group Accuracy}}                                                       \\ \cline{3-10} 
 && {ERM}& {ERM} & {IW} &{IW}&{\thead{\scriptsize Group \\ \scriptsize DRO}}& \multicolumn{1}{l||}{\thead{\scriptsize Group \\ \scriptsize DRO}} & IT&IT\\ \hline\hline

\multirow{2}{*}{\textbf{Label Shift} 1:10}&Fashion MNIST & {69.9} & {\cellcolor{mygray}0}  & 73.2& {\cellcolor{mygray}0}               &      - & \multicolumn{1}{l||}{\cellcolor{mygray}-}&       \textbf{79.0}     &\cellcolor{mygray}0         \\ 
&CIFAR10& {64.3}   & {\cellcolor{mygray}0}  & 71.3 & {\cellcolor{mygray}0}           &         -& \multicolumn{1}{l||}{\cellcolor{mygray}-}          &   \textbf{72.7}        &      \cellcolor{mygray}0  \\ \hline

\multirow{2}{*}{\textbf{Label Shift} 1:100}&Fashion MNIST & {27.7} & {\cellcolor{mygray}0} & 59.8 & {\cellcolor{mygray}0}               &      - & \multicolumn{1}{l||}{\cellcolor{mygray}-}&      \textbf{64.7}     &\cellcolor{mygray}0         \\ 
&CIFAR10& {21.5}   & {\cellcolor{mygray}0}  & 33.2 & {\cellcolor{mygray}0}           &         -& \multicolumn{1}{l||}{\cellcolor{mygray}-}          &   \textbf{57.2 }        &      \cellcolor{mygray}0  \\ \hline

\multirow{3}{*}{ \thead{\textbf{Spurious}\\ \textbf{Correlations}} }&CelebA                            & {41.1} & {\cellcolor{mygray}47.8} & 82.1 & {\cellcolor{mygray}83.8}               & 88.3      & \multicolumn{1}{l||}{\cellcolor{mygray}88.9}          &     89.1        &\cellcolor{mygray}\textbf{90.1}           \\ 
&Waterbird & {60.0}   & {\cellcolor{mygray}63.7}  & - & {\cellcolor{mygray}88.0}           & 86.0          & \multicolumn{1}{l||}{\cellcolor{mygray}\textbf{91.4}}          &   88.7          &      \cellcolor{mygray}89.5     \\ 
&MultiNLI & {65.7}   & {\cellcolor{mygray}-}  & 64.8 & {\cellcolor{mygray} -}           & \textbf{77.7}          & \multicolumn{1}{l||}{\cellcolor{mygray}-}          &   75.9         &      \cellcolor{mygray}-     \\ 
\hline
\multirow{2}{*}{\textbf{Large Models}}&CelebA                            & {76.7} & {\cellcolor{mygray}77.8 } & 86.8 & {\cellcolor{mygray}88.5}               & 87.4      & \multicolumn{1}{l||}{\cellcolor{mygray}87.6}          &     \textbf{90.6}    &\cellcolor{mygray}89.8         \\ 
&MultiNLI                           & {74.0} & {\cellcolor{mygray}-} & 74.3 & {\cellcolor{mygray}-}               &   76.9    &  \multicolumn{1}{l||}{\cellcolor{mygray}-}          &    \textbf{78.9}    &\cellcolor{mygray}-       \\ \hline \hline
\multicolumn{2}{|c||}{Strong $\ell_2$ Regularization}       & {}    & {\cellcolor{mygray}\checkmark} &       &{\cellcolor{mygray}\checkmark}         &    & \multicolumn{1}{l||}{\cellcolor{mygray}\checkmark}          &      &    \cellcolor{mygray}\checkmark              \\ \hline

\hline
\end{tabular}
\vspace{-0.04in}
\label{table:result}
\vspace{-0.2in}
\end{table*}

\begin{figure}
	\centering
	\subfigure[Synthetic Data in \cite{sagawa2020investigation}]{
		\begin{minipage}[b]{0.4\textwidth}
			\includegraphics[width=1\textwidth]{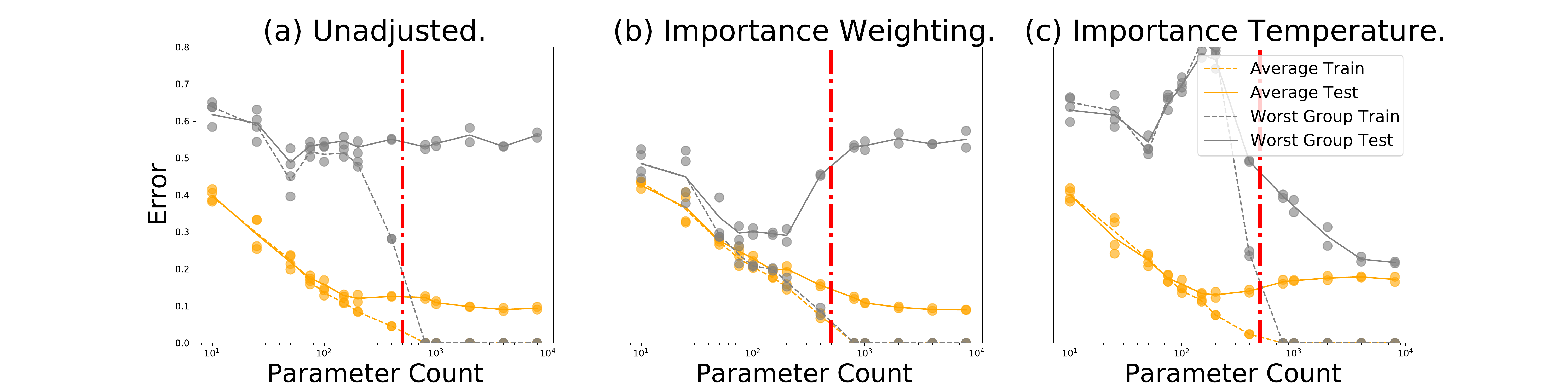}
		\end{minipage}
		\label{fig:syndd}
	}
    	\subfigure[CelebA]{
    		\begin{minipage}[b]{0.4\textwidth}
   		 	\includegraphics[width=1\textwidth]{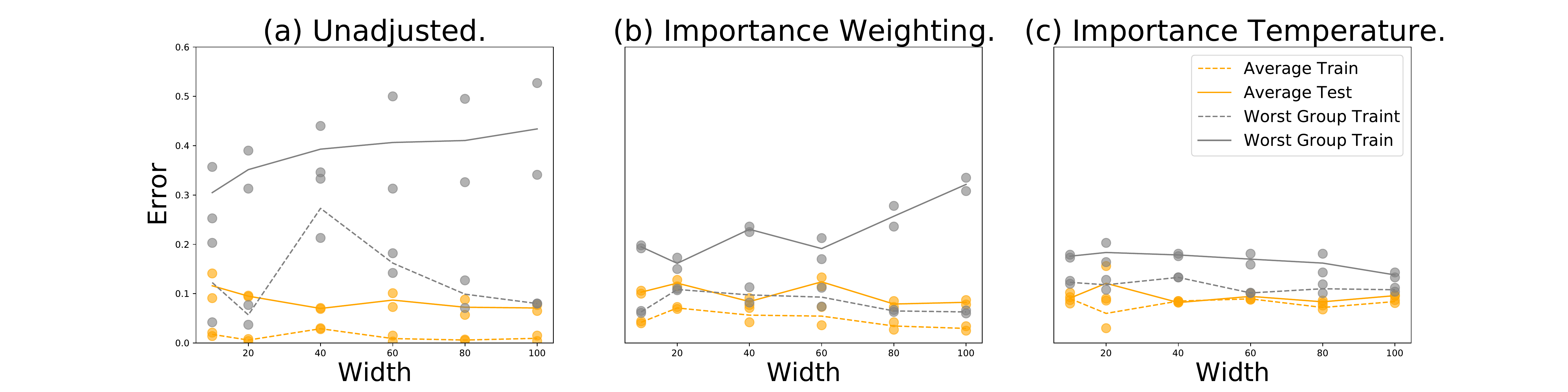}
    		\end{minipage}
		\label{fig:Celebadd}
    	}
    	\vspace{-0.15in}
	\caption{Overparameterization hurts the worst-group accuracy when the model is trained by ERM and importance weighting objectives. However, overparameterization still helps when importance tempering is applied. We plot the mean error over 2 replicates. The red line here indicates the interpolation threshold.}
	\label{fig:dd}
	\vspace{-0.2in}
\end{figure}

\subsection{Importance Tempering Cooperates with Overparameterization}
\vspace{-0.05in}
It has recently been observed \cite{opper1995statistical,belkin2019reconciling} that increasing model size beyond zero training error, \emph{i.e.} overparameterization, can lead to better test error, which is commonly referred to as the “double descent” phenomenon. However, \cite{sagawa2020investigation} showed that increasing model size well beyond the point of zero training error can hurt test error on minority groups when there are spurious correlations in the data, and hypothesized that the inductive bias towards memorizing fewer examples hurts accuracy for the minority group. Below we will show that importance tempering allows us to refute this hypothesis by changing the importance of memorization for each group.

\textbf{Synthetic Experiment Setup} We first test the impact of overparameterization on the synthetic dataset proposed in \cite{sagawa2020investigation}. In this case, both the labels and spurious attributes are $\pm1$: $\mathcal{Y} = \mathcal{A} = \{\pm1\}$. Consider two equally-sized minority groups with $a=-y$ and two equally-sized majority groups with $a=y$. In addition, every input is composed of core features $x_\text{core}\in\mathbb{R}^d$ and spurious features $x_\text{spur}\in\mathbb{R}^d$, \emph{i.e.} $x=[x_\text{core},x_\text{spur}]\in\mathbb{R}^{2d}$. We assume that both the core and spurious features are noisy and formally are sampled according to
\begin{equation}
    \begin{aligned}
    x_\text{core}|y\sim \mathcal{N}(y\mathrm{1},\sigma_\text{core}^2 I_d);x_\text{spur}|y\sim \mathcal{N}(a\mathrm{1},\sigma_\text{spu}^2 I_d),
    \end{aligned}
    \label{eq:dataset}
\end{equation}
where $\sigma_{\text{core}}^2,\sigma_{\text{spu}}^2$ are the variance of the core and spurious features. Consider logistic regression on ReLU random features $\text{ReLU}(Wx)\in \mathbb{R}^m$ \cite{mei2019generalization,montanari2019generalization}, where $W\in\mathbb{R}^{m\times 2d}$ is a random matrix with each row sampled uniformly from the unit sphere $\mathcal{S}^{2d-1}$. We set the number of training data $n=3000$ and dimension $d=100$. Setting the same hyperparameters as \cite{sagawa2020investigation}, we vary the random feature model size by increasing the number of random features from 10 to 10,000. The average and worst group test results are shown in Figure~\ref{fig:syndd}.

\textbf{CelebA} Following the experiment setting in \cite{sagawa2020investigation}, we train a ResNet-10 model \cite{he2016deep} for 50 epochs, varying model size by increasing the network width from 10 to 100 as in \cite{nakkiran2019deep}. The average and worst group test results are shown in Figure~\ref{fig:Celebadd}.
Unlike \cite{sagawa2020investigation} reporting the fully trained model, we report the result of the model early-stopped at the epoch with the best worst-group validation accuracy.  Importance weighting achieves a best worst-group test error of 85.0\% at width 20. If importance tempering is used instead, the best worst-group test error of 86.7\% is achieved at width 100. Thus {\em overparameterization still helps with generalization when importance tempering is used}. We also record the best epoch numbers and report them in Figure~\ref{fig:es}. We find that training longer in order to explore the larger parameter space only helps when importance tempering is used.

These results inspired us to use importance temperature with even larger models to further push the state of the art. We used WideResNet-50 \cite{zagoruyko2016wide} for CelebA and Bert Large \cite{devlin2018bert} for MultiNLI. In Table~\ref{table:result}, larger models consistently improved the result when IT is used. To the best of the authors' knowledge, these results give the new state-of-the-art performance on these datasets.

\vspace{-0.1in}
\subsection{How Does Importance Tempering Help?}
\vspace{-0.1in}
We return to the question of how importance tempering can help overparameterized models learn patterns that generalize to both majority and minority groups, rather than learning spurious correlations and simply memorizing the minority group. Here, we first re-investigate the intuitive story and the toy dataset in \cite{sagawa2020investigation}. Based on the story and theory, we discuss why importance tempering can avoid learning spurious correlations and how different factors will affect the selection of the temperature. 

\textbf{The intuitive story in \cite{sagawa2020investigation}.} To answer the question of what makes overparameterized models memorize the minority instead of learning generalizable patterns, \cite{sagawa2020investigation} hypothesize that the \emph{inductive bias of overparameterized models favors memorizing as few points as possible}, \emph{e.g.} by exploiting variations due to noise in the features. Consider a model that takes advantage of the fact
that the label $y$ and spurious feature $a$ are correlated for the majority group in the training data and predicts $y$ using the spurious features. The model only needs to memorize the
points in the minority group. Conversely, if the core features are much nosier, then a model that predicts $y$ via the core features needs to memorize a large fraction of the training data. Due to the inductive bias that seeks to minimize the number of points memorized, the training procedure will select the model that uses spurious features rather than core features to make its predictions.
\begin{wrapfigure}{r}{0.6\textwidth}
   \vspace{-0.15in}
    \includegraphics[width=3.5in]{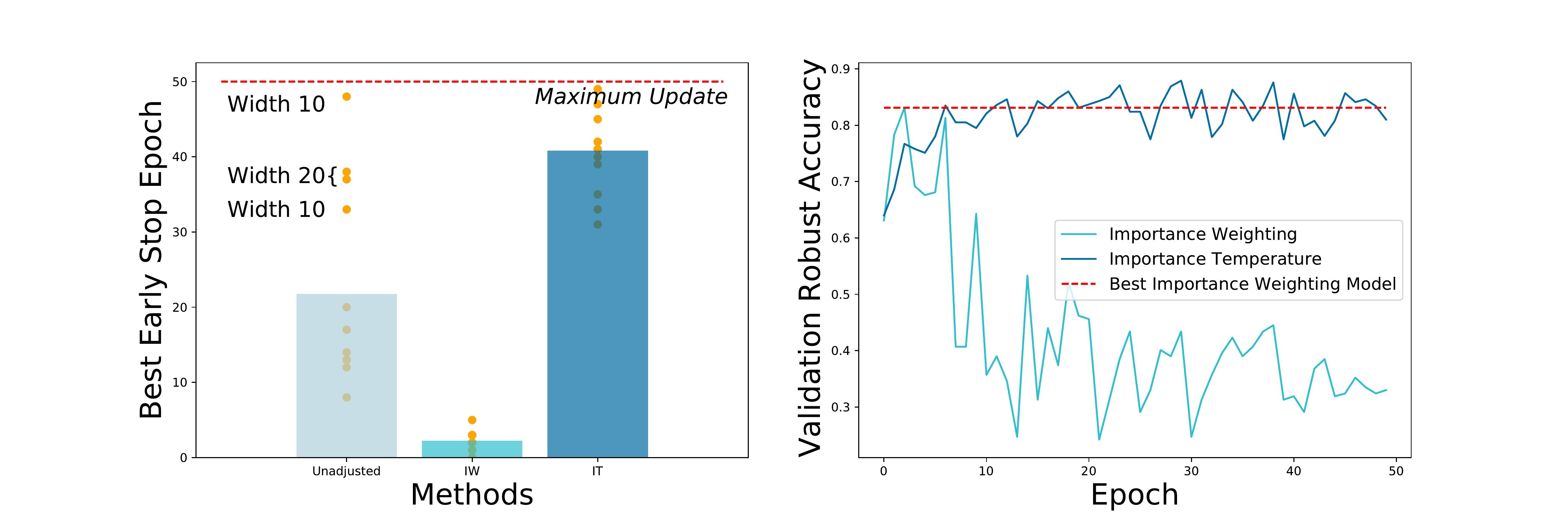}
    \vspace{-0.2in}
    \caption{The statistics of the best epoch for CelebA evaluated by robust validation accuracy. Both ERM and IW need early stopping to add strong regularization, while training longer helps IT generalize better.}
    \label{fig:es}
    \vspace{-0.2in}
\end{wrapfigure}
Importance tempering can help this situation by changing the cost of memorizing data from the different groups. Using importance tempering, we can make the margin requirement on the minority data larger. This makes memorizing a single minority datum more challenging. Concretely, we increase the classifier norm by a larger amount in order to memorize the minority points. In this case, although the \emph{number} of data to be memorized is smaller for the model using spurious features, the \emph{cost} of memorizing the minority data is larger. The inductive bias of the training procedure will then force the model to learn patterns that generalize to both the minority and majority classes, rather than just memorizing the minority.
\begin{figure}[h]
	\centering
	\vspace{-0.1in}
	\subfigure[Optimal inverse temperature $\lambda$ depends on the information ratio. When the information stored in the core feature increases, the optimal inverse temperature $\lambda$ decreases.]{
		\begin{minipage}[b]{0.45\textwidth}
			\includegraphics[width=1\textwidth]{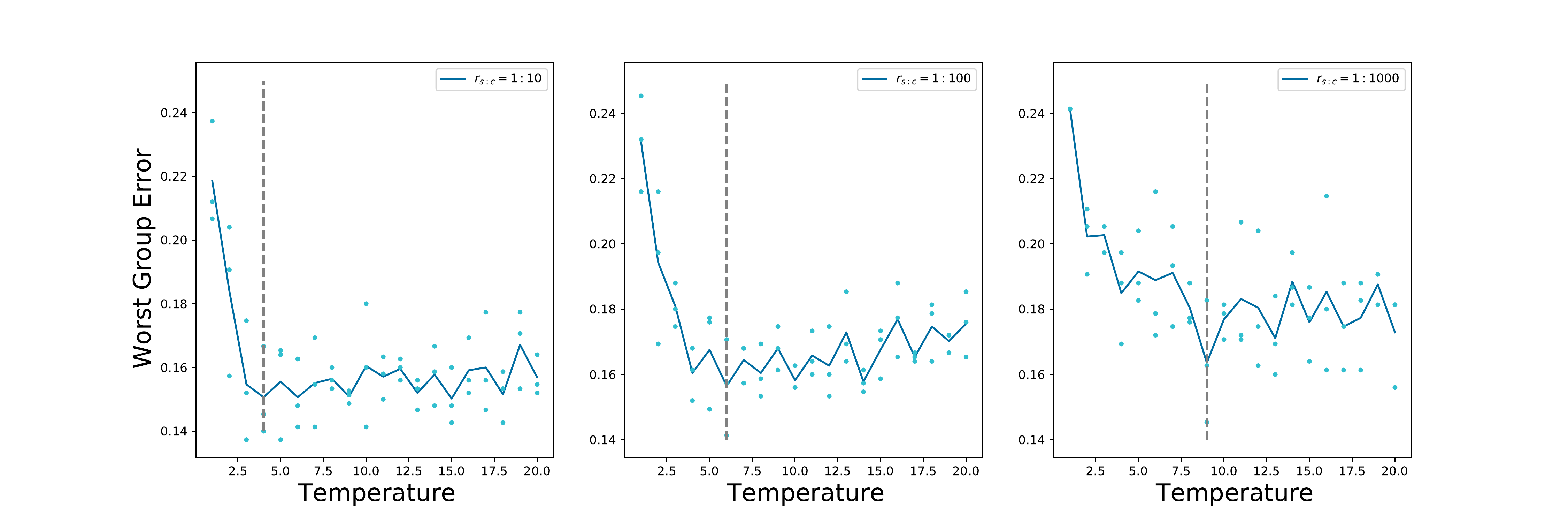}
		\end{minipage}
		\label{optimallambda1}
	}
    	\subfigure[Optimal inverse temperature $\lambda$ depends on the task difficulty. When fitting using core features becomes easier, the optimal inverse temperature $\lambda$ also decreases.]{
    		\begin{minipage}[b]{0.45\textwidth}
   		 	\includegraphics[width=1\textwidth]{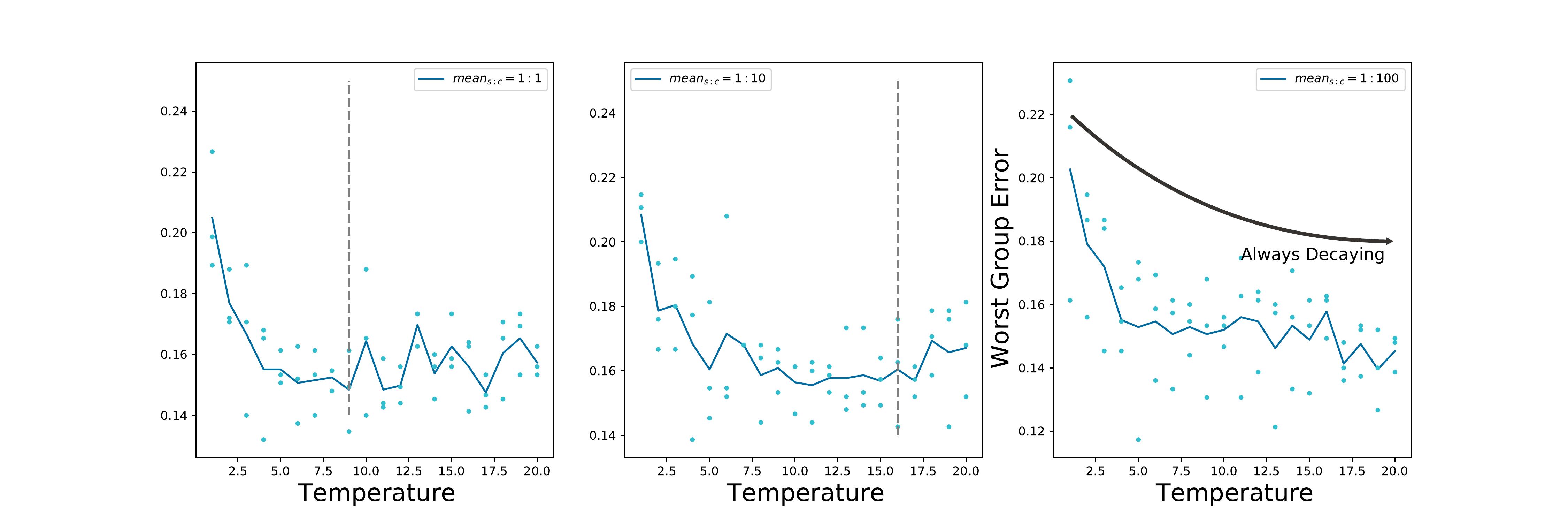}
    		\end{minipage}
		\label{optimallambda2}
    	}
    	\vspace{-0.1in}
	\caption{Illustration of different factors that affect the optimal inverse temperature $\lambda$ setting.}
	\label{fig:optimaltemp}
	\vspace{-0.25in}
\end{figure}

\paragraph{Theory for the example in \cite{sagawa2020investigation}.} To theoretically illustrate the impact of importance tempering and factors that affect the selection of temperature, we revisit a more general version of \cite{sagawa2020investigation}'s example parameterized by more hyper-parameters. In this model, the features $x$ consist of a core feature, a spurious feature, and noise features, \emph{i.e.} $x=[x_c,x_s,x_n]$. For simplicity, \cite{sagawa2020investigation} set the core feature $x_c\in\mathbb{R}$ and the spurious feature $x_s\in\mathbb{R}$ to be scalars. The model can memorize the data through the noisy feature $x_n\in\mathbb{R}^N$. Following \cite{sagawa2020investigation}, we consider a more general dataset
\[\small
\begin{aligned}
x_c|y\sim \mathcal{N}(\mu_cy,(\mu_c\sigma_c)^2),x_s|a\sim\mathcal{N}(\mu_sa,(\mu_s\sigma_c)^2), x_n\sim \mathcal{N}\left(0,\frac{\sigma_n^2}{N}I_N\right)
\end{aligned}
\]
where $\sigma_c,\sigma_s,\sigma_n,\mu_1,\mu_2$ are five constants. $\mu_1,\mu_2$ denote the scale of the features. When the features are larger, the classifier needs a smaller norm to achieve a margin of a fixed size. Due to the inductive bias of training overparameterized models, this task is easier to learn. $\sigma_c,\sigma_s$ denote the noise in the features. Smaller noise means the feature contains more information, \emph{i.e.} a smaller fraction of the data needs to be memorized when this feature is used. For simplicity, we set $\sigma_s=0$. (In \cite{sagawa2020investigation}, $\sigma_s$ is set to be very small, and reducing the noise level on the spurious feature should only make our task harder.) We set $N\gg n$ so that a linear classifier can interpolate and memorize all the data via the noisy feature. The training data is composed of four groups, each corresponding to a combination of the label $y\in\{-1,1\}$ and the spurious attribute $a\in\{-1,1\}$. Each of the two majority groups
with $a = y$ consists of $\frac{n_{\text{maj}}}{2}$ data points $\{(x_{\text{maj}}^{(i)},y_{\text{maj}}^{(i)})\}_{i=1}^n$, and each of two minority groups with $a = -y$ consists of $\frac{n_{\text{min}}}{2}$ data points $\{(x_{\text{min}}^{(i)},y_{\text{min}}^{(i)})\}_{i=1}^n$. We consider linear classifiers with a large margin requirement for the minority class:
\begin{equation}
\small
    \begin{aligned}
    \mathcal{F}_{\text{interpolate}}^\lambda:=\{w:&y_{\text{maj}}^{(i)}(w\cdot x_{\text{maj}}^{(i)})\ge 1,i=1,\cdots,n_{\text{maj}} \text{ and } y_{\text{min}}^{(i)}(w\cdot x_{\text{min}}^{(i)})\ge \lambda,i=1,\cdots,n_{\text{min}}\}.
    \end{aligned}
\end{equation}
\begin{theorem}[Informal]\label{theorem:sagawaexample}
Suppose that $\sigma_n$ is not too large (so memorizing points is expensive) and $\sigma_c$ is also not too large (so that the core feature is reasonably informative). Then there exists a selection of inverse temperature $\lambda$ for the minority group and an estimator $\core=[w_c^c,w_s^c,w_n^c]\in\mathcal{F}_{\text{interpolate}}$ with $\core_s=0$ such that for $\forall \spu=[w_c^s,w_s^s,w_n^s]\in\mathcal{F}_{\text{interpolate}}$ with $\spu_c=0$ we have $\|\core\|\le\|\spu\|$.
\end{theorem}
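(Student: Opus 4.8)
The plan is to read the statement as a purely geometric fact: among interpolants that use only the core and noise coordinates, some interpolant $\core$ (with $w_s^c=0$) has $\ell_2$-norm no larger than that of \emph{any} interpolant $\spu$ that uses only the spurious and noise coordinates (with $w_c^s=0$); by the implicit-bias result (Theorem~\ref{thm: implicit bias}) this is exactly what decides whether gradient flow converges to a core-based or a spurious-based solution. I would work on a high-probability event with two features: since $N\gg n$, the noise Gram matrix $G=(\langle x_n^{(i)},x_n^{(j)}\rangle)_{i,j}$ obeys $\|G-\sigma_n^2 I\|_{\mathrm{op}}\le\varepsilon\sigma_n^2$ (standard concentration for Gaussian inner products); and, writing $x_c^{(i)}=\mu_c y^{(i)}(1+\sigma_c z_i)$ with $z_i\sim\mathcal{N}(0,1)$ i.i.d., the empirical tail moments of the $z_i$ --- in particular $\sum_{i\in\mathrm{min}}(-z_i)_+^2$ and $\sum_{i\in\mathrm{maj}}(-z_i-t_0)_+^2$ for the threshold $t_0$ below --- lie within a $(1+\varepsilon)$ factor of their Gaussian expectations (Bernstein/sub-Weibull). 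On this event the ``$\approx\sigma_n^2 I$'' statements become rigorous $(1\pm\varepsilon)$ bounds.

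\emph{Upper bound on $\core$.} I would take $\lambda$ to be any value in $[\,\lambda_0,\ \sqrt{\nmaj/\nmin}\,]$, where $\lambda_0>1$ is chosen so that $t_0:=(1-1/\lambda)/\sigma_c$ exceeds a threshold $\tau$ (the interval is nonempty because $\sigma_c$ is small and $\nmaj/\nmin$ is large); put $w_c^c=\lambda/\mu_c$, so the core coordinate contributes $\lambda(1+\sigma_c z_i)$ to the margin of point $i$; and let $w_n^c=\sum_i \mu_i\,y^{(i)}x_n^{(i)}$ with $\mu$ the solution of the near-diagonal linear system that makes the noise coordinate supply exactly the deficit $m_i=(\mathrm{req}_i-\lambda(1+\sigma_c z_i))_+$ on every point ($\mathrm{req}_i=1$ on the majority, $\lambda$ on the minority). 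Then $\core\in\mathcal{F}_{\text{interpolate}}^\lambda$, $w_s^c=0$ by construction, and $\|w_n^c\|^2\le(1+2\varepsilon)\|m\|^2/\sigma_n^2$. The key point: $m_i=\lambda\sigma_c(-z_i)_+$ on the minority (contributing $\lesssim\lambda^2\sigma_c^2\nmin$ to $\|m\|^2$), whereas $m_i=\lambda\sigma_c(-z_i-t_0)_+$ on the majority, which is nonzero only on an $e^{-\Omega(t_0^2)}$ fraction of points and contributes only $\lesssim\lambda^2\sigma_c^2\,\nmaj\,e^{-t_0^2/2}$; hence
\[
\|\core\|^2 \le \frac{\lambda^2}{\mu_c^2} + (1+3\varepsilon)\,\frac{\lambda^2\sigma_c^2}{\sigma_n^2}\Big(\tfrac{1}{2}\nmin + C\,\nmaj\,e^{-t_0^2/2}\Big).
\]

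\emph{Lower bound on $\spu$ and comparison.} For any admissible $\spu$ with $w_c^s=0$, set $M=w_s^s\mu_s$; since $\sigma_s=0$ the spurious coordinate is $\mu_s a^{(i)}$, equal to $+\mu_s y^{(i)}$ on the majority and $-\mu_s y^{(i)}$ on the minority, so the constraints force $y^{(i)}\langle w_n^s,x_n^{(i)}\rangle\ge 1-M$ on all majority points and $\ge\lambda+M$ on all minority points. If $M\le 0$ the noise part must attain level $\ge 1$ on all $\nmaj$ majority points, and if $M\ge 0$ it must attain level $\ge\lambda$ on all $\nmin$ minority points; using $\|w\|^2\ge(1-\varepsilon)\sum_{i\in S}\big(y^{(i)}\langle w,x_n^{(i)}\rangle/\sigma_n\big)^2$ (valid because the normalized, sign-adjusted noise vectors are near-orthonormal) on the appropriate set $S$ gives $\|\spu\|^2\ge(1-2\varepsilon)\min\{\nmaj,\lambda^2\nmin\}/\sigma_n^2=(1-2\varepsilon)\lambda^2\nmin/\sigma_n^2$, the last step using $\lambda\le\sqrt{\nmaj/\nmin}$. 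Comparing with the previous display, the $\lambda^2$ cancels, and $\|\core\|\le\|\spu\|$ holds as soon as $\sigma_n^2/\mu_c^2\lesssim\nmin$ (``$\sigma_n$ not too large''), $\sigma_c^2\lesssim 1$, and $\tau$ is of order $\sqrt{\log(\sigma_c^2\nmaj/\nmin)}$ so that $C\nmaj e^{-\tau^2/2}\lesssim\nmin/\sigma_c^2$ --- the requirement $t_0\ge\tau$ being compatible with $\lambda\le\sqrt{\nmaj/\nmin}$ precisely when $\sigma_c\sqrt{\log(\nmaj/\nmin)}$ is bounded away from $1$, i.e., again ``$\sigma_c$ not too large''.

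The main obstacle I anticipate is not any single inequality but the joint concentration bookkeeping: one must run the Gram-conditioning estimate so it simultaneously validates the explicit memorization construction and the orthogonality-based lower bound with matching constants, control the empirical tail moments of the core-feature Gaussians (including the tiny majority term $\sum(-z_i-t_0)_+^2$, whose expectation may be $o(1)$, so only additive slack is available), and verify that all of these events hold together with high probability under $N\gg n$. Once that is in place, the remaining step --- the elementary optimization over $\lambda$ that cancels the $\lambda^2$ and leaves a condition on $\sigma_n$ and $\sigma_c$ alone --- is routine.
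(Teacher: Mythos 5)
Your proposal is correct in outline but follows a genuinely different route from the paper's. The paper works through the representer decomposition $\hat w_n=\sum_i \alpha^{(i)}x_n^{(i)}/\sigma_n$, proves (Lemma~\ref{thm: a^i error}) that the $\alpha^{(i)}$ of a minimum-norm separator are pinned down to $(\mathrm{req}_i-w_s-w_cx_c^{(i)})_+$ up to $O(n^{-2})$, and hence reduces everything to the closed-form expected-norm functional \eqref{eq: separator norm}; it then takes $w_c=1$ for $\core$ (so the minority memorization cost scales like $(\lambda-1)^2\nmin$), lower-bounds $\|\spu\|^2$ by exactly minimizing the resulting quadratic over $w_s$ (keeping the $\mu_s$ dependence), and the comparison becomes the quadratic inequality \eqref{finallambda} in $\lambda$, whose discriminant condition is where ``$\sigma_n,\sigma_c$ not too large'' enters. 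You instead (i) scale the core weight as $w_c\propto\lambda$, which makes both group deficits of order $\lambda\sigma_c$ and pushes the majority deficit into an exponentially rare tail governed by $t_0=(1-1/\lambda)/\sigma_c$, and (ii) lower-bound $\|\spu\|^2$ by a crude two-case Bessel argument giving $\min\{\nmaj,\lambda^2\nmin\}/\sigma_n^2$, so that $\lambda^2$ cancels and the sufficient condition is essentially $\lambda$-free. Your construction-plus-Bessel argument is more elementary (it never needs the $\alpha^{(i)}$ characterization, only Gram concentration) and yields a cleaner existence proof; what it gives up is exactly what the paper's sharper computation buys, namely the explicit dependence of the admissible $\lambda$-range on $\mu_s$, $\mu_c$, and $\sigma_c$ that drives Remark~\ref{remark:selection}. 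Two points to tighten if you write this up: the concentration of the rare majority tail sum $\sum_{i}(-z_i-t_0)_+^2$ must be handled by an additive (Markov/Bernstein) bound rather than a multiplicative one, as you note; and the restriction $\lambda\le\sqrt{\nmaj/\nmin}$ needed to identify the minimum in your $\spu$ lower bound should be stated as part of the choice of $\lambda$, together with the nonemptiness of $[\lambda_0,\sqrt{\nmaj/\nmin}]$, which implicitly uses that the imbalance ratio is large relative to $\sigma_c$.
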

\vspace{-0.1in}
The proof of this theorem is shown in Appendix \ref{appendix:sagawanorm}, and the discussion of how different factors affect the selection of the temperature can be found in Remark~\ref{remark:selection}. In short, if the core problem is easier and more information is stored in the core feature, a smaller inverse temperature $\lambda$ can be used. To verify our theory, we also perform an experiment on the dataset (\ref{eq:dataset}) with logistic regression on the ReLU random features $\text{ReLU}(Wx)\in \mathbb{R^m}$ \cite{mei2019generalization,montanari2019generalization} and summarize the results in Figure~\ref{fig:optimaltemp}. Figure~\ref{fig:optimaltemp}(a) shows that the optimal temperature increases when there is more information in the spurious feature, while Figure~\ref{fig:optimaltemp}(b) suggests that the optimal temperature increases when the spurious task is easier. Both our theory and empirical experiments show that the importance tempering should be tuned manually, rather than simply setting it to only depend on the imbalance ratio.

Last but not least, we investigate the same dataset that Theorem 1 \cite{sagawa2020investigation} considers (with a special selection of hyper-parameters). In \cite{sagawa2020investigation}, ERM and importance weighting have worst group error larger than $\frac{2}{3}$. We show in Theorem \ref{theorem:betterthanrandom} that, using IT, we can achieve better than random classification results for \emph{all} groups. The proof is presented in Appendix~\ref{appendix:betterthanrandom}.
\begin{theorem}[Informal]\label{theorem:betterthanrandom} Using IT, the inverse temperature $\lambda$ can be selected so that the resulting classifier achieves strictly better than random (\emph{i.e.}, less than error $1/2$) worst-group performance on \cite{sagawa2020investigation}'s example, while ERM and importance weighting cannot.
\end{theorem}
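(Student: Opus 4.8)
The plan is to combine three ingredients: the implicit-bias characterization of Theorem~\ref{thm: implicit bias}, the norm comparison of Theorem~\ref{theorem:sagawaexample}, and concentration of the high-dimensional noise coordinates. First I would fix the hyperparameters $(\mu_c,\mu_s,\sigma_c,\sigma_n,N,\nmaj,\nmin)$, with $\sigma_s=0$ as above, to the specific instance used in Theorem~1 of \cite{sagawa2020investigation}, and take $N\gg n$ so that the noise coordinates alone can interpolate any labeling; then $\mathcal{F}_{\text{interpolate}}^\lambda$ is a nonempty closed convex set not containing the origin, so its minimum-norm element exists and is unique. For the ``ERM and importance weighting cannot'' half of the statement I would observe that training a linear model to convergence on the exponential-tailed (or cross-entropy) loss gives, in direction, the $\lambda=1$ solution of Theorem~\ref{thm: implicit bias}, i.e.\ the ordinary max-margin classifier, and that this limit is unaffected by importance weights \cite{byrd2019effect,xu2021understanding,soudry2018implicit}; \cite{sagawa2020investigation}'s Theorem~1 says that for exactly this classifier the worst-group error exceeds $2/3$, which is in particular $\ge 1/2$.

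For the IT half, by Theorem~\ref{thm: implicit bias} (applied to the linear, hence convex, case) training $\mathcal{L}^{\text{IT}}$ with minority inverse temperature $\lambda$ converges in direction to the minimum-norm element $w^\lambda=[w_c^\lambda,w_s^\lambda,w_n^\lambda]$ of $\mathcal{F}_{\text{interpolate}}^\lambda$. The heart of the proof is to show that for a suitable, explicitly chosen $\lambda$ the expected test margin $\mathbb{E}[y\,(w^\lambda\!\cdot x)]$ is strictly positive on each of the four groups, and then to turn this into an error bound. I would argue as follows. From the proof of Theorem~\ref{theorem:sagawaexample} one gets an explicit purely-core competitor $v=[v_c,0,v_n]\in\mathcal{F}_{\text{interpolate}}^\lambda$; since $w^\lambda$ is the minimizer, $\|w^\lambda\|\le\|v\|$, which already bounds $|w_c^\lambda|$, $|w_s^\lambda|$ and $\|w_n^\lambda\|$ a priori in terms of the problem constants and $\lambda$. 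Next, because the $\nmin$ minority training points all have $a=-y$ and $\sigma_s=0$, each minority constraint reads $y\,w_c^\lambda x_c - w_s^\lambda\mu_s + y\,(w_n^\lambda\!\cdot x_n)\ge\lambda$; with $N\gg n$ the Gram matrix of the minority noise blocks concentrates around $\sigma_n^2 I$, so $\|w_n^\lambda\|^2$ is at least about $\sigma_n^{-2}$ times the sum of squared positive parts of the residual deficits, which --- using the a priori bound on $w_c^\lambda$ and concentration of $x_c$ around $\mu_c y$ --- is of order $\nmin\,\sigma_n^{-2}\,(\lambda - w_c^\lambda\mu_c + w_s^\lambda\mu_s)_+^2$. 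Substituting this lower bound into $\|w^\lambda\|^2\le\|v\|^2$ forces, once $\sigma_n$ and $\sigma_c$ are ``not too large,'' both that $w_c^\lambda\mu_c$ is of order $\lambda$ (so the solution genuinely uses the core feature rather than pure memorization) and that $|w_s^\lambda|\mu_s$ is small relative to $w_c^\lambda\mu_c$; a symmetric computation on the majority constraints keeps $w_s^\lambda$ from being too negative. Hence $\mathbb{E}[y\,(w^\lambda\!\cdot x)] = w_c^\lambda\mu_c \pm w_s^\lambda\mu_s>0$ on all four groups, with variance $(w_c^\lambda)^2\mu_c^2\sigma_c^2 + \|w_n^\lambda\|^2\sigma_n^2/N$; the last term is negligible because $N\gg n$, so a Gaussian tail bound gives worst-group error at most $\Phi(-c/\sigma_c)<1/2$ for a constant $c>0$ depending on the chosen $\lambda$, which is the claim.

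The main obstacle is the middle step: pinning down the geometry of $w^\lambda$ tightly enough. The difficulty is genuinely two-sided --- a large positive spurious weight ruins the minority groups, while a large negative spurious weight ruins the majority groups --- and both must be excluded simultaneously, which is why the theorem needs the quantitative hypotheses (small $\sigma_n$, small $\sigma_c$) together with a choice of $\lambda$ in an intermediate window: large enough that paying $\Theta(\nmin)$ noise-memorization cost to overcome the spurious feature is strictly worse than paying the extra core norm, yet not so large that the induced $w_c^\lambda\mu_c\asymp\lambda$ swamps everything. Making these order-of-magnitude estimates rigorous requires a uniform version of the noise-memorization cost bound over the relevant low-dimensional family of candidate weight vectors, but this is already implicit in the proof of Theorem~\ref{theorem:sagawaexample} and introduces no new ideas.
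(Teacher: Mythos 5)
Your proposal follows essentially the same route as the paper's proof in Appendix~\ref{appendix:betterthanrandom}: cite \cite{sagawa2020investigation} for the ERM/IW failure, characterize the IT solution as the minimum-norm element of $\mathcal{F}_{\text{interpolate}}^{\lambda}$, upper-bound its norm by the core-only competitor from Theorem~\ref{theorem:sagawaexample}, lower-bound the noise-memorization cost via the margin constraints and near-orthonormality of the $x_n^{(i)}$, and conclude that both $w_c+w_s>0$ and $w_c-w_s>0$, hence worst-group error below $1/2$ by a Gaussian tail bound. The one caveat is that the step you dismiss as introducing "no new ideas" --- simultaneously excluding $w_c+w_s\le 0$ and $w_c-w_s\le 0$ for some admissible $\lambda$ --- is where the paper spends nearly all of its effort: it needs a Bernstein concentration argument to replace $\|w\|^2$ by $\mathbb{E}[f(w)]$, a four-case analysis in the coordinates $u=w_c+w_s$, $v=w_c-w_s$ using the exact Gaussian integral \eqref{eq: gauss integral}, and a check that the resulting constraints on $\lambda$ (the window $\frac{p}{8(1-p)}+\frac14\bigl(1-\frac{1}{\sqrt{2\pi e}}\bigr)\le\lambda\le\frac{1+1/\sqrt{2}}{8(1-p)}$) are mutually compatible and nonempty, whereas your sketch asserts rather than establishes that such a $\lambda$ exists.
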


\vspace{-0.1in}
\section{Discussion and Future Works}
\vspace{-0.1in}
We introduce importance tempering, a method that not only improves the 
decision boundary of overparameterized models even when trained on imbalanced data, but also guarantees uniformly good performance over all subgroups of the data both theoretically and empirically. We also observed that the selection of optimal temperature can be different from the optimal importance weight in the label shift setting. We characterized the last layer representation geometry resulting from different ways of incorporating importance tempering. Lastly, in the case of avoiding learning spurious correlations, we found that just considering the imbalance ratio is insufficient to decide the optimal temperature for preventing a model from learning spurious correlations.

In this paper, we have mainly considered the classification problem. It remains an open problem to modify the inductive bias for regression models in order to conquer imbalanced training sets. In addition, our results currently suggest that the importance temperature should be tuned manually. Automatic selection of the temperature is another avenue for future research. At the same time, the optimization process is discussed in this paper, \emph{i.e.} we have only considered the geometric skew in \cite{nagarajan2020understanding} but not the statistical skew. It is interesting to consider the design of optimization methods for our objectives.

\bibliographystyle{unsrt}
\bibliography{it}

\newpage


\appendix

\newpage
\section{Connection to hard-margin Support Vector Machine}
In this section, we will adopt the results in \cite{lyu2019gradient} and \cite{ji2021unconstrained} to show the impact of importance tempering on the convergence direction. Before starting formal discussion, we first introduce the regularity assumption on our model

\begin{assumption}[Regularity]
\label{asm: regularity}
    $q(x,\cdot)$ is locally Lipschitz and admits a chain rule for any fixed $x$.
\end{assumption}
This is a technical assumption on the network output, as shown in \cite{davis2020stochastic,lyu2019gradient}, the output of almost every neural network satisfies the regularity condition (as long as the neural network
is composed by definable pieces in an o-minimal structure, e.g., ReLU, sigmoid, LeakyReLU). Then we introduce the formal version of Theorem \ref{thm: implicit bias}
\begin{theorem}
\label{thm: implicit bias formal}
Suppose Assumption \ref{asm: regularity} and \ref{asm: homogeneity} holds for $q(x,\theta)$. Let $\theta(t)$ denote the model parameters trained with gradient flow at time $t$. If there exists a time $t_0$ such that $\mathcal{L}^{\text{IT}}(\theta(t_0))<\frac{1}{n}$, then any limit point of $\frac{\theta(t_0)}{\|\theta(t_0)\|}$ is along the direction (i.e., a scalar multiple of) of a Karush-Kuhn-Tucker (KKT) point of the following minimum-norm separation problem:
$$
\min_w \|w\| \: \text{ s.t. } \:
y_iq(x_i,\theta)\ge 1/f[g_i], \hspace{.1in} i = 1,\ldots,n.
$$
\end{theorem}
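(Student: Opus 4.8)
The plan is to reduce the claim to the homogeneous implicit-bias theorem of \cite{lyu2019gradient} (see also \cite{ji2021unconstrained}) by absorbing the group temperatures into the network output. For each data point $i$ define the rescaled output $\tilde q(x_i,\theta) := f[g_i]\, q(x_i,\theta)$. Since $f[g_i]>0$ is a fixed positive constant depending only on the (fixed) group label of the $i$-th example, Assumption~\ref{asm: homogeneity} gives $\tilde q(x_i,\alpha\theta) = f[g_i]\alpha^L q(x_i,\theta)=\alpha^L\tilde q(x_i,\theta)$ for all $\alpha>0$, so $\theta\mapsto\tilde q(x_i,\theta)$ is again $L$-homogeneous; moreover multiplication by a positive constant preserves local Lipschitzness and the chain rule, so Assumption~\ref{asm: regularity} is inherited. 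With this notation $\mathcal{L}^{\text{IT}}(\theta)=\frac1n\sum_i\exp(-y_i\tilde q(x_i,\theta))$ is exactly the exponential-loss empirical risk of the homogeneous model $\tilde q$, and because the rescaling does not depend on $\theta$, the gradient flow $\dot\theta(t)=-\nabla\mathcal{L}^{\text{IT}}(\theta(t))$ coincides with the gradient flow of the standard exponential loss for $\tilde q$.

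First I would invoke the separability-from-small-loss step: the hypothesis $\mathcal{L}^{\text{IT}}(\theta(t_0))<\frac1n$ says $\sum_i\exp(-y_i\tilde q(x_i,\theta(t_0)))<1$, which is precisely the condition used in \cite{lyu2019gradient} to guarantee that the data are separated by $\tilde q$ at time $t_0$ and that $\mathcal{L}^{\text{IT}}(\theta(t))\to 0$ with $\|\theta(t)\|\to\infty$ along the flow. Next I would apply the main theorem of \cite{lyu2019gradient} to $\tilde q$ directly, which yields that every limit point of $\theta(t)/\|\theta(t)\|$ is, up to a positive scalar, a KKT point of
\[
\min_\theta \|\theta\|\quad\text{s.t.}\quad y_i\tilde q(x_i,\theta)\ge 1,\ i=1,\dots,n.
\]
Finally I would substitute $\tilde q(x_i,\theta)=f[g_i]q(x_i,\theta)$ and divide each constraint by $f[g_i]>0$, rewriting the feasible set as $\{y_iq(x_i,\theta)\ge 1/f[g_i]\}$; since the objective $\|\theta\|$ is unchanged, the KKT systems of the two programs are equivalent — the stationarity equation only picks up the constant factor $f[g_i]$ on the corresponding constraint gradient, which is absorbed into the associated Lagrange multiplier. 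This is exactly the stated minimum-norm separation problem.

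\textbf{Main obstacle.} Since the conceptual content lies entirely in the legitimacy of the reduction, the thing to check carefully is that $\tilde q$ inherits the full hypothesis list of \cite{lyu2019gradient} — in particular that it remains definable in the same o-minimal structure (immediate, as it is a constant multiple of $q$) and that the flow initialized at $\theta(t_0)$ meets their requirement (guaranteed by $\mathcal{L}^{\text{IT}}(\theta(t_0))<1/n$). One should also note that homogeneity is preserved only because the temperatures $f[g_i]$ are fixed per example and independent of $\theta$; this is why tempering the output (equivalently, the point loss) rather than the parameters is the correct setup for the method. Extending the statement from gradient flow to discrete or stochastic gradient descent would follow the corresponding results of \cite{lyu2019gradient,ji2020directional} applied verbatim to $\tilde q$.
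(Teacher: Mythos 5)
Your proposal is correct and follows essentially the same route as the paper: the paper absorbs the fixed positive temperature $f[g_i]$ into the label (forming the synthetic dataset $\{(x_i, f[g_i]y_i)\}$) and invokes Theorem 4.4 of \cite{lyu2019gradient}, which is the same reduction as your rescaling $\tilde q(x_i,\theta)=f[g_i]q(x_i,\theta)$ since only the per-example margin $y_i f[g_i] q(x_i,\theta)$ matters. Your write-up is in fact somewhat more careful than the paper's, as it explicitly verifies that homogeneity and regularity are preserved and that the two KKT systems coincide after rescaling the constraints.
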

Its proof is straightforward based on the following result in \cite{lyu2019gradient}:
\begin{theorem}[Theorem 4.4 of \cite{lyu2019gradient}]
\label{thm: lyu}
Denote the loss function as $\mathcal{L}(\theta):=\frac{1}{n}\sum_{i=1}^n\ell(y_i q(x_i,\theta))$, where $\ell(q)=e^{-q}$ denotes the exponential loss, for gradient flow with Assumption \ref{asm: homogeneity} and \ref{asm: regularity} hold, if we further assume that there exists a time $t_0$ such that $\mathcal{L}(\theta)(t_0)<\frac{1}{n}$, then any limit point $\bar{\theta}$of $\{\frac{\theta(t)}{\|\theta(t)\|}:t>0\}$ is along a KKT point of the following constrained optimization problem:
\begin{equation}
    \label{SVM: Lyu}
    \min \quad \frac{1}{2}\|{\theta}\|_{2}^{2} \quad \text { s.t. } \quad y_iq(x_i,\theta) \geq 1 \quad \forall 1\leq i\leq n
\end{equation}
\end{theorem}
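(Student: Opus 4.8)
The plan is to follow the template Lyu--Li developed for the implicit bias of gradient flow on homogeneous models, working throughout in the Clarke (nonsmooth) subdifferential calculus that Assumption~\ref{asm: regularity} makes available. First I would record the elementary consequences of the hypothesis $\mathcal{L}(\theta(t_0))<\tfrac1n$: since $\mathcal{L}(\theta(t))$ is non-increasing along the flow, $\sum_i e^{-y_iq(x_i,\theta(t))}<1$ for all $t\ge t_0$, hence every margin $y_iq(x_i,\theta(t))$ is strictly positive. Combined with Euler's identity for $L$-homogeneous functions, $\langle\theta,g\rangle = L\,q(x,\theta)$ for every $g\in\partial^\circ q(x,\theta)$, this gives $\langle\theta,\nabla\mathcal{L}(\theta(t))\rangle<0$ along the whole trajectory, so there is no finite critical point; the standard consequence (worked out in Lyu--Li) is $\rho(t):=\|\theta(t)\|\to\infty$ and $\mathcal{L}(\theta(t))\to0$.

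Next I would introduce the smoothed normalized margin $\tilde\gamma(t):=\tilde q_{\min}(\theta(t))/\rho(t)^{L}$, where $\tilde q_{\min}(\theta):=-\log\big(\sum_i e^{-y_iq(x_i,\theta)}\big)=-\log\!\big(n\,\mathcal{L}(\theta)\big)$ is a soft minimum satisfying $q_{\min}-\log n\le\tilde q_{\min}\le q_{\min}$, with $q_{\min}(\theta):=\min_i y_iq(x_i,\theta)$. The technical heart is to show $\tilde\gamma(t)$ is non-decreasing for $t\ge t_0$: one differentiates $\log\tilde q_{\min}-L\log\rho$ along the flow, writes $\dot\theta=-\nabla\mathcal{L}(\theta)=\tfrac1n\sum_i e^{-y_iq_i}g_i$ as a nonnegative combination of $g_i\in\partial^\circ(y_iq(x_i,\theta))$, normalizes to softmax weights $\beta_i:=e^{-y_iq_i}/\sum_j e^{-y_jq_j}$, and combines a Cauchy--Schwarz estimate with Euler's identity $\langle\theta,g_i\rangle=L\,y_iq_i$. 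Being bounded above (margins are bounded on the unit sphere) and monotone, $\tilde\gamma(t)$ converges to some $\gamma^*>0$, and by the sandwich bound the true normalized margin $q_{\min}(\theta(t))/\rho(t)^L$ has the same limit. The same computation also yields the asymptotic alignment $\big\langle\theta(t),\dot\theta(t)\big\rangle/\big(\|\theta(t)\|\,\|\dot\theta(t)\|\big)\to1$, i.e. $\theta$ and $\dot\theta$ become parallel.

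With these facts I would extract the KKT conditions by passing to a subsequential limit. Fix a limit point $\bar\theta$ of $\hat\theta(t):=\theta(t)/\rho(t)$, say $\hat\theta(t_k)\to\bar\theta$; then $q_{\min}(\bar\theta)=\gamma^*>0$ by continuity. Two facts finish the job: (i) the exponential tail forces the softmax weights to concentrate asymptotically on the active set $S(\bar\theta):=\arg\min_i y_iq(x_i,\bar\theta)$ (any index with $y_iq_i(\bar\theta)>\gamma^*$ has $\beta_i(t_k)\to0$), and (ii) the alignment from the previous step gives $\hat\theta(t)-\dot\theta(t)/\|\dot\theta(t)\|\to0$. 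Writing $\dot\theta(t_k)$ as above, pulling out the common factor $\rho(t_k)^{L-1}$ by positive homogeneity of $\partial^\circ$, normalizing, and letting $k\to\infty$ using upper semicontinuity of the Clarke subdifferential together with (i), one obtains $\bar\theta\in\sum_{i\in S(\bar\theta)}\nu_i\,\partial^\circ\!\big(y_iq(x_i,\bar\theta)\big)$ for some $\nu_i\ge0$. Finally, rescaling with $c:=(\gamma^*)^{-1/L}$ makes $y_iq(x_i,c\bar\theta)=1$ on $S(\bar\theta)$ and $\ge1$ elsewhere (since $\gamma^*$ is the minimum); absorbing the powers of $c$ produced by homogeneity of $\partial^\circ$ into the multipliers shows $c\bar\theta$ satisfies stationarity, primal feasibility and complementary slackness for $\min\tfrac12\|\theta\|_2^2$ s.t. $y_iq(x_i,\theta)\ge1$. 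Hence $\bar\theta$ is along a KKT point of \eqref{SVM: Lyu}.

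The main obstacle is the monotonicity-of-normalized-margin lemma: this is where the exponential tail, the homogeneity degree $L$, and the nonsmooth chain rule must all be combined carefully, and it is also what delivers the asymptotic alignment used in the limiting step — without it there is no reason the direction should stabilize. A secondary difficulty is the case $|S(\bar\theta)|>1$, where several constraints are active with possibly non-unique subgradients: passing the conic combination $\sum_i\beta_ig_i$ to the limit then requires upper semicontinuity and local boundedness of $\partial^\circ$ plus a compactness argument. One should also check that local Lipschitzness, the chain-rule property, and definability are preserved under the operations used, so that all the Clarke-calculus manipulations are legitimate.
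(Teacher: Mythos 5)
The paper itself gives no proof of this statement: it is imported verbatim as Theorem 4.4 of \cite{lyu2019gradient} and used as a black box (the paper's own contribution, Theorem~\ref{thm: implicit bias formal}, is then obtained by the one-line reduction that absorbs the group temperature $f[g_i]$ into the label). Your proposal is therefore not comparable to anything in this paper; what it does is reconstruct the original Lyu--Li argument, and it does so essentially correctly: positivity of margins and divergence of $\|\theta(t)\|$ from $\mathcal{L}(\theta(t_0))<\tfrac1n$, the smoothed normalized margin $\tilde\gamma=-\log(n\mathcal{L})/\rho^L$ with the sandwich bound, monotonicity of $\tilde\gamma$ via the Clarke chain rule, Euler's identity for $L$-homogeneous functions and Cauchy--Schwarz, and finally a limiting argument identifying directional limit points with KKT points of the min-norm problem after rescaling by $(\gamma^*)^{-1/L}$. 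This is exactly the canonical route.

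Two caveats if you intend to flesh the sketch out. First, the last step in \cite{lyu2019gradient} is not a bare upper-semicontinuity-of-$\partial^\circ$ argument: they pass through an approximate ($\epsilon,\delta$)-KKT formalism and invoke a result stating that limits of approximate KKT points are genuine KKT points \emph{provided a constraint qualification (MFCQ) holds}, which they verify for the constraints $y_iq(x_i,\theta)\ge 1$ using homogeneity and the positive margin; your sketch should either verify MFCQ or otherwise justify why the conic combination survives the limit with nonnegative multipliers supported on the active set. Second, the asymptotic alignment $\langle\theta,\dot\theta\rangle/(\|\theta\|\,\|\dot\theta\|)\to 1$ is obtained in the original proof only in an integrated form (enough to extract suitable time sequences), not pointwise for all large $t$; since the theorem only concerns limit points of $\theta(t)/\|\theta(t)\|$, the weaker statement suffices, but as written your step (ii) claims more than the monotonicity lemma delivers. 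With those two points tightened, the proposal is a faithful proof plan for the cited theorem.
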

\begin{proof}[Proof of Theorem \ref{thm: implicit bias formal}]
The proof of Theorem \ref{thm: implicit bias} simply follows the fact that both the label $y_i$ and group temperature $f[g_i]$ are determined at an instance level, thus we can absorb the group temperature in the label. Note that we have no requirement on the dataset in Theorem \ref{thm: lyu}, which allows us to create a synthetic dataset $\{(x_i,f[g_i]y_i)\}_{i=1}^n$ and apply Theorem \ref{thm: lyu} on this synthetic dataset. In this way, we can conclude that the limit point of the gradient flow is along the direction of a KKT point of the following minimum-norm separation problem:
\begin{equation}
    \min \quad \frac{1}{2}\|{\theta}\|_{2}^{2} \quad \text { s.t. } \quad y_iq(x_i,\theta) \geq 1/f[g_i] \quad \forall 1\leq i\leq n
\end{equation}
as desired.
\end{proof}

\section{The Generalization Theorem}
\label{appendix:generalization}
In this section, we consider the generalization property of a importance tempering large margin two-layer neural network (Theorem \ref{thm:sqrt}). Let us consider a binary classification problem with a training set ${(x_i,y_i)}_{i\in[n]}$ of $n$ pairs of observations with $x_i\in\mathbb{R}^d$ and $y_i\in{-1,1}$. We predict the function using a two-layer neural network
$$
h_m(w,x)=\frac{1}{m}\sum_{j=1}^m\phi(w_j,x),
$$
where $m\ge 1$ is the number of units and $w=(w_j)_{j\in[m]}$ are trainable parameters. We refer to $\phi$ a feature function and in this section we assume $\phi$ is 2-homogeneous. We train the two-layer neural network using importance tempering and finally convergences to the following large margin SVM problem
\begin{equation}
    \begin{aligned}
    \min& \quad \|w\|\\
    \text{subject to }& \quad \gamma_1h_m(w,x_i)y_i\ge 1, \text{ for } y_i=1\\
    &\quad  \gamma_{-1}h_m(w,x_i)y_i\ge 1, \text{ for } y_i=-1
    \end{aligned}
\end{equation}

Following \cite{bach2017breaking,chizat2020implicit}, we characterize the large margin solution of the two-layer neural network utilizing the integral representation and its corresponding variational $\mathcal{F}_1$ norm. We formulate the large margin problem of a infinite wide two-layer neural network using the following integral representation
$$
\mathcal{C}:=\max_{\mu\in\mathcal{P}(\mathbb{S}^{p-1})} \min_{i\in[n]} \gamma_{y_i} y_i\int_{\mathbb{S}^{p-1}} \phi(\theta,x_i)d\mu (\theta)
$$

To bound $\mathcal{C}$, we define the complexity of the dataset $S_n=(x_i,y_i)_{i=1}^n$ is formulated as 
$$
\Delta_r(S_n):=\sup_P\{\inf_{y_i\not=y_i'}\|P(x_{i})-P(x_{i'})\|: P\text{ is a rank}-r \text{ orthogonal projection}\}
$$ 

\begin{lemma}\label{lemma:boundongamma} Assume that $\|x_i\|\le R$ for $i\in [n]$. For any $\epsilon\in(0,1)$ and $r\in[d]$, there exists $C(r),C_\epsilon(r)>0$ such that
\begin{equation}
    \begin{aligned}
    \mathcal{C}\ge \min_{r\in[d]}\min\left\{C(r),C_\epsilon(r)\left(\frac{\Delta_r(S_n)}{R}\right)^{\frac{r+s}{2-\epsilon}}(\gamma_1+\gamma_2)^{\frac{2d+3}{2-\epsilon}}\right\}
    \end{aligned}
    \label{eq:datasetdelta}
\end{equation}

\end{lemma}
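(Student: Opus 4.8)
The quantity $\mathcal{C}$ is the best asymmetrically-weighted margin attainable by a classifier inside the unit variational ($\mathcal{F}_1$) ball of the infinite-width model: because $\phi$ is $2$-homogeneous, a probability measure $\mu$ on $\mathbb{S}^{p-1}$ parametrizes exactly this ball, and the margin functional $\mu\mapsto\min_i\gamma_{y_i}y_i\int\phi(\theta,x_i)\,d\mu(\theta)$ is scale invariant. So to prove the lower bound I only need to exhibit one good $\mu$; by homogeneity this is equivalent to producing a function $g$ with $\gamma_{y_i}y_i g(x_i)\ge 1$ for all $i$ and then using $\mathcal{C}\ge 1/\|g\|_{\mathcal{F}_1}$. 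Thus the whole statement reduces to: \emph{build a classifier separating the two classes with the asymmetric unit margin and controlled $\mathcal{F}_1$ norm}, where the norm must depend on the data only through $\Delta_r(S_n)/R$ and on the temperatures only through $\gamma_1,\gamma_{-1}$.

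The construction exploits the low-dimensional separation hypothesis directly. Fix $r\in[d]$ and let $P$ be a rank-$r$ orthogonal projection with $\inf_{y_i\ne y_{i'}}\|P(x_i)-P(x_{i'})\|$ of order $\Delta_r(S_n)$; after projecting, the positive and the negative examples lie in two subsets of $\mathbb{R}^r$ at Euclidean distance $\gtrsim\Delta_r(S_n)$, both contained in a ball of radius $R$. On $\mathbb{R}^r$ I would pick a smooth, compactly supported template $\psi$ that is constant equal to $1/\gamma_1$ on a neighborhood of the projected positives, constant equal to $-1/\gamma_{-1}$ on a neighborhood of the projected negatives, and interpolates in between over a transition zone of width comparable to $\Delta_r(S_n)$; the asymmetry forced by importance tempering is absorbed entirely into the two plateau heights (and an implicit bias shift), which is the only place the temperatures enter. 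Setting $g(x):=\psi(Px)$ meets the asymmetric unit-margin constraint by construction, so everything now rests on bounding $\|\psi(P\cdot)\|_{\mathcal{F}_1}$.

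This last bound is the content of the curse-of-dimensionality-breaking estimates of \cite{bach2017breaking} (used in the same spirit in \cite{chizat2020implicit}): a function depending on $x$ only through an $r$-dimensional projection and possessing (fractional) Sobolev smoothness $s$ can be represented against a $2$-homogeneous feature with variational norm that is polynomial in $r$ rather than exponential in the ambient dimension $d$, at the price of an exponent of the form $(r+s)/(2-\epsilon)$ on the geometric ratio $\Delta_r(S_n)/R$ and an $\epsilon$-dependent constant; here $\epsilon\in(0,1)$ is the standard slack in the underlying Fourier/Sobolev interpolation inequality. Tracking the plateau heights through this estimate produces the power of $(\gamma_1+\gamma_2)$, and inverting $\mathcal{C}\ge 1/\|g\|_{\mathcal{F}_1}$ gives the stated bound for the chosen $r$. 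When the data is very well separated one instead gets a cleaner, $\gamma$- and $\Delta_r$-free estimate $\mathcal{C}\ge C(r)$ simply by using a template of unit height and unit smoothness scale, which is why the final bound is a minimum of the two terms; a concluding minimization over $r\in[d]$ finishes the argument.

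The step I expect to be the real obstacle is the quantitative control of $\|\psi(P\cdot)\|_{\mathcal{F}_1}$ with the precise dependence on $r$, the smoothness index $s$, the slack $\epsilon$, and the plateau heights: this requires the approximation machinery of \cite{bach2017breaking}, slightly adapted to handle the bias shift and two unequal target levels. The delicate point is the trade-off built into the choice of $\psi$ — making it smoother keeps the $\mathcal{F}_1$ norm finite but widens the transition layer and hence erodes the attainable margin, while sharpening it restores the margin at the cost of a larger norm — and it is exactly balancing these that forces the exponent $(r+s)/(2-\epsilon)$ and the two-regime $\min\{C(r),\,C_\epsilon(r)(\cdots)\}$ shape of the conclusion.
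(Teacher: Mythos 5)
Your proposal is correct and follows essentially the same route as the paper: the paper also reduces the bound to exhibiting an explicit separator depending only on the optimal rank-$r$ projection — concretely a piecewise-linear plateau function built from $\mathrm{dist}_{P_r(D_\pm)}$ with heights $\pm 2/\gamma_{\pm1}$ and transition width of order $(\gamma_1+\gamma_{-1})\Delta_r(S_n)$ — and then invokes Bach's $\mathcal{F}_1$-approximation of Lipschitz functions of low-dimensional projections (Prop.~6 of \cite{bach2017breaking}) to control the norm, with the sup-norm approximation slack $1/(\gamma_1+\gamma_{-1})$ chosen so the approximant still clears the asymmetric unit margin. Your identification of the key technical step (the quantitative $\mathcal{F}_1$ bound and its dependence on $r$, $\epsilon$, and the plateau heights) matches where the paper leans on the cited machinery.
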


\begin{proof} Let $\text{dist}_{\mathcal{S}}$ be the distance function to a set $\mathcal{S}$, \emph{i.e.} $\text{dist}_{\mathcal{S}}(x)=\inf_{y\in\mathcal{S}}\|x-y\|$. We know that function $\text{dist}_{\mathcal{S}}$ is 1-Lipschitz. We denote $D_{\pm}:=\{x_i:y_i=\pm 1\}$ and $P_r$ the projection that achieves the supremum in Equation (\ref{eq:datasetdelta}). Now let us consider the following function
$$
f_r(x)=2\max\left(0,\frac{1}{\gamma_1}-\frac{2\text{dist}_{P_r(D_+)}(P_r(x))}{(\gamma_1+\gamma_{-1})\Delta_r(S_n)}\right)-2\max\left(0,\frac{1}{\gamma_{-1}}-\frac{2\text{dist}_{P_r(D_+)}(P_r(x))}{(\gamma_1+\gamma_{-1})\Delta_r(S_n)}\right).
$$
This function is $\frac{4}{(\gamma_1+\gamma_{-1})\Delta_r(S_n)}$ Lipschitz, satisfies $\|f\|_{\infty}\le 2$ and $\gamma_{y_i}y_if(x_i)=2$ for all $i\in[n]$. Using the approximation results of Lipschitz function in $\mathcal{F}_{1}$ (Prop 6 and Section 4.5 in \cite{bach2017breaking}), we knows that we have a function 
$$
\|\hat f\|\le O\left( C(\epsilon,r)\left(\frac{\Delta_r(S_n)}{R}\right)\right)^{\frac{d+3}{2-\epsilon}}(\gamma_1+\gamma_{-1})^{\frac{2d+3}{2-\epsilon}}
$$
such that $\sup_{\|x\|\le R}|\hat f(x)-f_r(x)|\le \frac{1}{\gamma_1+\gamma_{-1}}$. Thus we know that $f_r$ is a separation function and the minimum norm solution only haves smaller norm.
\end{proof}

\begin{theorem} \label{thm: it gen bound}
Suppose we have a class-imbalanced binary classification task with $n_1$ positive examples sampled from distribution $p_1$ and $n_{-1} < n_1$ negative examples sampled from distribution $p_{-1}$.  Then if we train a infinite wide two-layer neural network with importance tempering objective function $\exp(-\gamma_{y_i}y_ih_m(w,x))$ on the negative class, with probability at lest $1-\delta$ over the training set, the limiting model have
$$ \max_i \mathbb{P}_{x\sim p_i}\left[y_iq(x,\theta)\le0\right] \lesssim \max_i \gamma_i\sqrt{\frac{1}{n_i}}\left(\frac{R}{\Delta_r(\mathcal{P})}\right)^{\frac{r+3}{2-\epsilon}}(\gamma_1+\gamma_{-1})^{-\frac{2d+3}{2-\epsilon}}+\sqrt{\frac{\log\frac{1}{\delta}+\log\frac{1}{\gamma_i}}{n_i}}. $$
If we fixed $\gamma_1+\gamma_{-1}$ as a constant, the best way to select the temperature will become $\gamma_i\propto\sqrt{n_i}$ to minimize the right hand size function.
\end{theorem}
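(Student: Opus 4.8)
The plan is to combine a margin-based generalization bound for each class separately with the lower bound on the attainable margin from Lemma \ref{lemma:boundongamma}. First I would recall that the importance-tempered training converges (via Theorem \ref{thm: implicit bias formal} / the implicit-bias machinery of \cite{lyu2019gradient,chizat2020implicit}) to the max-$\mathcal{F}_1$-margin separator in the integral representation, so the limiting model $h$ satisfies $\gamma_{y_i} y_i h(w,x_i) \ge 1$ for all $i$ with $\|w\|$ essentially $1/\mathcal{C}$, where $\mathcal{C}$ is the quantity controlled below by Lemma \ref{lemma:boundongamma}. Equivalently, rescaling, the function $h$ separates class $i$ with a genuine margin of order $\gamma_{y_i}\mathcal{C}$ (up to the complexity-dependent factor). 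The key point is that the per-class margin is governed by $1/\gamma_i$: enforcing $\gamma_{-1} y h \ge 1$ on the minority forces a larger geometric margin there by a factor $\gamma_{-1}/\gamma_1$.

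Next I would invoke a standard Rademacher/margin generalization bound applied \emph{class-conditionally}. For each $i$, the probability $\mathbb{P}_{x\sim p_i}[y_i h(x)\le 0]$ is bounded by the fraction of class-$i$ training points with tempered margin below the threshold (which is zero since we interpolate) plus a Rademacher complexity term of the form $\widehat{\mathfrak{R}}_{n_i}(\mathcal{F})/(\text{margin}) + \sqrt{\log(1/\delta)/n_i}$. For 2-homogeneous two-layer networks the Rademacher complexity of the unit-$\mathcal{F}_1$-ball scales like $R\|w\|/\sqrt{n_i}$; substituting $\|w\|\asymp 1/\mathcal{C}$ and the lower bound on $\mathcal{C}$ from Lemma \ref{lemma:boundongamma} yields the claimed term
$\max_i \gamma_i \sqrt{1/n_i}\,(R/\Delta_r(\mathcal{P}))^{(r+3)/(2-\epsilon)}(\gamma_1+\gamma_{-1})^{-(2d+3)/(2-\epsilon)}$.
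The extra $\log(1/\gamma_i)$ inside the second square root comes from the union bound over a cover at scale depending on $\gamma_i$ (or, equivalently, from the $\gamma_i$-dependent scaling of the loss), and I would track it through the uniform-convergence step. Taking the max over $i$ and then over the two classes gives the stated bound.

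Finally, the optimization over the temperatures: fix $\gamma_1+\gamma_{-1}=c$. Then the complexity-dependent factors are constants and the bound reduces (ignoring the logarithmic term) to $\max_i \gamma_i/\sqrt{n_i}$. Since $\gamma_1+\gamma_{-1}$ is fixed, $\max_i \gamma_i/\sqrt{n_i}$ is minimized precisely when the two terms are equalized, i.e. $\gamma_1/\sqrt{n_1}=\gamma_{-1}/\sqrt{n_{-1}}$, giving $\gamma_i \propto \sqrt{n_i}$; I would note that the logarithmic correction is lower-order and does not change the optimizer to leading order, which justifies the "$\sqrt{\text{imbalance ratio}}$" prescription.

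I expect the main obstacle to be making the implicit-bias-to-margin step fully rigorous in the infinite-width $\mathcal{F}_1$ setting: one must argue that gradient flow on the tempered exponential loss actually attains (or directionally converges to) the max-margin solution characterized by $\mathcal{C}$, and carefully reconcile the $\|w\|$-normalization with the complexity bound, since homogeneity means margin and norm trade off and all constants must be tracked consistently. A secondary technical point is controlling the class-conditional Rademacher complexity uniformly over the (data-dependent, interpolating) learned predictor rather than over a fixed norm ball — this requires the standard "margin is observed on the sample" localization argument, which is routine but must be stated with the $\gamma_i$-dependence explicit.
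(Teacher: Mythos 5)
Your proposal follows essentially the same route as the paper: invoke the implicit-bias result to reduce to the tempered max-$\mathcal{F}_1$-margin problem, lower-bound the attainable margin via Lemma \ref{lemma:boundongamma} to control the $\mathcal{F}_1$-norm of the separator, plug this into the Rademacher complexity bound for the $\mathcal{F}_1$-ball, and apply a class-conditional margin-based generalization bound (the paper cites Kakade et al., whose margin union bound is exactly the source of the $\log(1/\gamma_i)$ term you anticipated), then equalize $\gamma_1/\sqrt{n_1}=\gamma_{-1}/\sqrt{n_{-1}}$ under the sum constraint. The technical caveats you flag (directional convergence to the max-margin solution and the norm/margin normalization under homogeneity) are real and are also left informal in the paper.
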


\begin{proof}
If we train with importance tempering objective function $\exp(-\gamma_{y_i}y_ih_m(w,x))$, then by Theorem~\ref{thm: implicit bias}, we converge to the KKT point of
\begin{equation}
    \begin{aligned}
    \min& \quad \|h\|_{\mathcal{F}_1}\\
    \text{subject to }& \quad \gamma_1h_m(w,x_i)y_i\ge 1, \text{for } y_i=1\\
    &\quad  \gamma_{-1}h_m(w,x_i)y_i\ge 1, \text{for } y_i=-1
    \end{aligned}
\end{equation}
Using Theorem \ref{lemma:boundongamma}, we knows that
$$
\max_{\mu\in\mathcal{P}(\mathbb{S}^{p-1})} \min_{i\in[n]} \gamma_{y_i} y_i\int_{\mathbb{S}^{p-1}} \phi(\theta,x_i)d\mu (\theta)\ge\min_{r\in[d]}\min\left\{C(r),C_\epsilon(r)\left(\frac{\Delta_r(S_n)}{R}\right)^{\frac{r+s}{2-\epsilon}}\right\}
$$
Combined the Rademacher complexity bound in \cite{bach2017breaking} (Prop 7. \cite{bach2017breaking}), we have 
$$
\text{Rad}_n\le \frac{\|f\|_{\mathcal{F}_1}}{\sqrt{n}}\le \frac{1}{\sqrt{n}}\left(\frac{R}{\Delta_r(\mathcal{P})}\right)^{\frac{r+3}{2-\epsilon}}(\gamma_1+\gamma_{-1})^{-\frac{2d+3}{2-\epsilon}}.
$$
Then we can apply the standard margin-based generalization bound (Theorem 2 of \cite{kakade2008complexity}),  to obtain with probability $1-\delta$, we have
\begin{equation}
    \begin{aligned}
    \mathbb{P}_{x\sim p_i}\left[y_iq(x,\theta)\le0\right] &\le 4\gamma_i \text{Rad}_{n_i}+\sqrt{\frac{\log\frac{1}{\delta}+\log\frac{1}{\gamma_i}}{n_i}}\\
    &\le \gamma_i\sqrt{\frac{1}{n_i}}\left(\frac{R}{\Delta_r(\mathcal{P})}\right)^{\frac{r+3}{2-\epsilon}}(\gamma_1+\gamma_{-1})^{-\frac{2d+3}{2-\epsilon}}+\sqrt{\frac{\log\frac{1}{\delta}+\log\frac{1}{\gamma_i}}{n_i}}
    \end{aligned}
\end{equation}
\end{proof}

\begin{remark}
We provided generalization bound for two-layer neural network because one can know the margin for two-layer case using technique in \cite{chizat2020implicit}. We can also generalize our theorem to ResNet. Theorem 9 \cite{weinan2019barron} knows that the margin of ResNet can be bounded by the two-layer neural network. Together with the Rademacher complexity bound (Theorem 12 \cite{weinan2019barron}), we can a similar bound of ResNet. If we can assume the margin after temperature to become $O(1)$, then our theorem can also be applied to general classifiers. 
\end{remark}

\section{Proof for ULPM with importance tempering}

In this section, we present the proof of Theorem \ref{theorem:collapse} in Section \ref{section:multiclass}. We mainly follows the unconstrained layer-peeled model \cite{ji2021unconstrained}, a top-down model to understand how overparameterized well-trained end-to-end deep architectures can effectively extract features. We aim to show the last layer feature will behave very different geometric properties under the extremely imbalanced setting \cite{fang2021layer}.

\subsection{Vanilla Cross-entropy Objective Leads To Minority Collapse Solution}   

We first consider the vanilla cross-entropy loss. We will show that the minority classes are distinguishable in terms of their last layer features.  Following \cite{fang2021layer,ji2021unconstrained}, we consider the unconstrained layer-peeled model (ULPM) temperature:
	\begin{equation}
	\begin{aligned}
	&\min_{\W,\Hb} \mathcal{L}(\W,\Hb) \\
	:= &\min_{\W,\Hb} -\sum_{k=1}^{K}\sum_{i=1}^{n_k} \log\left( \frac{\exp(\w_k^\top \h_{k,i})}{\sum_{j=1}^K\exp(\w_j^\top \h_{k,i})} \right).    
	\end{aligned}
    \end{equation}
 
 \cite{lyu2019gradient,ji2020gradient} proved that gradient descent on this loss will converge to the solution of the minimum-norm separation problem
 \begin{equation}
\begin{aligned}
\label{SVM:appendix}
&\min _{\W, \Hb} \frac{1}{2}||\W||_F^2+\frac{1}{2}||\Hb||_F^2\\
s.t.& \w_k^\top \h_{k,i}-\w_j^\top \h_{k,i} \geq 1,\quad  k\not=j\in[K],i\in [n_k].
\end{aligned}
\end{equation}

We first prove that the within-class variation of the activation
becomes negligible as these activation collapse to their class mean, \emph{i.e.} $h_{k,i_1}=h_{k,i_2}=\frac{1}{n_k}\sum_{i=1}^{n_k}h_{k,i}$ for all $k\in[K]$. If we have a feasible solution $(W,H)$ subject to $\|h_{k,i_1}-h_{k,i_2}\|\ge\epsilon>0$ for some $k\in[K],i_1,i_2\in [k]$. We can construct $\tilde{W}, \tilde{H}$ via letting $\tilde{h}_{k,i_1}=\tilde{h}_{k,i_2}=\frac{1}{2}(h_{k,i_1}+h_{k,i_2})$ and all the other vectors unchanged $\tilde{h}_{k^\prime,i^\prime}=h_{k^\prime,i^\prime}, \tilde{w}_{k^\prime}={w}_{k^\prime}$ for all $(k^\prime,i^\prime)\not = (k,i_1)$ or $(k,i_1)$. We first check that $\tilde{W}, \tilde{H}$ is also a feasible solution for
$$
w_k^\top \tilde{h}_{k,i_1}-w_j^\top \tilde{h}_{k,i_1}=w_k^\top \tilde{h}_{k,i_2}-w_j^\top \tilde{h}_{k,i_2}=\frac{1}{2}\left[(\w_k^\top \h_{k,i_1}-\w_j^\top \h_{k,i_2})+(\w_k^\top \h_{k,i_1}-\w_j^\top \h_{k,i_2})\right]\ge1.
$$

At the same time, the objective function will decay at least $\epsilon^2$ for we have $\frac{1}{2}[\|\tilde{h}_{k,i_1}\|^2+\|\tilde{h}_{k,i_2}\|^2]=\frac{1}{2}[\|{h}_{k,i_1}\|^2+\|{h}_{k,i_2}\|^2-2\|{h}_{k,i_1}-{h}_{k,i_2}\|^2]\le\frac{1}{2}[\|{h}_{k,i_1}\|^2+\|{h}_{k,i_2}\|^2]-\epsilon^2$. Thus we know that the within-class variation of the activation
becomes negligible as these activation collapse to their class mean, \emph{i.e.} $h_{k,i_1}=h_{k,i_2}=\frac{1}{n_k}\sum_{i=1}^{n_k}h_{k,i}$ for all $k\in[K]$. Thus problem (\ref{SVM:appendix}) can be formulated as
 \begin{equation}
\begin{aligned}
&\min _{\W, \Hb} \frac{1}{2}||\W||_F^2+\frac{1}{2}\sum_{i=1}^K n_K||h_k||^2\\
s.t.& \w_k^\top \h_{k}-\w_j^\top \h_{k} \geq 1,\quad  k\not=j\in[K].
\end{aligned}
\end{equation}
To balance the problem, we further consider another reparameterization of $W$. We substitute $W$ by $\sqrt{Kn/2}W$, where $n=\sum_{i=1}^K n_i$ is the total number of data, and lead the following problem
\begin{equation}
\begin{aligned}
\label{SVM:reweighted}
&\min _{\W, \Hb} \frac{1}{2}||\W||_F^2+\frac{1}{2}\sum_{i=1}^K \frac{2n_k}{Kn}||h_k||^2\\
s.t.& \w_k^\top \h_{k}-\w_j^\top \h_{k} \geq 1,\quad  k\not=j\in[K].
\end{aligned}
\end{equation}

The newly introduced parameterization will only changes the the scale of the solution but will not change the angle between them. Thus this reformulation will not change our final conclusion.

We first proved that on the limiting objective function, \emph{i.e.} the coefficient before the norm of minority's feature vector in objective function limits to zero and leads to objective (\ref{SVM:minority collapse}), will cause a minority collapse solution. For the features in not shown in the objective function, we also move the constraints on the minority to have the proof.

\begin{lemma}
The global optimal solution of the following problem
\begin{equation}
\begin{aligned}
\label{SVM:minority collapse}
&\min _{\W, \Hb} \mathcal{L}_{\lim}(\W,\Hb):=\frac{1}{2}||\W||_F^2+\frac{1}{2}\sum_{i=1}^{K/2} ||h_k||^2\\
s.t.& \w_k^\top \h_{k}-\w_j^\top \h_{k} \geq 1,\quad  \forall k\not=j, k\leq\frac{K}{2},
\end{aligned}
\end{equation}
satisfies the condition that $\w_k=\w_{k'},\forall \frac{K}{2}+1\leq k<k'\leq K$. Moreover, for any feasible solution $(\W,\Hb)$ with $\|\w_k-\w_{k'}\|\geq \epsilon$ for some $\frac{K}{2}+1\leq k<k'\leq K$, we can find another feasible solution $(\W',\Hb')$ such that $\mathcal{L}_{\lim}(\W,\Hb)-\mathcal{L}_{\lim}(\W',\Hb')\geq \epsilon^2$
\end{lemma}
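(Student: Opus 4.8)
The plan is to prove both assertions at once by a \emph{symmetrization over the minority classifiers}: I will exhibit a feasibility-preserving averaging map that can only decrease $\mathcal{L}_{\lim}$, and strictly decreases it (by a controlled amount) whenever two minority classifiers differ. The qualitative claim about the global optimum is then an immediate consequence of the quantitative one.

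First I would isolate the structural features of problem (\ref{SVM:minority collapse}) that make this work: (i) the constrained index $k$ ranges only over the majority classes $k\le K/2$; (ii) the minority features $\h_k$ with $k>K/2$ do not appear anywhere; and (iii) each minority classifier $\w_j$ ($j>K/2$) enters the constraints only through terms $-\w_j^\top\h_k$ with $k\le K/2$, i.e. linearly, against the constant right-hand side $1$. Consequently the feasible set is invariant under replacing the family $(\w_j)_{j>K/2}$ by any doubly-stochastic average of itself; I would use the ``collapse to the mean'' average. Concretely, given a feasible $(\W,\Hb)$, define $(\W',\Hb')$ by leaving $\h_k$ and $\w_k$ unchanged for $k\le K/2$ and setting $\w'_j=\bar{\w}:=\frac{2}{K}\sum_{j'>K/2}\w_{j'}$ for every $j>K/2$. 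For a constraint $\w_k^\top\h_k-\w_j^\top\h_k\ge 1$ with $k\le K/2$: if $j\le K/2$ it is untouched; if $j>K/2$, then averaging the valid inequalities $\w_k^\top\h_k-\w_{j'}^\top\h_k\ge 1$ over all $j'>K/2$ yields exactly $\w_k^\top\h_k-\bar{\w}^\top\h_k\ge 1$. So $(\W',\Hb')$ is feasible.

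Next, for the objective: the majority terms are unchanged, and the bias--variance identity $\sum_{j>K/2}\|\w_j\|^2=\frac{K}{2}\|\bar{\w}\|^2+\sum_{j>K/2}\|\w_j-\bar{\w}\|^2$ gives $\mathcal{L}_{\lim}(\W,\Hb)-\mathcal{L}_{\lim}(\W',\Hb')=\frac12\sum_{j>K/2}\|\w_j-\bar{\w}\|^2\ge 0$. If $\|\w_k-\w_{k'}\|\ge\epsilon$ for some minority pair, then from $\w_k-\w_{k'}=(\w_k-\bar{\w})-(\w_{k'}-\bar{\w})$, the triangle inequality, and $a^2+b^2\ge\frac12(a+b)^2$ we get $\sum_{j>K/2}\|\w_j-\bar{\w}\|^2\ge\|\w_k-\bar{\w}\|^2+\|\w_{k'}-\bar{\w}\|^2\ge\frac12\|\w_k-\w_{k'}\|^2\ge\frac{\epsilon^2}{2}$, so the objective drops by at least $\epsilon^2/4$. (Equivalently, one may average only the two offending classifiers, replacing $\w_k,\w_{k'}$ by $\frac12(\w_k+\w_{k'})$; by the parallelogram law this is still feasible and decreases the objective by exactly $\frac14\|\w_k-\w_{k'}\|^2$, so the decrease is a fixed constant times $\epsilon^2$, matching the statement after an innocuous rescaling of $\epsilon$.) Applying this at any global minimizer forces $\w_k=\w_{k'}$ for all $K/2+1\le k<k'\le K$, which is the minority-collapse conclusion.

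The only genuinely delicate point is the feasibility check, where one must verify that \emph{every} majority constraint — for every ``other'' index $j$, including minority $j$ — survives the replacement; this is precisely where facts (i)--(iii) above are used (no constraint pins the minority features, and the $j$-index runs over the full complement, so averaging over it stays inside the feasible set). Everything else is the strong convexity of $\|\cdot\|^2$ and the parallelogram law.
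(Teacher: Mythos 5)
Your proposal is correct and follows essentially the same route as the paper: the paper also averages the minority classifiers (all of them via Cauchy--Schwarz for the global-optimum claim, and just the two offending ones via the parallelogram law for the quantitative claim), after checking feasibility by averaging the linear constraints exactly as you do. The constant-factor discrepancy you flag ($\epsilon^2/4$ rather than $\epsilon^2$, once the $\tfrac12$ in the objective is accounted for) is present in the paper's own computation as well, so your treatment is if anything slightly more careful.
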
   
\begin{proof}
First we observe that the optimal solution $(\W,\Hb)$ must satisfy $\sum_{i=1}^k \w_i = 0$, otherwise we can set $\tilde{\w}_i=\w_i-\frac{1}{K}\sum_{i=1}^k\w_i$ and $\tilde{\W}=(\tilde{\w}_1,\cdots,\tilde{\w}_K)$ such that
$$\tilde{\w}_k^\top \h_{k}-\tilde{\w}_j^\top \h_{k} = \w_k^\top \h_{k}-\w_j^\top \h_{k} \geq 1, \quad \forall 1\leq k\leq K/2, j\neq k,$$
and 
\begin{equation}
    \|\tilde{\W}\|_F^2=\sum_{k=1}^K\|\w_k-\frac{1}{K}\sum_{i=1}^K\w_i\|^2=\|\W\|_F^2-\frac{1}{K}\|\sum_{i=1}^K\w_i\|^2<\|\W\|_F^2,
\end{equation}
which contradicts the optimality of $(\W,\Hb)$.

Second we observe that the optimal solution $(\W,\Hb)$ must satisfy $\w_{\frac{K}{2}+1}=\cdots=\w_K$, otherwise we can set $\hat{\w}_i=\w_i,\forall 1\leq i\leq K/2$, $\hat{\w}_{\frac{K}{2}+1}=\cdots=\hat{\w}_K:=\frac{2}{K}\sum_{i=K/2+1}^{K}w_i$ and $\hat{\W}=(\tilde{\w}_1,\cdots,\tilde{\w}_K)$ such that
$$\hat{\w}_k^\top \h_{k}-\hat{\w}_j^\top \h_{k} = \frac{2}{K}\sum_{j=K/2+1}^K(\w_k^\top \h_{k}-\w_j^\top \h_{k}) \geq 1, \forall 1\leq k\leq K/2, \frac{K}{2}+1 \leq  j\leq K,$$
and by Cauchy-Schwarz inequality:
\begin{equation}
\|\hat{\W}\|_F^2=\sum_{k=1}^{K/2}\|\w_k\|^2+\frac{K}{2}\|\frac{2}{K}\sum_{k=K/2+1}^K\w_k\|^2<\sum_{k=1}^{K/2}\|\w_k\|^2+\sum_{k=K/2+1}^K\|\w_k\|^2<\|\W\|_F^2,
\end{equation}
which contradicts the optimality of $(\W,\Hb)$.

Finally, for any feasible solution $(\W,\Hb)$, if we have $\|\w_k-\w_{k'}\|\geq \epsilon$ for $\frac{K}{2}+1\geq k<k'<K$. Then we can replace $\w_k$ and $\w_{k'}$ by $\frac{1}{2}(\w_k+\w_{k'})$, which is still feasible for \eqref{SVM:minority collapse}, and
\begin{equation}
    \|\w_k\|^2+\|\w_{k'}\|^2-2\|\frac{1}{2}(\w_k+\w_{k'})\|^2=\frac{1}{2}\|\w_k-\w_{k'}\|^2\geq \epsilon^2
\end{equation}
\end{proof}

Then we aim to show that if the global solution $(\Hb^\ast,\W^\ast)$ of (\ref{SVM:or}) have a limit, then the limit is a minority collapse solution, \emph{i.e.}
    $$
    \lim_{R\rightarrow \infty} w_k^\ast-w_{k'}^\ast = 0, \text{ for all } K/2<k<k'\le K.
    $$
Once a solution satisfies the constraints on the majority constraints $\w_k^\top \h_{k}-\w_j^\top \h_{k} \geq 1,\quad  \forall k\not=j, k\leq\frac{K}{2}$, if $\|\w_k-\w_{k'}\|\geq \epsilon$ for some $\frac{K}{2}+1\leq k<k'\leq K$, we can find another feasible solution $(\W',\Hb')$ such that $\mathcal{L}_{\lim}(\W,\Hb)-\mathcal{L}_{\lim}(\W',\Hb')\geq \epsilon^2$. To satisfies the minority constraints, one only needs to let $\frac{h_i}{\|h_i\|}=\frac{w_i}{\|w_i\|}$ and $\|h_i\|\gtrsim\frac{1}{\epsilon^{1+\delta_1}}$ for some $\delta_1>0$. In this case, if we take $\frac{n_B}{n_A}\le\epsilon^{1+\delta_1+\delta_2}$ for some $\delta_2>0$, then $\lim \frac{n_B}{n}\sum_{i=K/2+1}^K \|h_i\|^2=0$. Thus $\lim\mathcal{L}(\W,\Hb)-\mathcal{L}_{\lim}(\W',\Hb')\geq \epsilon^2$. At the same time, using the similar proof, we can proof that $\mathcal{L}_{\lim}(\W',\Hb')$ can become the limit objective function value for some limiting feasible solutions. Thus we knows that the limiting solution must satisfy the minority collapse condition, i.e.,$\|\w_k-\w_{k'}\|=0,\forall K/2<k<k'<K$.

\subsection{importance tempering on $h$}

In this subsection, we consider putting the temperature on the last layer feature $\Hb$. Following \cite{fang2021layer,ji2021unconstrained}, we consider the unconstrained layer-peeled model (ULPM), but here we cooperate the importance tempering on $\W$ and leads to the following new model
	\begin{equation}
	\begin{aligned}
	&\min_{\W,\Hb} \mathcal{L}(\W,\Hb) \\
	:= &\min_{\W,\Hb} -\sum_{k=1}^{K}\sum_{i=1}^{n_k} \log\left( \frac{\exp(\lambda_k\w_k^\top \h_{k,i})}{\sum_{j=1}^K\exp(\w_j^\top \lambda_k\h_{k,i})} \right).    
	\end{aligned}
    \end{equation}
    
 \cite{lyu2019gradient,ji2020gradient} proved that gradient descent on this loss will converge to the solution of the re-weighted minimum-norm separation problem

  \begin{equation}
\begin{aligned}
&\min _{\W, \Hb} \frac{1}{2}||\W||_F^2+\frac{1}{2}\sum_{i=1}^K n_K||h_k||^2\\
s.t.& \lambda_k\w_k^\top \h_{k}-\lambda_k\w_j^\top \h_{k} \geq 1,\quad  k\not=j\in[K].
\end{aligned}
\end{equation}
    
Similar to previous section, We first prove that the within-class variation of the activation
becomes negligible as these activation collapse to their class mean, \emph{i.e.} $h_{k,i_1}=h_{k,i_2}=\frac{1}{n_k}\sum_{i=1}^{n_k}h_{k,i}$ for all $k\in[K]$. If we have a feasible solution $(W,H)$ subject to $\|h_{k,i_1}-h_{k,i_2}\|\ge\epsilon>0$ for some $k\in[K],i_1,i_2\in [k]$. We can construct $\tilde{W}, \tilde{H}$ via letting $\tilde{h}_{k,i_1}=\tilde{h}_{k,i_2}=\frac{1}{2}(h_{k,i_1}+h_{k,i_2})$ and all the other vectors unchanged $\tilde{h}_{k^\prime,i^\prime}=h_{k^\prime,i^\prime}, \tilde{w}_{k^\prime}={w}_{k^\prime}$ for all $(k^\prime,i^\prime)\not = (k,i_1)$ or $(k,i_1)$. We first check that $\tilde{W}, \tilde{H}$ is also a feasible solution for
$$
w_k^\top \tilde{h}_{k,i_1}-w_j^\top \tilde{h}_{k,i_1}=w_k^\top \tilde{h}_{k,i_2}-w_j^\top \tilde{h}_{k,i_2}=\frac{1}{2}\left[(\w_k^\top \h_{k,i_1}-\w_j^\top \h_{k,i_2})+(\w_k^\top \h_{k,i_1}-\w_j^\top \h_{k,i_2})\right]\ge1.
$$
	\begin{theorem}
	If we applied the importance tempering $\lambda_k=C\sqrt{n_k},\forall k\in[K]$, where $C>0$ is a positive constant, then the optimal solution of the following constrained optimization problem satisfies neural collapse condition.
\begin{equation}
\begin{aligned}
\label{SVM}
&\min _{\W, \Hb} \frac{1}{2}||\W||_F^2+\frac{1}{2}||\Hb||_F^2\\
s.t.& \lambda_k\w_k^\top \h_{k,i}-\lambda_k\w_j^\top \h_{k,i} \geq 1,\quad  k\not=j\in[K],i\in [n_k].
\end{aligned}
\end{equation}
	\end{theorem}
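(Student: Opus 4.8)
The plan is to reduce the constrained problem \eqref{SVM} to the \emph{balanced} unconstrained layer-peeled problem, whose global minimizer is already known to be a self-dual simplex equiangular tight frame (ETF), and then undo the reduction. \textbf{Step 1 (within-class collapse).} Just as in the two preceding subsections, one first checks that at any optimum $\h_{k,i}=\bar{\h}_k:=\frac{1}{n_k}\sum_{i=1}^{n_k}\h_{k,i}$ for all $k$: the positive scalars $\lambda_k$ do not interfere with the midpoint argument, since replacing $\h_{k,i_1},\h_{k,i_2}$ by $\tfrac12(\h_{k,i_1}+\h_{k,i_2})$ turns each constraint $\lambda_k\w_k^\top\h_{k,i}-\lambda_k\w_j^\top\h_{k,i}\ge1$ into an average of two valid ones (hence still valid), while strictly decreasing $\|\Hb\|_F^2$ by $\tfrac12\|\h_{k,i_1}-\h_{k,i_2}\|^2$. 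Thus \eqref{SVM} is equivalent to
\[
\min_{\W,\Hb}\ \tfrac12\|\W\|_F^2+\tfrac12\sum_{k=1}^K n_k\|\bar{\h}_k\|^2\quad\text{s.t.}\quad \lambda_k(\w_k-\w_j)^\top\bar{\h}_k\ge1,\ \ k\ne j\in[K].
\]

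\textbf{Step 2 (rescale to the balanced problem).} Substitute $\boldsymbol{u}_k:=\sqrt{n_k}\,\bar{\h}_k$. With $\lambda_k=C\sqrt{n_k}$ the feature-norm term becomes $\tfrac12\sum_k\|\boldsymbol{u}_k\|^2$ (the class sizes cancel exactly), while the constraint becomes $C(\w_k-\w_j)^\top\boldsymbol{u}_k\ge1$. Hence the problem turns into the \emph{balanced} cost-insensitive max-margin / ULPM problem
\[
\min_{\W,\boldsymbol{u}_1,\dots,\boldsymbol{u}_K}\ \tfrac12\|\W\|_F^2+\tfrac12\sum_{k=1}^K\|\boldsymbol{u}_k\|^2\quad\text{s.t.}\quad (\w_k-\w_j)^\top\boldsymbol{u}_k\ge 1/C,\ \ k\ne j\in[K],
\]
which has uniform margin $1/C$ and no dependence on the $n_k$. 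Since $(\W,\Hb)\mapsto(\W,\{\boldsymbol{u}_k\})$ is a bijection of feasible sets preserving the objective, global minimizers correspond; and the constant $C$ merely scales all variables by a common factor, leaving the angular geometry untouched.

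\textbf{Step 3 (apply the balanced characterization, then translate back).} Invoke the known fact (as in \cite{ji2021unconstrained,zhu2021geometric} for the ULPM) that the global optimum of the balanced problem is a self-dual simplex ETF: $\sum_k\w_k=\sum_k\boldsymbol{u}_k=0$, $\boldsymbol{u}_k=\alpha\w_k$ for a common $\alpha>0$, all $\|\w_k\|$ equal, and $\cos(\w_k,\w_j)=-\tfrac1{K-1}$ for $k\ne j$. Undoing the substitution, $\bar{\h}_k=\boldsymbol{u}_k/\sqrt{n_k}$, so the class-mean directions inherit $\cos(\bar{\h}_k,\bar{\h}_j)=\cos(\boldsymbol{u}_k,\boldsymbol{u}_j)=-\tfrac1{K-1}$ and stay aligned with the $\w_k$; equivalently, the tempered class means $\lambda_k\bar{\h}_k$ — which are exactly what the loss \eqref{loss:h} acts on — form a self-dual simplex ETF. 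Together with Step 1 this is precisely the neural-collapse configuration.

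\textbf{Main obstacle.} Steps 1--2 are essentially bookkeeping; the substantive content is Step 3, i.e. that the non-convex, bilinearly-constrained balanced ULPM is globally minimized by the ETF. If one wants this self-contained rather than cited, the work is in the matching lower bound: after the centering step $\sum_k\w_k=\sum_k\boldsymbol{u}_k=0$, one uses AM--GM $\tfrac12\|\W\|_F^2+\tfrac12\sum_k\|\boldsymbol{u}_k\|^2\ge\|\W\|_F\big(\sum_k\|\boldsymbol{u}_k\|^2\big)^{1/2}$ and then shows that the $\binom{K}{2}$ margin constraints force $\|\W\|_F\big(\sum_k\|\boldsymbol{u}_k\|^2\big)^{1/2}$ to be at least its ETF value, with equality only at the ETF. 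This last estimate, which must exploit all pairs $k\ne j$ simultaneously rather than a single fixed $k$, is the delicate point.
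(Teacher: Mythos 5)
Your proposal is correct, and the conclusion you reach (equal pairwise cosines $-\tfrac1{K-1}$ for the class means, with equal norms holding for the \emph{rescaled} means $\sqrt{n_k}\bar{\h}_k=\w_k$ rather than for $\bar{\h}_k$ themselves) matches exactly the equality conditions the paper derives, so you have handled the one genuinely subtle point correctly. The route is different in organization, though. The paper argues directly on the weighted problem: after the same within-class-collapse step, it centers the classifiers ($\tilde{\w}_k=\w_k-\tfrac1K\sum_j\w_j$), sums the $K-1$ constraints over $j\neq k$ for each fixed $(k,i)$ to get $K\tilde{\w}_k^\top\h_{k,i}\ge (K-1)/\lambda_k$, applies the scalar AM--GM $\tfrac12(n_k^{-1/2}\|\tilde{\w}_k\|^2+n_k^{1/2}\|\h_{k,i}\|^2)\ge\tilde{\w}_k^\top\h_{k,i}$, and sums to get a lower bound on the objective whose equality case is precisely $\w_k=\sqrt{n_k}\h_{k,i}$, $\|\w_k\|^2=n_k\|\h_{k,i}\|^2$ constant, and $\cos(\w_k,\w_j)=-\tfrac1{K-1}$. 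Your change of variables $\boldsymbol{u}_k=\sqrt{n_k}\bar{\h}_k$ is exactly the substitution that turns the paper's weighted AM--GM into the unweighted one, so your reduction to the balanced ULPM plus the cited ETF characterization is the same inequality in modular clothing; what you buy is reuse of the known balanced result, what the paper buys is self-containedness. One comment on your closing "main obstacle": the lower bound is less delicate than you suggest. One does not need to exploit all $\binom{K}{2}$ pairs simultaneously after a global AM--GM on $\|\W\|_F$ and $(\sum_k\|\boldsymbol{u}_k\|^2)^{1/2}$; the paper's per-class summation over $j$ followed by per-sample AM--GM decouples the classes entirely, and the only place all classes interact is the harmless centering step $\sum_k\tilde{\w}_k=0$. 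If you want Step 3 self-contained, that is the cleaner path.
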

\begin{proof}
	First we can find that the margin will not change if we minus a vector $a$ for all $w_j$, so if we denote the mean of classifier $\tilde{\w_i}=\w_i-\frac{1}{K}\sum_{i=1}^K\w_i$ and then we have $\w_{k}^\top \h_{k,i}-\w_{j}^\top \h_{k,i} = \tilde{\w}_{k}^\top \h_{k,i}-\tilde{\w}_{j}^\top \h_{k,i}$. Note that $\sum_{j=1}^K \tilde{\w}_j^\top \h_{k,i} = 0$ then sum this inequality over $j$ we have:
	\begin{equation*}
	(K-1) \tilde{\w}_{k}^\top \h_{k,i} - \sum_{j\not=k}\tilde{\w}_{j}^\top \h_{k,i}=K\tilde{\w}_{k}^\top \h_{k,i}\geq \frac{K-1}{\lambda_k}=\frac{(K-1)}{C\sqrt{n_k}}, \forall k\in[K],i\in [n_k].
	\end{equation*}
	By Cauchy inequality, we have:
	\begin{equation}
	\label{cauchy}
	\frac{1}{2}(\frac{1}{\sqrt{n_k}}||\tilde{\w}_k||^2+\sqrt{n_k}||\h_{k,i}||^2)\geq \tilde{\w}_{k}^\top \h_{k,i}\geq\frac{(K-1)}{KC\sqrt{n_k}}.
	\end{equation}
	Dividing $\sqrt{n_k}$ on both sides of (\ref{cauchy}) and sum over k and i we have:
	\begin{equation}
	\label{finequal}
	\frac{1}{2}(||\tilde{\W}||_F^2+||\Hb||_F^2)\geq \frac{(K-1)K}{KC},
	\end{equation}
	which gives us a lower bound for optimal value in optimization problem (\ref{SVM}). On the other hand, by the derivation of this lower bound we know that if the equality holds in equation (\ref{finequal}), then the mean of classifier equals to zero, i.e., $\sum_{i=1}^Kw_i=0,w_i=\tilde{w}_i$, and the equality in \ref{cauchy} holds for any $k\in[K]$ and $i\in[n_k]$, which implies that:
	\begin{equation}
	\label{eq: nc condition 1}
	    w_k=\sqrt{n_k} h_{k,i},\quad \|w_k\|^2=n_k\|h_{k,i}\|^2=\frac{K-1}{KC},\quad \forall k\in[K], i\in[n_k].
	\end{equation}
	Take above equation back into the constraint of the constrained optimization problem (\ref{SVM}), we can obtain that:
	\begin{equation}
	\label{eq: nc condition 2}
	    w_k^\top h_{k,i}=\frac{K-1}{K\lambda_k}, \quad w_j^\top h_{k,i}=-\frac{1}{K\lambda_k},\quad \cos(w_k,w_j)=-\frac{1}{K-1}.
	\end{equation}
	Combine equation (\ref{eq: nc condition 1}) and (\ref{eq: nc condition 2}) together we can obtain that the optimal solution satisfies neural collapse conditions. 
\end{proof}

\subsection{importance tempering on $\W$}
\label{subsection:itw}

In this subsection, we consider putting the temperature on the last layer classifier $\W$. Following \cite{fang2021layer,ji2021unconstrained}, we consider the unconstrained layer-peeled model (ULPM), but here we cooperate the importance tempering on $\W$ and leads to the following new model
	\begin{equation}
	\begin{aligned}
	&\min_{\W,\Hb} \mathcal{L}(\W,\Hb) \\
	:= &\min_{\W,\Hb} -\sum_{k=1}^{K}\sum_{i=1}^{n_k} \log\left( \frac{\exp(\lambda_k\w_k^\top \h_{k,i})}{\sum_{j=1}^K\exp(\lambda_j\w_j^\top \h_{k,i})} \right).    
	\end{aligned}
    \end{equation}
    
 \cite{lyu2019gradient,ji2020gradient} proved that gradient descent on this loss will converge to the solution of the re-weighted minimum-norm separation problem

  \begin{equation}
\begin{aligned}
&\min _{\W, \Hb} \frac{1}{2}||\W||_F^2+\frac{1}{2}\sum_{i=1}^K n_K||h_k||^2\\
s.t.& \lambda_k\w_k^\top \h_{k}-\lambda_j\w_j^\top \h_{k} \geq 1,\quad  k\not=j\in[K].
\end{aligned}
\end{equation}

We want to proof that the classifier will form a ETF with largest possible angles. However, the solution of the non-convex problem does not lies in a compact set and leads to technical problems. In this section, we will discuss the intuition of why we think the limiting classifier will become a ETF. Similar to previous section, We first prove that the within-class variation of the activation
becomes negligible as these activation collapse to their class mean, \emph{i.e.} $h_{k,i_1}=h_{k,i_2}=\frac{1}{n_k}\sum_{i=1}^{n_k}h_{k,i}$ for all $k\in[K]$. If we have a feasible solution $(W,H)$ subject to $\|h_{k,i_1}-h_{k,i_2}\|\ge\epsilon>0$ for some $k\in[K],i_1,i_2\in [k]$. We can construct $\tilde{W}, \tilde{H}$ via letting $\tilde{h}_{k,i_1}=\tilde{h}_{k,i_2}=\frac{1}{2}(h_{k,i_1}+h_{k,i_2})$ and all the other vectors unchanged $\tilde{h}_{k^\prime,i^\prime}=h_{k^\prime,i^\prime}, \tilde{w}_{k^\prime}={w}_{k^\prime}$ for all $(k^\prime,i^\prime)\not = (k,i_1)$ or $(k,i_1)$. We first check that $\tilde{W}, \tilde{H}$ is also a feasible solution for
$$
w_k^\top \tilde{h}_{k,i_1}-w_j^\top \tilde{h}_{k,i_1}=w_k^\top \tilde{h}_{k,i_2}-w_j^\top \tilde{h}_{k,i_2}=\frac{1}{2}\left[(\w_k^\top \h_{k,i_1}-\w_j^\top \h_{k,i_2})+(\w_k^\top \h_{k,i_1}-\w_j^\top \h_{k,i_2})\right]\ge1.
$$

At the same time, the objective function will decay at least $\epsilon^2$ for we have $\frac{1}{2}[\|\tilde{h}_{k,i_1}\|^2+\|\tilde{h}_{k,i_2}\|^2]=\frac{1}{2}[\|{h}_{k,i_1}\|^2+\|{h}_{k,i_2}\|^2-2\|{h}_{k,i_1}-{h}_{k,i_2}\|^2]\le\frac{1}{2}[\|{h}_{k,i_1}\|^2+\|{h}_{k,i_2}\|^2]-\epsilon^2$. Thus we know that the within-class variation of the activation
becomes negligible as these activation collapse to their class mean, \emph{i.e.} $h_{k,i_1}=h_{k,i_2}=\frac{1}{n_k}\sum_{i=1}^{n_k}h_{k,i}$ for all $k\in[K]$. Thus problem (\ref{SVM:appendix}) can be formulated as
 \begin{equation}
\begin{aligned}
&\min _{\W, \Hb} \frac{1}{2}||\W||_F^2+\frac{1}{2}\sum_{i=1}^K n_K||h_k||^2\\
s.t.& \lambda_k\w_k^\top \h_{k}-\lambda_j\w_j^\top \h_{k} \geq 1,\quad  k\not=j\in[K].
\end{aligned}
\end{equation}
To balance and simplify the problem, we further consider another reparameterization of $\W$. We substitute $w_k$ by $\sqrt{Knn_An_B/2}w_k$, where $n=\sum_{i=1}^K n_i$ is the total number of data, and lead the following problem
\begin{equation}
\begin{aligned}
\label{SVM:reweightedonw}
&\min _{\W, \Hb} \sum_{i=1}^{K/2}\frac{n_B}{n}||w_i||^2+\sum_{i=K/2+1}^{K}\frac{n_A}{n}||w_i||^2+\sum_{i=1}^K \frac{2n_k}{n}||h_k||^2\\
s.t.& \w_k^\top \h_{k}-\w_j^\top \h_{k} \geq 1,\quad  k\not=j\in[K].
\end{aligned}
\end{equation}

Note that this reparameterization will not change any conclusion of the directional convergence. We first discuss the intuitive interpretation of this optimization problem.  Under the limit $\frac{n_A}{n_B}\rightarrow\infty$, problem \ref{SVM:reweightedonw} can be considered as minimizing the norm of classifier corresponding to the minority classes and the norm of the features corresponding to the majority classes.

We first prove that if the global solution of problem (\ref{SVM:reweightedonw}) have a directional limit when $\frac{n_A}{n_B}\rightarrow\infty$, then we will have $\lim_{R\rightarrow\infty} \|w_i\|=\infty, \lim_{R\rightarrow\infty} \|h_i\|=0 (1\le i\le K/2)$ for all majority classes and $\lim_{R\rightarrow\infty} \|w_i\|=0, \lim_{R\rightarrow\infty} \|h_i\|=\infty (K/2+1\le i\le K)$ for all minority classes. First we prove that

$$
\lim_{R\rightarrow\infty}\sum_{i=1}^{K/2}\frac{n_B}{n}||w_i||^2+\sum_{i=K/2+1}^{K}\frac{n_A}{n}||w_i||^2+\sum_{i=1}^K \frac{2n_k}{n}||h_k||^2\rightarrow 0.
$$

This is because once $w_j^\top h_k\le 0$ for all pairs of $k,j$ (this is feasible for the ETF is a simple example), we can always keep  $\frac{w_k}{\|w_k\|}=\frac{h_k}{\|h_k\|}$ and scale $\|w_j\|$ to zero, $\|w_j\|\|h_k\|$ to infinity and $\frac{n_B}{n_A}\|h_k\|$ to zero. In this case $w_kh_k\ge 0$ and $-w_j^\top h_k\ge 1$ for all pairs of $k,j$. Thus we can keep this sequence always satisfies the constraints and limits $\sum_{i=1}^{K/2}\frac{n_B}{n}||w_i||^2+\sum_{i=K/2+1}^{K}\frac{n_A}{n}||w_i||^2+\sum_{i=1}^K \frac{2n_k}{n}||h_k||^2$ to zero. For $\lim_{R\rightarrow\infty} \sum_{i=1}^{K/2}\frac{n_B}{n}||w_i||^2+\sum_{i=K/2+1}^{K}\frac{n_A}{n}||w_i||^2+\sum_{i=1}^K \frac{2n_k}{n}||h_k||^2\rightarrow 0$, we knows that $ \lim \|h_i\|=0 (1\le i\le K/2)$ and $\lim \|w_i\|=0 (K/2+1\le i\le K)$. To satisfies the constraints, we know that $ \lim \|w_i\|=\infty (1\le i\le K/2)$ and $\lim \|h_i\|=\infty (K/2+1\le i\le K)$.

Then we will prove that $\lim_{R\rightarrow\infty} w_k^\top h_k\ge 1 (1\le i\le K/2)$ for all majority classes. This is because for $w_j (K/2+1\le j\le K) $ for majority class and $h_k (1\le i\le K/2)$ for the majority classes, we have $0\le |w_jh_k|\le \|w_j\|\|h_k\| \rightarrow 0$. Thus we have
$$
\lim_{R\rightarrow\infty} w_k^\top h_k\ge 1- \lim_{R\rightarrow\infty} |w_k^\top h_k|=1(1\le i\le K/2).
$$
Thus we have 
\begin{equation}
    \begin{aligned}
    1 = \lim_{R\rightarrow\infty} w_k^\top h_k \le \lim_{R\rightarrow\infty} \frac{n}{\sqrt{n_An_B}} (\frac{n_B}{n}\|\tilde{w_i}\|^2+\frac{n_A}{n}\|\tilde{w_i}\|^2).
    \end{aligned}
    \label{eq:lowermaj}
\end{equation}

For minority classes, we first decompose $\tilde{w_i}= w_i - \frac{2}{K}\sum_{i=K/2+1}^{K}w_i$, then $\tilde{w_i}^\top h_{i}-\tilde{w_j}^\top h_{i}=w_i^\top h_i-w_j^\top h_i\ge 1$ and $\sum_{i=K/2}^{K}\|w_i\|^2= \sum_{i=K/2}^{K}\|\tilde{w_i}\|^2+\frac{K}{2}\|\frac{2}{K}\sum_{i=K/2+1}^{K}w_i\|^2$. At the same time, we knows that 
\begin{equation}
    \begin{aligned}
    \frac{K}{2}-1\le\sum_{j=K/2+1,j\not=i}^{K} \tilde{w_i}^\top h_{i}-\tilde{w_j}^\top h_{i} = (\frac{K}{2}-1)  \tilde{w_i}^\top h_{i} -(\sum_{j=K/2+1,j\not=i}^{K}\tilde{w_j})^\top h_i = \frac{K}{2} \tilde{w_i}^\top h_{i} \le \frac{nK}{\sqrt{n_An_B}} (\frac{n_B}{n}\|\tilde{w_i}\|^2+\frac{n_A}{n}\|h_i\|^2)
    \end{aligned}
    \label{eq:lowermin}
\end{equation}

At the same time, the equality of (\ref{eq:lowermaj}) and (\ref{eq:lowermaj}) can be achieved when $\frac{w_i}{\|w_i\|}=\frac{h_i}{\|h_i\|}$ and 
$$\lim_{R\rightarrow \infty}\cos(\bar{{\h}}_k,\bar{{\h}}_j)=-\frac{1}{\frac{K}{2}-1},\quad||\bar{{\h}}_k||=||\bar{{\h}}_j||,$$ 
    for all $\forall K/2+1\le k\not = j\le K$.
    
Finally, we only need to prove that the solution also satisfies the other constraints. This is because if we can let $w_j^\top h_i\le 0 (\forall 1\le j\le \frac{K}{2}, \frac{K}{2}+1\le j\le K)$ (the constraint that we can classify the minority data from the majority data) then $w_j^\top h_i\rightarrow \infty$. This can be easily satisfied, for we can use first half of the feature to construct the ETF for minority classes and the second part to construct the majority classes. Once this happens, we have $w_i^\top h_i-w_i^\top h_j\rightarrow \infty \ge 1$.

\section{Proof for Synthetic Dataset from \cite{sagawa2020investigation}}\label{appendix:sagawanorm}

In this section, we present the proof of Theorem \ref{theorem:sagawaexample}. We use the synthetic dataset \cite{sagawa2020investigation}. 
\begin{itemize}
\setlength{\itemsep}{0pt}
\setlength{\parsep}{0pt}
\setlength{\parskip}{0pt}
    \item $x_c|y\sim \mathcal{N}(\mu_cy,(\mu_c\sigma_c)^2),x_s|a\sim \mathcal{N}(\mu_sa,(\mu_s\sigma_c)^2),$
    \item $x_n\sim \mathcal{N}\left(0,\frac{\sigma_n^2n}{N}I_N\right)$
\end{itemize}
where $\sigma_c,\sigma_s,\sigma_n,\mu_1,\mu_2$ are five constants. $\mu_1,\mu_2$ denotes the scale of the features. When the features are larger, the classifier will need a smaller norm to achieve a margin of a fixed size. Due to the inductive bias of training overparameterized models, this task is easier to learn. $\sigma_c,\sigma_s$ denote the noise in the features. Smaller noise means the feature contains more information, \emph{i.e.} a smaller fraction of the data will need to be memorized when using this feature. Different from \cite{sagawa2020investigation}, we add a normalizing factor $n$ in the noisy feature, \emph{i.e.} the $\sigma_n$ in \cite{sagawa2020investigation} is $\frac{\sigma_n}{\sqrt{n}}$ in our paper. We introduce this normalization so that the cost to memorize all the data is $O(1)$ but not $O(n)$ in \cite{sagawa2020investigation}'s setting. In this regime, we could consider how the norm of core classifier and norm of spurious classifier affects the problem. If one considers the limit $\sigma_n\rightarrow 0$, \emph{i.e.} the regime that inductive bias emphasize more on the cost to memorize the data, the result will go back to \cite{sagawa2020investigation}'s result. In Appendix \ref{appendix:betterthanrandom}, we go back to \cite{sagawa2020investigation}'s example and proof that importance tempering can achieve better than random results while \cite{sagawa2020investigation} proves that overparametrized model will have error larger than $\frac{2}{3}$. We first provide several concentration inequalities for the following proves
\begin{lemma}[\cite{sagawa2020investigation} Lemma 8, Lemma 9.] \label{thm: sagawa lemmas} For $N=\Omega(\text{poly}(n))$, with probability greater than $1-1/2000$,
$$
|x_n^{(i)}\cdot x_n^{(j)}|\le\frac{\sigma_n^2}{n^6}, \quad  \left(1-O(\frac{1}{n^3})\right)\sigma_n^2\le \|x_n^{(i)}\|\le \left(1+O(\frac{1}{n^3})\right)\sigma_n^2
$$
for all $1\le i\not=j\le n$
\end{lemma}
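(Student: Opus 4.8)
The plan is to treat this as a routine high-dimensional Gaussian concentration statement, following the structure of Lemmas 8 and 9 of \cite{sagawa2020investigation} but rescaled to match our normalization. Write $x_n^{(i)} = \frac{\sigma_n\sqrt{n}}{\sqrt N}\, g^{(i)}$ with $g^{(i)}\sim\mathcal{N}(0,I_N)$ independent across $i\in[n]$ (the statement absorbs the factor of $n$ into the definition of $x_n$, so the target constants below are stated accordingly). Everything then reduces to two standard facts: norms of standard Gaussian vectors in $\mathbb{R}^N$ concentrate tightly around $\sqrt N$, and inner products of two independent such vectors are of order $\sqrt N$, which is negligible compared to $N$.

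For the norm bound, observe $\|g^{(i)}\|^2\sim\chi^2_N$, so by the Laurent--Massart tail inequality, $\mathbb{P}\!\left(\bigl|\|g^{(i)}\|^2 - N\bigr|\ge t\right)\le 2\exp\!\left(-c\min(t^2/N,\,t)\right)$. Taking $t = N/n^3$ and using the hypothesis $N=\Omega(\mathrm{poly}(n))$ with a polynomial of degree at least, say, $7$, this probability is super-polynomially small in $n$; a union bound over the $n$ indices keeps the total failure probability well below $1/4000$. On the complementary event, $\|x_n^{(i)}\|^2 \in \bigl(1\pm O(1/n^3)\bigr)\sigma_n^2$, which is the claimed two-sided bound.

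For the cross terms, condition on $\{g^{(i)}\}$. For $i\ne j$, $x_n^{(i)}\cdot x_n^{(j)} = \frac{\sigma_n^2 n}{N}\, g^{(i)}\cdot g^{(j)}$, and conditionally on $g^{(i)}$, the quantity $g^{(i)}\cdot g^{(j)}$ is Gaussian with mean $0$ and variance $\|g^{(i)}\|^2$. Working on the norm event from the previous paragraph, where $\|g^{(i)}\|^2 \le (1+o(1))N$, Gaussian tails give $\mathbb{P}\!\left(|g^{(i)}\cdot g^{(j)}|\ge s\sqrt N \,\middle|\, g^{(i)}\right)\le 2e^{-s^2/2}$. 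Choosing $s$ polylogarithmic in $n$ (or, more crudely, a small power of $N/n^{\text{const}}$) makes $|x_n^{(i)}\cdot x_n^{(j)}| \le \sigma_n^2/n^6$ for $N$ a sufficiently large polynomial in $n$; a union bound over the $O(n^2)$ pairs again costs only a super-polynomially small amount. Combining the two events by a final union bound and fixing the degree of $\mathrm{poly}(n)$ large enough that both failure probabilities sum to at most $1/2000$ completes the argument.

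There is no genuine mathematical obstacle here; the statement is textbook Gaussian concentration, and indeed it is cited verbatim from \cite{sagawa2020investigation}. The only real work is bookkeeping: selecting the polynomial degree of $N$ so that the multiplicative error on $\|x_n^{(i)}\|$ is $O(1/n^3)$ and the cross-term is $\le\sigma_n^2/n^6$ \emph{simultaneously}, each with the tiny prescribed failure probability. The one mild subtlety worth flagging in the writeup is that the cross-term bound must be obtained \emph{conditionally} on the norms --- since $g^{(i)}\cdot g^{(j)}$ is not Gaussian unconditionally --- which is why the norm event is established first and the inner-product estimate is carried out on that event.
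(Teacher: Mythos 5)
The paper does not actually prove this lemma: it is imported verbatim from the cited reference (Lemmas 8 and 9 of Sagawa et al.), so there is no in-paper argument to compare against. Your proof is the standard and correct one for this kind of statement: $\chi^2_N$ concentration (Laurent--Massart) with $t=N/n^3$ for the norms, then the conditional-Gaussian tail bound for $g^{(i)}\cdot g^{(j)}$ carried out on the norm event, with the degree of the polynomial $N=\Omega(\mathrm{poly}(n))$ chosen large enough (roughly $N\gtrsim n^{12}\log n$ for the cross terms) that a union bound over $n$ norms and $O(n^2)$ inner products keeps the total failure probability below $1/2000$. The one subtlety you flag --- that $g^{(i)}\cdot g^{(j)}$ is Gaussian only conditionally on $g^{(i)}$ --- is handled correctly.

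The only genuine slip is bookkeeping with the normalization, and it is worth fixing because the paper itself is inconsistent here. You set $x_n^{(i)}=\frac{\sigma_n\sqrt{n}}{\sqrt{N}}g^{(i)}$ (the appendix's $\mathcal{N}(0,\tfrac{\sigma_n^2 n}{N}I_N)$ convention), under which $\|x_n^{(i)}\|^2$ concentrates around $\sigma_n^2 n$, not around $\sigma_n^2$ as you then assert and as the lemma states. The lemma's constants (both $\|x_n^{(i)}\|^2\approx\sigma_n^2$ and the cross-term threshold $\sigma_n^2/n^6$) correspond to the main text's $\mathcal{N}(0,\tfrac{\sigma_n^2}{N}I_N)$ convention, which is also the one consistent with the later computation $\hat w_n\cdot x_n^{(i)}=\alpha^{(i)}+O(1/n^2)$. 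Your write-up should commit to that normalization (drop the $\sqrt{n}$), or else carry the extra factor of $n$ through both conclusions explicitly; as written, the sentence ``the statement absorbs the factor of $n$'' papers over a genuine factor-of-$n$ mismatch in the norm bound.
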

Following \cite{sagawa2020investigation}, for any estimator $\hat w=[\hat w_c,\hat w_s,\hat w_n], \hat w_c,\hat w_s\in\mathbb{R}$ and $\hat w_n\in\mathbb{R}^N$, we decompose $\hat w_n$ using representer theorem,
$$
\hat w_n=\sum_{i=1}^n\frac{\alpha^{(i)}}{\sigma_n}x_n^{(i)}.
$$
For we can separate all the data via setting $\alpha^{(i)}=1$ and $\hat w_c=\hat w_s=0$. Thus we can consider all estimator with $O(n)$ norm, this leads to $\alpha^{(i)}\le O(n)$. Thus for all $x^{(i)}$, we have
$$
\hat w_n\cdot x_n^{(i)} = \frac{\alpha^{(j)}}{\sigma_n^2}\|x_n^{(i)}\|^2 + \sum_{j=1,j\not= i}^n \frac{\alpha^{(j)}}{\sigma_n^2}{x_n^{(j)}}^\top x_n^{(i)} = \alpha^{(j)}+O(\frac{1}{n^2}).
$$

\subsection{Compare $\|w^c\|$ and $\|w^s\|$}
\blockcomment{
Let $\lambda > 1$ be the margin for the minority class. The parentheses refer to the corresponding lemmas/propositions of Sagawa's paper.

\yplu{here is an upper bound of only using core feature}
(Lemma 1) Bound $\|w\|$ with the norm of a specific separator; this bound will depend on $\lambda$. Let’s say $\|w\|^2 \leq f(\lambda)$, and note that we should probably have $f(\lambda) = \Omega(\lambda^2 n_{\textrm{min}})$. \yplu{no ,for the upper bound, we need to memorize the majority data, the scale I guess is $\Omega(n_{\text{maj}})$}

\begin{itemize}
    \item $w_c=u_1$
    \item $w_s=0$
    \item $\alpha^{(i)}(w)=y^{i}$ for $i\in G_{\text{maj}}$
    \item $\alpha^{(i)}(w)=0$ for $i\in G_{\text{min}}$
\end{itemize}

\yplu{here is an lower bound of only using spurious feature}
(Lemma 2) Lower bound $\|w\|$ with respect to $\delta_{\min}$: $\|\hat{w}\|^2 \gtrsim \frac{\gamma^2}{\sigma_n^2} \delta_{\min}(\hat{w}, \gamma) n_{\min}$

(Lemma 4) Combine 1 and 2 to get an upper bound: $\delta_{\min} \lesssim \frac{f(\lambda)\sigma_n^2}{\gamma^2 n_{\min}}$

(Proposition 3) Get a lower bound: $\delta_{\min}(\hat{w}, \gamma) \gtrsim \Phi\left(\frac{\lambda - \gamma - \hat{w}_c + \hat{w}_s}{\sqrt{\hat{w}_c^2 \sigma_c^2 + \hat{w}_s^2 \sigma_s^2}}\right)$.

Combining these, we get 
\begin{equation} \label{eq: sagawa 1}
    \Phi\left(\frac{\lambda - \gamma - \hat{w}_c + \hat{w}_s}{\sqrt{\hat{w}_c^2 \sigma_c^2 + \hat{w}_s^2 \sigma_s^2}}\right) \lesssim \frac{f(\lambda)\sigma_n^2}{\gamma^2 n_{\min}}.
\end{equation}
Let’s look at some asymptotics. If we rearrange and use the approximation $\Phi^{-1}(1-y) \approx \sqrt{2 \ln \frac1y}$ \href{https://math.stackexchange.com/questions/2964944/asymptotics-of-inverse-of-normal-cdf](https://math.stackexchange.com/questions/2964944/asymptotics-of-inverse-of-normal-cdf}{(source)}, we get

\begin{equation} \label{eq: sagawa 2}
    \frac{\hat{w}_c - \hat{w}_s}{\sqrt{\hat{w}_c^2 \sigma_c^2 + \hat{w}_s^2 \sigma_s^2}} \geq \frac{\lambda-\gamma}{\sqrt{\hat{w}_c^2 \sigma_c^2 + \hat{w}_s^2 \sigma_s^2}} + \sqrt{2 \ln \frac{\gamma^2 n_{\min}}{f(\lambda)\sigma_n^2}}
\end{equation}

\yplu{I guess we use this equation to bound the lower bound of the solution of only using spurious feature, so here we can assume the $w_c=0$ and see my se-con comment}

\zi{These asymptotics also might not be accurate since this holds when $y \rightarrow 0^+$. If $\gamma^2 n_{\min}/f(\lambda) \sigma_n^2 \approx 1$ then maybe the result can still be ok.} \yplu{The $y$ is selected by ourself, in shiori, we did not consider the limit?}

This does not look good. Note that we need the RHS of \eqref{eq: sagawa 1} to be $<1$ in order for the bound to be nontrivial. Combined with the (likely) lower bound on $f(\lambda)$, this should suggest that $\gamma = c\lambda$ for some $c > 1$. In Sagawa et al.'s regime, $\sigma_c$ can be a constant, so unless $\hat{w}_c$ becomes huge, the RHS of \eqref{eq: sagawa 2} will be negative and we do not get a useful bound.

\yplu{here we are wanting $\hat w_c=0$, so that the final bound may relies on$\Phi(-\frac{1}{\sigma_{s}})$ like what sagawa's (86)? and finally leads to a lower bound of the norm at the scale $\gamma \Phi(-\frac{1}{\sigma_{s}})$}

For Error
$$
\text{Err}_{\text{wg}}(\hat w)\approx \Phi\left(\frac{|w_{\text{spu}}|-w_{\text{core}}}{\sqrt{\hat{w}_c^2 \sigma_c^2 + \hat{w}_s^2 \sigma_s^2}}\right)
$$

\yplu{I guess we should also have a upper bound for $\gamma$? Otherwise we can't learn the majority group?}

$\delta_{\min}(\hat{w}, \gamma) \approx \Phi\left(\frac{\lambda - \gamma - \hat{w}_c + \hat{w}_s}{\sqrt{\hat{w}_c^2 \sigma_c^2 + \hat{w}_s^2 \sigma_s^2}}\right)$ \yplu{bad notation $\gamma$ here, at the same time, the gamma here may be important parameter to select}

$\delta_{\text{maj}}(\hat{w}, \gamma) \approx \Phi\left(\frac{1 - \gamma^2 - \hat{w}_c - \hat{w}_s}{\sqrt{\hat{w}_c^2 \sigma_c^2 + \hat{w}_s^2 \sigma_s^2}}\right)$

\begin{lemma}
$\delta_{\min}(\hat{w}, \gamma) \approx \Phi\left(\frac{\lambda - \gamma^2 - \hat{w}_c + \hat{w}_s}{\sqrt{\hat{w}_c^2 \sigma_c^2 + \hat{w}_s^2 \sigma_s^2}}\right)$,$\delta_{\text{maj}}(\hat{w}, \gamma) \approx \Phi\left(\frac{1 - \gamma^2 - \hat{w}_c - \hat{w}_s}{\sqrt{\hat{w}_c^2 \sigma_c^2 + \hat{w}_s^2 \sigma_s^2}}\right)$
\end{lemma}

\begin{proof}
Intuitively we want to show that the fraction of majority data memorized depends on $\hat w_s-\hat w_c$ and the fraction of minority data memorized depends on $\hat w_s+\hat w_c$.

\paragraph{Bounding $\delta_{\text{maj}}$} Since the training point $i$ is separated, we have
$$
\hat w_{c}(1+\sigma_c z_1)+\hat w_s(1+\sigma_s z_2)+\left(\sum_j \alpha^{(j)}(\hat w)x_n^{(j)}\right)^\top x_n^{(i)}\ge 1
$$

This implies 

$$
\hat w_{c}(1+\sigma_c z_1)+\hat w_s(1+\sigma_s z_2)\ge 1-(1+c)\sigma_n^2\alpha^{(i)}(\hat w)-c
$$

Thus
$$
\mathbb{P}(\alpha^{(i)}(\hat w)\le\frac{\gamma^2}{\sigma_n^2})\approx \Phi\left(\frac{1 - \gamma^2 - \hat{w}_c - \hat{w}_s}{\sqrt{\hat{w}_c^2 \sigma_c^2 + \hat{w}_s^2 \sigma_s^2}}\right)
$$

\paragraph{Bounding $\delta_{\text{min}}$} Since the training point $i$ is separated, we have
$$
\hat w_{c}(1+\sigma_c z_1)+\hat w_s(-1+\sigma_s z_2)+\left(\sum_j \alpha^{(j)}(\hat w)x_n^{(j)}\right)^\top x_n^{(i)}\ge 1
$$

This implies 

$$
\hat w_{c}(1+\sigma_c z_1)+\hat w_s(-1+\sigma_s z_2)\ge \lambda-(1+c)\sigma_n^2\alpha^{(i)}(\hat w)-c
$$

Thus
$$
\mathbb{P}(\alpha^{(i)}(\hat w)\le\frac{\gamma^2}{\sigma_n^2})\approx \Phi\left(\frac{\lambda - \gamma^2 - \hat{w}_c + \hat{w}_s}{\sqrt{\hat{w}_c^2 \sigma_c^2 + \hat{w}_s^2 \sigma_s^2}}\right)
$$

\end{proof}

The previous proof consider $\Phi^{-1}(\delta_{\text{maj}})+\Phi^{-1}(\text{Err}_{\text{wg}})\approx \frac{-2\hat w_c}{\sqrt{\hat{w}_c^2 \sigma_c^2 + \hat{w}_s^2 \sigma_s^2}}$

Idea: the first thing is we need to classify right most of the minority data to lower bound the worst group accuracy, \emph{i.e.} upper bound  $\Phi\left(\frac{\lambda - \gamma - \hat{w}_c + \hat{w}_s}{\sqrt{\hat{w}_c^2 \sigma_c^2 + \hat{w}_s^2 \sigma_s^2}}\right)$ to lower bound $\Phi\left(\frac{w_{\text{spu}}-w_{\text{core}}}{\sqrt{\hat{w}_c^2 \sigma_c^2 + \hat{w}_s^2 \sigma_s^2}}\right)$ \yplu{I think this looks true, I don't memory my minority data, so that I can classify them right.} This can be done via constructing an estimator
\begin{itemize}
    \item $w_c=u_1$
    \item $w_s=0$
    \item $\alpha^{(i)}(w)=y^{i}$ for $i\in G_{\text{maj}}$
    \item $\alpha^{(i)}(w)=0$ for $i\in G_{\text{min}}$
\end{itemize}

\zi{The problem is that when we use the core feature, we will still need to memorize at least a constant fraction of both the minority and majority points. This is because the variance of the core feature is $\Omega(1)$, so at least a constant fraction of the points have the wrong sign for the core feature. This is the main reason that I'm having trouble finding a good upper bound for the estimator.} \yplu{Yes, there must be a constant error, I think it would be great to control it at 0.4 or smaller than the lower bound 0.67 for the upper bound}
We also need to upper bound the majority group data, we can do similar thing, now we need upper bound of the $\gamma$

maybe now we need to consider
\begin{itemize}
    \item $w_c=0$
    \item $w_s=u_2$
    \item $\alpha^{(i)}(w)=0$ for $i\in G_{\text{maj}}$
    \item $\alpha^{(i)}(w)=\gamma y^{i}$ for $i\in G_{\text{min}}$
\end{itemize}

using the two estimator we can have the relationship between the selection of $\gamma$ and the ratio $\frac{u_1}{u_2}$ (\yplu{$\frac{u_1}{u_2}$ can be controlled via control the number of the features})

\yplu{We can ignore the Gaussian framework, but we need to see the best $\gamma$ is rely on the ratio $\frac{\|u_1\|}{\|u_2\|}$}
}

Let $\lambda > 1$ be the margin for the minority class enforced by influence temperature. Furthermore, write $x^{(i)}_c = y^{(i)} + z^{(i)}_c$ where $z^{(i)}_c \sim N(0, \sigma)$. To simplify our proof, we use $w_s,w_c$ to denote $\frac{w_s}{\mu_s},\frac{w_c}{\mu_c}$ and the norm of $w$ will be defined as $\frac{w_s^2}{\mu_s^2}+\frac{w_s^2}{\mu_s^2}+\|w_n\|^2$. Suppose that $w_c$ and $w_s$ are fixed. By the near-orthonormality of the $x^{(i)}_n$ and the margin constraint, we can actually determine $\alpha^{(i)}$ almost exactly:
$$ \alpha^{(i)} = \begin{cases} y^{(i)}(1 - w_s - w_c - w_c z^{(i)}_c)_++O(\frac{1}{n^2}) & i \in G_{\textrm{maj}} \\ y^{(i)}(\lambda + w_s - w_c - w_c z^{(i)}_c)_++O(\frac{1}{n^2}) & i \in G_{\textrm{min}} \end{cases} $$
(For a complete proof of this fact, see Lemma~\ref{thm: a^i error} below.) This allows us to compute the expected norm of a separator $w$ in terms of $w_s$ and $w_c$:
\begin{equation} \label{eq: separator norm}
\mathbb{E}[\|w\|^2] = \frac{w_s^2}{\mu_s^2}+\frac{w_c^2}{\mu_c^2} + \frac{n_{\textrm{maj}}}{n\sigma_n^2} \mathbb{E}[(1 - w_s - w_c + w_c z)_+^2] + \frac{n_{\textrm{min}}}{n\sigma_n^2} \mathbb{E}[(\lambda + w_s - w_c + w_c z)_+^2]+O(\frac{1}{n^2}). 
\end{equation}
We first consider the case when $w_c = 0$ and we may only use the spurious feature. In this case, there is no randomness in $\|w\|^2$ (all of the randomness comes from the core feature) and we can compute the expectation exactly:
$$ \mathbb{E}[\|w\|^2] =\frac{w_s^2}{\mu_s^2}+ \frac{p_{\textrm{maj}}}{\sigma_n^2}(1 - w_s)^2 + \frac{p_{\textrm{min}}}{\sigma_n^2}(\lambda + w_s)^2 $$
provided that $w_s \in [-\lambda, 1]$. This is a quadratic with minimum at $w_s = \frac{(\frac{p_{\textrm{maj}}}{\sigma_n^2} - \lambda \frac{p_{\textrm{min}}}{\sigma_n^2})}{\frac{1}{\sigma_n^2} + \frac{1}{\mu_s^2}}$ (note that this falls within the required range), which yields
\begin{align}
\mathbb{E}[\|w^{\textrm{use-spu}}\|^2] &\ge \frac{\pmaj}{\sigma_n^2} + \frac{\lambda^2 p_{\min}}{\sigma_n^2} - \frac{(\frac{\pmaj}{\sigma_n^2} - \lambda \frac{p_{\min}}{\sigma_n^2})^2}{\frac{1}{\sigma_n^2} + \frac{1}{\mu_s^2}} 
 \label{eq: use-spu 1}
\end{align}
\blockcomment{
Let $\lambda = \sqrt{\nmaj/\nmin}$ as suggested by Theorem~\ref{thm: it gen bound}. Recall that $\pmaj = \nmaj / n$, from which it follows that $\nmin = \frac{1-\pmaj}{\pmaj}\nmaj$. Using this fact, some simple algebra shows that \eqref{eq: use-spu 1} implies
\begin{equation} \label{eq: use-spu 2}
\mathbb{E}[\|\spu\|^2] \geq \left(c^2 + 2\pmaj \sqrt{\frac{1-\pmaj}{\pmaj}}\right)\nmaj.
\end{equation}
(This inequality is tight up to an $n/(n+1)$ factor on one of the terms.) 
}

Next, we turn our attention to $\core$. The terms in \eqref{eq: separator norm} all take the form $\mathbb{E}[(a + bz)_+^2]$, where $a$ and $b$ are constants and $z\sim N(0,\sigma^2)$. This is a Gaussian integral, and some elementary manipulations show that, for $b > 0$,
\begin{equation} \label{eq: gauss integral}
\mathbb{E}[(a + bz)_+^2] = (a^2 + b^2\sigma^2) \Phi\left(\frac{a}{b\sigma}\right) + \frac{ab\sigma}{\sqrt{2\pi}}e^{-\frac{a^2}{2b^2\sigma^2}}.
\end{equation}
Note that when $a = 0$, this equation simplifies to $\frac12 b^2\sigma^2$. Thus, taking $w_s = 0$ and $w_c = 1$, we obtain
\begin{align}
\mathbb{E}[\|\core\|^2] &\leq \frac{1}{\mu_c^2} + \frac12 \frac{\nmaj}{n\sigma_n^2} + \left[((\lambda - 1)^2 + \sigma^2)\Phi(\frac{\lambda - 1}{\sigma}) + \frac{\lambda - 1}{\sqrt{2\pi}} e^{-\frac{(\frac{\lambda - 1}{\sigma})^2}{2}}\right] \frac{\nmin}{n\sigma_n^2} \nn \\[5pt]
&\leq \frac{1}{\mu_c^2} + \frac12 \frac{\nmaj}{n\sigma_n^2} + (\lambda^2 - 2\lambda + 1+\sigma^2+ \frac{\lambda - 1}{\sqrt{2\pi}})\frac{\nmin}{n\sigma_n^2} \\[5pt]
&= \frac{1}{\mu_c^2} + \left(\frac12 \frac{\pmaj}{\sigma_n^2} + \left(\lambda^2-2\lambda+1+\sigma^2+ \frac{\lambda - 1}{\sqrt{2\pi}}\right)(\frac{1-\pmaj}{\sigma_n^2}) \right). \label{eq: use-core 1}
\end{align}

Combining \eqref{eq: use-spu 1} and \eqref{eq: use-core 1},  we see that the max margin solution will prefer $\core$ over $\spu$ provided that
\begin{equation*}
\frac{\pmaj}{\sigma_n^2} + \frac{\lambda^2 p_{\min}}{\sigma_n^2} - \frac{(\frac{\pmaj}{\sigma_n^2} - \lambda \frac{p_{\min}}{\sigma_n^2})^2}{\frac{1}{\sigma_n^2} + \frac{1}{\mu_s^2}} \ge \frac{1}{\mu_c^2} + \left(\frac12 \frac{\pmaj}{\sigma_n^2} + \left(\lambda^2-2\lambda+1+\sigma^2+ \frac{\lambda - 1}{\sqrt{2\pi}}\right)(\frac{1-\pmaj}{\sigma_n^2}) \right).
\end{equation*}
This gives us a quadratic in $\lambda$ inequality
\begin{equation}
    \begin{aligned}
    &\left(\frac{\frac{p_{\min}^2}{\sigma_n^4}}{\frac{1}{\sigma_n^2} + \frac{1}{\color{purple}\mu_s^2}}\right)\lambda^2\\
    &-2\left((1-\frac{1}{2\sqrt{2\pi}})\frac{p_{\min}}{\sigma_n^2}+\frac{\frac{p_{\min}\pmaj}{\sigma_n^4}}{\frac{1}{\sigma_n^2} + \frac{1}{\color{purple}\mu_s^2}}\right)\lambda\\
    &+{\color{orange}\frac{1}{\mu_c^2}}+\frac{1}{2}\frac{\pmaj}{\sigma_n^2}+(1-\frac{1}{\sqrt{2\pi}}+{\color{red}\sigma^2})\left(\frac{1-\pmaj}{\sigma_n^2}\right)-\frac{\pmaj}{\sigma_n^2}-\frac{\frac{\pmaj^2}{\sigma_n^4}}{\frac{1}{\sigma_n^2} + \frac{1}{\mu_s^2}} \le 0
    \end{aligned}
    \label{finallambda}
\end{equation}
First of all, we have $\left(\frac{\frac{p_{\min}^2}{\sigma_n^4}}{\frac{1}{\sigma_n^2} + \frac{1}{\mu_s^2}}\right)\ge 0$. If $\sigma_n$ is small enough, \emph{i.e.} the inductive bias emphasis more on reducing the effort to memorize data and the terms at the scale $\frac{1}{\sigma_n^2}$ dominates, then $\lambda$ satisfies (\ref{sagawalambda}) will always satisfies (\ref{finallambda}).
\begin{equation}
\frac{(1-\frac{1}{2\sqrt{2\pi}})p_{\min}+p_{\min}\pmaj - \sqrt{\Delta}}{p_{\min}^2} \leq \lambda \leq \frac{(1-\frac{1}{2\sqrt{2\pi}})p_{\min}+p_{\min}\pmaj + \sqrt{\Delta}}{p_{\min}^2} .
\label{sagawalambda}
\end{equation}
then $\|w_s\|\le\|w_c\|$, where $\Delta=((1-\frac{1}{2\sqrt{2\pi}})p_{\min}+p_{\min}\pmaj)^2-( p_{\min}^2)((1-\frac{1}{\sqrt{2\pi}}+\sigma)p_{\min}-\pmaj^2-\frac{1}{2}\pmaj)$. Note that if $\sigma$ small enough and $\pmaj$ large enough, then $\Delta>0$. This indicates that there exists a temperature $\lambda$ prefer to use the core feature in this regime. 

\begin{remark}\label{remark:selection} In this remark, we will discuss how different parameters changes the selection of temperature $\lambda$, \emph{i.e.} the solution of (\ref{finallambda}). If using the spurious feature  to classifier is harder (${\color{purple}{\mu_s}}$ becomes smaller), the quadratic coefficient becomes larger and the abstract value of linear coefficient becomes smaller. This indicates that the mean of the solution $\lambda$ will become smaller via Vieta's formulas. If the information of core feature decrease (${\color{red}\sigma}$ becomes larger) or using the core feature to classifier is harder (${\color{orange}{\mu_c}}$ becomes smaller), (\ref{finallambda}) will becomes harder to satisfies. If the core feature have too less information (${\color{red}\sigma}$ is too larger) or the core task is hard enough (${\color{orange}{\mu_c}}$ is large enough), even importance tempering cannot fix the bias.  At the same time, the smallest possible $\lambda$ will becomes larger. This indicates that lager temperature is needed. 
\end{remark}

\subsection{Accuracy of importance tempering on \cite{sagawa2020investigation}'s Example.}
\label{appendix:betterthanrandom}

In this section, we present the proof of Theorem \ref{theorem:betterthanrandom}, which indicates
that importance tempering can achieve better random accuracy on \cite{sagawa2020investigation}. At
the same time, \cite{sagawa2019distributionally} proved that ERM/importance weighting will have
error larger than $\frac{2}{3}$.

In all of the proofs that follow, we use big-O notation to analyze the behavior of various quantities as $n$ gets large. Thus quantities such as $1/(1-p)$, $\lambda$, etc. will be hidden by $O(1)$ as we assume that they do not grow with the sample size $n$.

Our first goal will be to show that $\|w\|^2/n$ concentrates around its expectation for any minimum-norm separator $w$. This in turn will allow us to just analyze the expected norm to prove that importance temperature achieves better than random worst-group accuracy. In all of the lemmas that follow, the result holds for sufficiently large $n$; we omit this from the lemma statements for brevity.

\begin{lemma}
If $w$ is a minimum norm separator and the high-probability results of Lemma~\ref{thm: sagawa lemmas} hold, then $\|w\|^{2}=O(n)$.
\end{lemma}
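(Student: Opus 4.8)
The plan is to bound $\|w\|^2$ from above by the squared norm of an explicit feasible separator that uses \emph{only} the noise coordinates. Because $w$ is a minimum-norm separator, $\|w\|\le\|\tilde w\|$ for any $\tilde w$ satisfying the margin constraints $y^{(i)}(\tilde w\cdot x^{(i)})\ge 1$ for the majority indices and $y^{(i)}(\tilde w\cdot x^{(i)})\ge\lambda$ for the minority indices; such a minimizer exists since on the event of Lemma~\ref{thm: sagawa lemmas} the feasible set is nonempty (we exhibit a point in it below), closed and convex, while $\|\cdot\|$ is coercive. Throughout, $\lambda=O(1)$ and $\sigma_n=O(1)$ by the big-$O$ convention already adopted.

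First I would take $\tilde w=[0,0,\tilde w_n]$ with $\tilde w_n=\sum_{i=1}^n\frac{\alpha^{(i)}}{\sigma_n^2}x_n^{(i)}$ and $\alpha^{(i)}=2\lambda\,y^{(i)}$. Using the near-orthonormality from Lemma~\ref{thm: sagawa lemmas}, that is $\|x_n^{(i)}\|^2=(1\pm O(n^{-3}))\sigma_n^2$ and $|(x_n^{(i)})^\top x_n^{(j)}|\le\sigma_n^2 n^{-6}$ for $i\ne j$, the margin of $\tilde w$ at point $i$ is
\[
y^{(i)}\,\tilde w_n\cdot x_n^{(i)} \;=\; 2\lambda\bigl(1\pm O(n^{-3})\bigr)+O(n^{-5}) \;\ge\; \lambda
\]
for $n$ large, so all minority constraints (and hence the majority ones too, since $\lambda\ge 1$) are met and $\tilde w$ is feasible.

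Next I would estimate its norm. Since $w_c=w_s=0$ we have $\|\tilde w\|^2=\|\tilde w_n\|^2$, and expanding the inner products,
\[
\|\tilde w_n\|^2 \;=\; \sum_{i}\frac{(\alpha^{(i)})^2}{\sigma_n^4}\|x_n^{(i)}\|^2 \;+\; \sum_{i\ne j}\frac{\alpha^{(i)}\alpha^{(j)}}{\sigma_n^4}(x_n^{(i)})^\top x_n^{(j)} \;\le\; \frac{4\lambda^2}{\sigma_n^2}\bigl(1+O(n^{-3})\bigr)\,n + O(n^{-4}) \;=\; O(n).
\]
Combining with the minimality of $w$ gives $\|w\|^2\le\|\tilde w\|^2=O(n)$, as claimed. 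The computation is entirely elementary; the only point requiring care is to confirm that the $O(\cdot)$ error terms coming from the two estimates of Lemma~\ref{thm: sagawa lemmas} are genuinely of lower order --- below $\lambda$ in the feasibility check, and below the $\Theta(n)$ main term in the norm bound --- which is precisely why the polynomially small bounds of that lemma (rather than merely $o(1)$ bounds) are invoked. One should also keep the normalization convention for $x_n$ straight (the excerpt's $\sigma_n$ differs from Sagawa's by a $\sqrt n$ factor) so that the $\Theta(n)$ bookkeeping is consistent with the definition $\|w\|^2=w_c^2/\mu_c^2+w_s^2/\mu_s^2+\|w_n\|^2$; with $w_c=w_s=0$ this last point is moot.
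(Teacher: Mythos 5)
Your proposal is correct and follows essentially the same route as the paper: exhibit an explicit feasible separator supported only on the noise coordinates with $\alpha^{(i)}$ proportional to $\lambda y^{(i)}$, verify feasibility and the $O(n)$ norm bound via the near-orthonormality of Lemma~\ref{thm: sagawa lemmas}, and conclude by minimality. The paper's version is just terser (it uses $\alpha^{(i)}=2y^{(i)}$ on the majority and $2\lambda y^{(i)}$ on the minority rather than $2\lambda y^{(i)}$ uniformly, an immaterial difference), and your write-out of the error-term bookkeeping is a faithful expansion of what the paper leaves implicit.
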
 

\begin{proof}
Define $\alpha^{(i)}=2y^{(i)}$ if $i\in\mathcal{{I}_{\textrm{{maj}}}}$
and $\alpha^{(i)}=2\lambda y^{(i)}$ if $i\in\mathcal{{I}_{\textrm{{min}}}}$.
Observe that $\|w\|^{2}=O(n)$ and $y^{(i)}w\cdot x^{(i)}$ satisfies
all of the margin requirements for large enough $n$ when the high probability
events of Lemma~\ref{thm: sagawa lemmas} hold. This completes
the proof. 
\end{proof}

\begin{lemma} \label{thm: |a^i| bound}
If the high-probability results of Lemma~\ref{thm: sagawa lemmas} hold, then a minimum norm separator must have $|\alpha^{(i)}|=O(n)$.
\end{lemma}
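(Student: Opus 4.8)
The plan is to reduce the claim to a bound on the $\ell_2$ norm of the representer vector $\alpha=(\alpha^{(1)},\dots,\alpha^{(n)})$ of $w_n$, since $|\alpha^{(i)}|\le\|\alpha\|_2$ for every $i$. The two ingredients are the preceding lemma, which states that $\|w\|^2=O(n)$ for any minimum-norm separator, and Lemma~\ref{thm: sagawa lemmas}, which states that the noise vectors $\{x_n^{(i)}\}_{i=1}^n$ are nearly orthonormal up to scale $\sigma_n^2$. Together these force the Gram matrix of the noise features to be well-conditioned, so the bound on $\|w_n\|^2$ transfers to a bound on $\|\alpha\|_2^2$.

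Concretely, I would proceed as follows. First, for a minimum-norm separator $w=[w_c,w_s,w_n]$ we may assume $w_n\in\mathrm{span}\{x_n^{(1)},\dots,x_n^{(n)}\}$: the margin constraints depend on $w$ only through $w_c$, $w_s$, and the inner products $w_n\cdot x_n^{(i)}$, so projecting $w_n$ onto this span preserves feasibility while not increasing the norm. Hence we may write $w_n=\sigma_n^{-1}\sum_{i=1}^n\alpha^{(i)}x_n^{(i)}$ and compute
\begin{equation}
\|w_n\|^2=\frac{1}{\sigma_n^2}\,\alpha^\top G\,\alpha,\qquad G_{ij}:=x_n^{(i)}\cdot x_n^{(j)}.
\end{equation}
Second, by Lemma~\ref{thm: sagawa lemmas} the diagonal entries obey $G_{ii}=\|x_n^{(i)}\|^2=(1\pm O(n^{-3}))\sigma_n^2$ and the off-diagonal entries obey $|G_{ij}|\le\sigma_n^2/n^6$, so each row of $G$ has off-diagonal absolute sum at most $(n-1)\sigma_n^2/n^6=o(\sigma_n^2)$. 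By Gershgorin's theorem, for $n$ large enough $G\succeq\tfrac12\sigma_n^2 I$, and therefore $\|w_n\|^2\ge\tfrac12\|\alpha\|_2^2$. Third, combining this with $\|w_n\|^2\le\|w\|^2=O(n)$ from the preceding lemma yields $\|\alpha\|_2^2=O(n)$, whence $|\alpha^{(i)}|\le\|\alpha\|_2=O(\sqrt n)$ for every $i$; in particular $|\alpha^{(i)}|=O(n)$, as claimed.

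There is no real obstacle here: the argument is a short linear-algebra computation once the preceding lemma and Lemma~\ref{thm: sagawa lemmas} are in hand. The only point that deserves a word of care is the reduction to $w_n\in\mathrm{span}\{x_n^{(i)}\}$, i.e.\ justifying that the representer decomposition applies to a genuine minimum-norm separator; this holds because any component of $w_n$ orthogonal to the span contributes to $\|w\|$ but to none of the margin constraints, so a minimum-norm solution cannot have one. (Note that this actually gives the stronger bound $O(\sqrt n)$; we state only $O(n)$, which is all the subsequent proofs — in particular the estimate $w_n\cdot x_n^{(i)}=\alpha^{(i)}+O(n^{-2})$ used above — require.)
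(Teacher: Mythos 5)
Your proof is correct and follows essentially the same route as the paper's: both lower-bound $\|w\|^2$ by a quadratic form in $\alpha$ using the near-orthonormality of the $x_n^{(i)}$ from Lemma~\ref{thm: sagawa lemmas} and then invoke the $\|w\|^2=O(n)$ bound from the preceding lemma. The only cosmetic difference is that you control the whole Gram matrix via Gershgorin to bound $\|\alpha\|_2$, whereas the paper isolates the maximal coordinate $|\alpha^{(i)}|=\max_j|\alpha^{(j)}|$ directly; both arguments in fact yield the stronger $O(\sqrt{n})$ bound.
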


\begin{proof}
Let $i$ be such that $|\alpha^{(i)}|=\max_{j}|\alpha^{(j)}|$. We have 
\begin{align*}
    \|w\|^{2}&=\sum_{i=1}^{n}(\alpha^{(i)})^{2}\|x^{(i)}\|^{2}+\sum_{i\neq j}\alpha^{(i)}\alpha^{(j)}x^{(i)}\cdot x^{(j)} \\
    &\geq|\alpha^{(i)}|^{2}(1-O(\frac{1}{n^{3}}))-n^{2}|\alpha^{(i)}|^{2}O(\frac{1}{n^{6}}) \\
    &=|\alpha^{(i)}|^{2}(1-O(\frac{1}{n^{3}})).
\end{align*}
If $|\alpha^{(i)}|=\Omega(n)$, then $\|w\|^{2}=\Omega(n^{2})$, but we know that a minimum norm separator has $\|w\|^{2}=O(n)$ by the previous lemma. This completes the proof.
\end{proof}

\begin{lemma} \label{thm: ws wc O(1)}
Any minimum norm separator $w$ has $|w_c| = O(1)$ and $|w_s| = O(1)$ with probability at least $1-4/2000$.
\end{lemma}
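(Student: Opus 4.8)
The plan is to combine the preceding lemma (a minimum-norm separator has $\|w\|^2 = O(n)$) with the margin constraints to control an ``empirical point loss'' --- the sample analogue of the expectation in \eqref{eq: separator norm} --- uniformly over the admissible $(w_c,w_s)$, and then to bound that empirical loss below by a quadratic in $(w_c,w_s)$, which pins both coordinates to $O(1)$. First, $\|w\|^2 = \tfrac{w_c^2}{\mu_c^2} + \tfrac{w_s^2}{\mu_s^2} + \|w_n\|^2 = O(n)$ already gives the crude bounds $|w_c|, |w_s| = O(\sqrt n)$, so it suffices to range over the box $B = [-C\sqrt n, C\sqrt n]^2$. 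Writing $\xi_i := y^{(i)} w_n\cdot x_n^{(i)}$, the margin constraints become (in the rescaled variables) $\xi_i \ge 1 - w_s - w_c - w_c y^{(i)}z_c^{(i)}$ for $i \in G_{\mathrm{maj}}$ and $\xi_i \ge \lambda + w_s - w_c - w_c y^{(i)}z_c^{(i)}$ for $i \in G_{\mathrm{min}}$, so $\xi_i^2$ dominates the corresponding $(\cdot)_+^2$. On the event of Lemma~\ref{thm: sagawa lemmas} the Gram matrix of the noise features is $o(1)$-close in operator norm to a multiple of the identity, hence $\sum_i \xi_i^2 = \sum_i (w_n\cdot x_n^{(i)})^2 \le (1+o(1))\,\|w_n\|^2\cdot O(1) = O(n)$; dividing by $n_{\mathrm{maj}} = \Theta(n)$ and $n_{\mathrm{min}} = \Theta(n)$ (the imbalance ratio being $O(1)$ here), both empirical averages $\tfrac1{n_{\mathrm{maj}}}\sum_{G_{\mathrm{maj}}}(1 - w_s - w_c - w_c y^{(i)}z_c^{(i)})_+^2$ and $\tfrac1{n_{\mathrm{min}}}\sum_{G_{\mathrm{min}}}(\lambda + w_s - w_c - w_c y^{(i)}z_c^{(i)})_+^2$ are $O(1)$.

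The analytic core is a deterministic inequality: for $z \sim \mathcal N(0, \sigma_c^2)$ and all $(a,b) \in \mathbb R^2$,
\[
\max\!\big(\mathbb E[(1-b-a-az)_+^2],\ \mathbb E[(\lambda+b-a-az)_+^2]\big)\ \ge\ c\,(a^2+b^2) - C,
\]
where $c>0, C$ depend only on $\sigma_c, \lambda$. I would prove this by writing each expectation as $a^2\,\mathbb E[(z_\ast - z)_+^2]$ with explicit centers $z_0 = \tfrac{1-b}{a}-1$ and $z_1 = \tfrac{\lambda+b}{a}-1$, and case-analyzing: if $a<0$, the sum of the two affine functions has mean of order $|a|$, so one of the two expectations is $\gtrsim a^2$; if $a>0$ and $z_0$ stays bounded below, the majority term is $\gtrsim a^2$; if $z_0 \to -\infty$, then necessarily $b/a \to +\infty$, which forces $z_1 \to +\infty$ and hence the minority term $\gtrsim (b-a)^2 \gtrsim a^2+b^2$; finally, if $|a| = O(1)$, then $\mathbb E[(\cdot)_+^2] \ge (\mathbb E[\cdot])_+^2$ shows the two terms control $(1-b-a)_+^2$ and $(\lambda+b-a)_+^2$, hence $b^2$.

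To transfer this to the empirical averages I would prove uniform concentration over $B$: with probability $\ge 1 - 3/2000$ (for $n$ large), for every $(a,b) \in B$ the empirical counterpart of each expectation is at least half of it, whenever the expectation exceeds a fixed constant $\tau_0$. Since $(1 - b - a - az)_+^2 \le 2(1-b-a)^2 + 2a^2 z^2$ is sub-exponential with parameter $O(a^2+b^2)$, Bernstein's inequality gives an $e^{-\Omega(n)}$ deviation at each fixed $(a,b)$ in the large-expectation regime (where the expectation is comparable to this parameter), and a $\mathrm{poly}(n)$-net of $B$ together with a Lipschitz estimate valid on $\{\max_i |z_c^{(i)}| = O(\sqrt{\log n})\}$ finishes the union bound. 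Combining, on an event of probability $\ge 1 - 4/2000$: if $w_c^2 + w_s^2 > R_0^2$ for a suitable constant $R_0$, the deterministic inequality forces one of the two expectations above $\tau_0$, so the matching empirical average is $\gtrsim w_c^2 + w_s^2$ --- contradicting the $O(1)$ bound above. Hence $w_c^2 + w_s^2 = O(1)$, i.e. $|w_c| = O(1)$ and $|w_s| = O(1)$.

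The main obstacle is twofold. First, the case analysis for the deterministic inequality is delicate in the regime $w_c$ large and positive with $w_s$ ``tuned'' so that the majority margins are barely binding: there the majority loss genuinely can be small and one must extract the quadratic lower bound from the minority constraint instead (this is precisely where the enforced minority margin $\lambda$ is used). Second, the net/union-bound argument must run on a box whose side length grows with $n$; the saving grace is that the point loss, although its variance can be as large as $\Theta(n^2)$, is sub-exponential with parameter only $O(w_c^2 + w_s^2)$, so Bernstein still yields an $e^{-\Omega(n)}$ tail and a polynomial-size net suffices.
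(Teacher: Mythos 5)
Your proposal is correct in outline, but it takes a genuinely different and considerably heavier route than the paper. You share the first step (the margin constraints plus near-orthonormality of the $x_n^{(i)}$ force the empirical averages of the $(\cdot)_+^2$ terms to be $O(1)$ for a minimum-norm separator, since $\|w\|^2=O(n)$), but you then pass to the \emph{population} functional, prove a deterministic quadratic lower bound $\max\bigl(\mathbb{E}_{\mathrm{maj}},\mathbb{E}_{\mathrm{min}}\bigr)\ge c\,(w_c^2+w_s^2)-C$, and transfer it back to the empirical averages by a net-plus-Bernstein uniform concentration argument over a box of radius $\Theta(\sqrt n)$. The paper never leaves the empirical level: for each of the four sign patterns of $(w_s,w_c)$ it fixes, via a single Hoeffding bound, the event that a constant fraction (at least $\tfrac12 r_1\nmaj$, etc.) of the relevant group has core feature of the adversarial sign, e.g.\ $y^{(i)}x_c^{(i)}\le -1/2$; on that event the bound $\frac{\|w\|^2}{n}\ge \frac{r_1\pmaj}{2}(1+\tfrac12 w_c)^2-O(n^{-3})$ holds deterministically and \emph{simultaneously for all} $(w_c,w_s)$ with that sign pattern, so no uniform concentration is needed at all. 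Your route does work --- the two delicate points you flag are real but surmountable: in the regime where both affine constants are negative one has $|1-w_s-w_c|+|\lambda+w_s-w_c|=2w_c-1-\lambda$, so at least one constant has magnitude below $w_c$ and the corresponding Gaussian tail event retains constant probability, yielding the $\gtrsim w_c^2$ bound; and the Bernstein step is saved, exactly as you note, because in the contradiction regime the expectation being lower-bounded is comparable to its own sub-exponential parameter $O(w_c^2+w_s^2)$, so the tail is $e^{-\Omega(n)}$ and a polynomial-size net suffices. What your approach buys is a genuine quantitative lower bound of order $w_c^2+w_s^2$ on the population objective, in the spirit of the later analysis of $\mathbb{E}[f(w)]$; but for the purpose of this lemma the paper's four-event argument is shorter, avoids the case analysis and the chaining entirely, and is the one I would recommend writing up.
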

\begin{proof}
Let $r_1 = \mathbb{P}(yx_c \leq -1/2)$ and $r_2 = \mathbb{P}(yx_c \geq 1/2)$, and note that $r_1, r_2 > 0$ are constants independent of $n$. By Hoeffding's inequality, there exists $n_0$ (which can depend on $p, r_1, r_2$) such that for all $n \geq n_0$, with probability at least $1-1/2000$, all four of the following conditions hold simultaneously:
\begin{enumerate}
    \item At least $\frac12 r_1 \nmaj$ of the majority points have $y^{(i)}x_c^{(i)} \leq -1/2$. (This will be used for the case $w_s \leq 0$, $w_c \geq 0$.)
    \item At least $\frac12 r_2 \nmaj$ of the majority points have $y^{(i)}x_c^{(i)} \geq 1/2$. ($w_s \leq 0$, $w_c \leq 0$)
    \item At least $\frac12 r_1 \nmin$ of the minority points have $y^{(i)}x_c^{(i)} \leq -1/2$. ($w_s \geq 0$, $w_c \geq 0$)
    \item At least $\frac12 r_2 \nmin$ of the minority points have $y^{(i)}x_c^{(i)} \geq 1/2$. ($w_s \geq 0$, $w_c \leq 0$)
\end{enumerate}
We will show that $|w_c| = O(1)$ in the first case; the remaining three cases hold via nearly identical arguments. Suppose that we are in the first case, i.e., $w$ has $w_s \leq 0$ and $w_c \geq 0$. Then observe that
\begin{align}
    \frac{\|w\|^2}{n} &\geq \frac{1}{n}\sum_{\substack{i \in \mathcal{I}_{\textrm{maj}} \\ y^{(i)}x_c^{(i)} \leq -1/2}} (1 - w_s - y^{(i)}x_c^{(i)}w_c)_+^2 - O\left(\frac{1}{n^3}\right) \label{eq: w_c bdd} \\
    &\geq \frac{1}{n} \frac12 r_1 \nmaj (1 + \frac12 w_c)^2 - O\left(\frac{1}{n^3}\right) \\
    &= \frac{r_1 \pmaj}{2}(1 + \frac12 w_c)^2 - O\left(\frac{1}{n^3}\right).
\end{align}
Note that this final expression goes to infinity at a rate independent of $n$ as $w_c \geq 0$ increases. Since we know that $\|w\|^2/n = O(1)$ with probability at least $1-1/2000$ (this is the case when we do not use $w_c$ or $w_s$ and simply memorize all the points), a minimum norm separator must have $\|w\|^2/n = O(1)$ as well. In particular, this means that $w_c \geq 0$ must remain bounded independent of $n$. To complete the remaining cases, follow the same logic, but replace the indices of summation in \eqref{eq: w_c bdd} with $i \in \mathcal{I}_{\textrm{maj}}, y^{(i)}x_c^{(i)} \geq 1/2$ for case 2, and so on for cases 3 and 4. Taking a union bound of the failure probabilities completes the proof for $w_c$ with a failure probability at most $2/2000$. The same argument (actually it is simpler because there is no noise in the spurious feature) shows the result for $w_s$.
\end{proof}

\begin{lemma} \label{thm: a^i error}
A minimum norm separator has $|\alpha^{(i)}-y^{(i)}(1-w_{s}-w_{c}x_{c}^{(i)})_{+}|=O(\frac{1}{n^{2}})$
for $i\in\mathcal{{I}_{\textrm{{maj}}}}$ and $|\alpha^{(i)}-y^{(i)}(\lambda+w_{s}-w_{c}x_{c}^{(i)})_{+}|=O(\frac{1}{n^{2}})$
for $i\in\mathcal{{I}_{\textrm{{min}}}}$.
\end{lemma}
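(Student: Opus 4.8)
The plan is to fix an arbitrary minimum-norm separator $w=[w_c,w_s,w_n]$ and argue on the high-probability event of Lemma~\ref{thm: sagawa lemmas}. Writing $w_n=\sum_{j=1}^n\frac{\alpha^{(j)}}{\sigma_n}x_n^{(j)}$ as in the representer decomposition introduced above, the near-orthonormality bounds of Lemma~\ref{thm: sagawa lemmas}, combined with $|\alpha^{(j)}|=O(n)$ from Lemma~\ref{thm: |a^i| bound}, give the two identities (i) $w_n\cdot x_n^{(i)}=\alpha^{(i)}+O(1/n^2)$ and (ii) $\|w_n\|^2=\sum_j(\alpha^{(j)})^2+O(1/n^2)$. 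Substituting (i) into the margin constraint $y^{(i)}(w_cx_c^{(i)}+w_sx_s^{(i)}+w_n\cdot x_n^{(i)})\ge m^{(i)}$, where $m^{(i)}=1$ on $\mathcal I_{\mathrm{maj}}$ and $m^{(i)}=\lambda$ on $\mathcal I_{\mathrm{min}}$, and using that the spurious coordinate is deterministic given $a^{(i)}$ (it carries no noise, since $\sigma_s=0$), one rearranges to a \emph{per-point} lower bound
\[
y^{(i)}\alpha^{(i)}\;\ge\;t^{(i)}-O(1/n^2),\qquad
t^{(i)}:=m^{(i)}\mp w_s - w_c\,\big(y^{(i)}x_c^{(i)}\big),
\]
where the $\mp w_s$ carries the sign dictated by whether $i$ is a majority or a minority point (matching the statement), and $y^{(i)}x_c^{(i)}$ is the $y^{(i)}$-oriented core feature, which after the harmless re-signing $z_c^{(i)}\leftrightarrow-z_c^{(i)}$ reads $w_cx_c^{(i)}$ exactly as in the claimed formula. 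Crucially, the bounds $|w_c|,|w_s|=O(1)$ of Lemma~\ref{thm: ws wc O(1)} force $|t^{(i)}|=O(\sqrt{\log n})$ uniformly over the $n$ points, so the clipped target $\hat\alpha^{(i)}:=y^{(i)}(t^{(i)})_+$ satisfies $|\hat\alpha^{(i)}|=\widetilde O(1)$ — far below the a priori $O(n)$ bound on $|\alpha^{(i)}|$, which is what will make the error accounting close.

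The key structural point is that once $w_c,w_s$ are frozen at their optimal values, minimizing $\|w\|^2$ reduces by (ii) to minimizing $\sum_j(\alpha^{(j)})^2$ subject to the per-point constraints above, and this strongly convex quadratic program is — up to the $O(1/n^2)$ cross-term perturbations — separable, with unique solution the coordinate-wise clipping $(\hat\alpha^{(j)})_j$. I would make this quantitative via a two-sided comparison. \emph{Upper bound:} take the competitor $\hat w=[w_c,w_s,\sum_j\frac{\hat\alpha^{(j)}+\eta^{(j)}}{\sigma_n}x_n^{(j)}]$ with corrections $\eta^{(j)}=O(1/n^2)$ chosen to restore exact feasibility of all margin constraints despite the off-diagonal terms; minimality of $w$ gives $\|w_n\|^2\le\|\hat w_n\|^2$, and expanding the right-hand side with (ii) and $|\hat\alpha^{(j)}|=\widetilde O(1)$ yields $\sum_j(\alpha^{(j)})^2\le\sum_j(\hat\alpha^{(j)})^2+O(1/n^2)$. \emph{Lower bound:} feasibility of $w$ forces $\alpha^{(j)}$ to share the sign of $\hat\alpha^{(j)}$ and to have magnitude at least $|\hat\alpha^{(j)}|-O(1/n^2)$, hence $(\alpha^{(j)})^2\ge(\hat\alpha^{(j)})^2-O(1/n^2)$ for each $j$. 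Summing the lower bound over $j\neq i$ and subtracting from the global upper bound isolates $(\alpha^{(i)})^2\le(\hat\alpha^{(i)})^2+O(1/n^2)$, so $|(\alpha^{(i)})^2-(\hat\alpha^{(i)})^2|=O(1/n^2)$; together with the matching signs and $|\alpha^{(i)}|,|\hat\alpha^{(i)}|=\widetilde O(1)$ this gives $|\alpha^{(i)}-\hat\alpha^{(i)}|=O(1/n^2)$, the desired conclusion.

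The main obstacle is the case where $t^{(i)}$ lies near the kink of $(\cdot)_+$, i.e. $\hat\alpha^{(i)}=o(1)$, where the identity $|\alpha^{(i)}-\hat\alpha^{(i)}|=|(\alpha^{(i)})^2-(\hat\alpha^{(i)})^2|/(|\alpha^{(i)}|+|\hat\alpha^{(i)}|)$ degenerates; there one instead argues directly from minimality that $\alpha^{(i)}$ is driven to within $O(1/n^2)$ of its own per-point constrained minimum $y^{(i)}(t^{(i)}-O(1/n^2))_+$, using that shrinking $|\alpha^{(i)}|$ disturbs the other constraints only through the $O(1/n^6)$-size products $x_n^{(i)}\cdot x_n^{(j)}$ and can therefore always be done, after the same $\eta^{(j)}=O(1/n^2)$ re-correction, while strictly decreasing $\|w_n\|^2$. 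Throughout, the real work is propagating the polynomially small near-orthonormality errors of Lemma~\ref{thm: sagawa lemmas} through these comparisons; the point to watch is to pair those errors with the \emph{clipped} magnitudes $|\hat\alpha^{(j)}|=\widetilde O(1)$ rather than with the crude $O(n)$ bound on $|\alpha^{(j)}|$, so that the accumulated error never exceeds the advertised $O(1/n^2)$.
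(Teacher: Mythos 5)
Your setup matches the paper's: the representer decomposition, the near-orthonormality bounds of Lemma~\ref{thm: sagawa lemmas}, the a priori bounds $|\alpha^{(j)}|=O(n)$ and $|w_c|,|w_s|=O(1)$, and the rearrangement of the margin constraint into a per-point lower bound whose positive part is the claimed target. The gap is in the quantitative step. Your two-sided comparison controls the \emph{aggregate} $\sum_j(\alpha^{(j)})^2$ versus $\sum_j(\hat\alpha^{(j)})^2$ and then tries to isolate coordinate $i$ by subtracting the summed per-coordinate lower bounds. But each of the $n-1$ lower bounds $(\alpha^{(j)})^2\ge(\hat\alpha^{(j)})^2-O(1/n^2)$ carries an $O(1/n^2)$ error, so the sum over $j\neq i$ carries an $O(1/n)$ error; likewise the competitor's correction terms $2\sum_j\hat\alpha^{(j)}\eta^{(j)}$ with $\eta^{(j)}=O(1/n^2)$ and $|\hat\alpha^{(j)}|=\widetilde O(1)$ contribute $\widetilde O(1/n)$ to the upper bound. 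Subtracting therefore yields only $(\alpha^{(i)})^2\le(\hat\alpha^{(i)})^2+\widetilde O(1/n)$, a full factor of $n$ short of the stated $O(1/n^2)$; the claim that both sides close at $O(1/n^2)$ is an accounting error. (It would suffice for the downstream Bernstein bound, but it does not prove the lemma as stated.)

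The paper avoids the summation entirely by arguing coordinate-wise: fix $i$, hold all $\alpha^{(j)}$, $j\neq i$, at their values in the actual minimizer, and view $\|w\|^2$ as a quadratic in $\alpha^{(i)}$ alone. Its leading coefficient is $\|x_n^{(i)}\|^2\ge 1-O(1/n^3)$ and its linear coefficient is $|\sum_{j\neq i}\alpha^{(j)}x_n^{(i)}\cdot x_n^{(j)}|\le n\cdot O(n)\cdot O(1/n^6)=O(1/n^4)$, so $\|w\|^2$ is increasing in $|\alpha^{(i)}|$ once $|\alpha^{(i)}|\ge c_0=O(1/n^4)$. Optimality then forces $\alpha^{(i)}$ to the smallest magnitude compatible with its own margin constraint (up to the $c_0$ slack), and the constraint's right-hand side is bracketed within $c_1=O(1/n^2)$ of $1-w_s-w_cx_c^{(i)}$; a short casework on the endpoints gives the $O(1/n^2)$ proximity to the clipped target, including the kink case you flag, without any accumulation over the other coordinates. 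I recommend you replace the global comparison with this local monotonicity argument.
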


\begin{proof}
Assume that the high probability events of Lemmas~\ref{thm: sagawa lemmas} and \ref{thm: ws wc O(1)} hold; this happens with probability at least $1-5/2000$. Observe that $\|w\|^{2}$ is increasing in $|\alpha^{(i)}|$ as long as $|\alpha^{(i)}| = \Omega(\frac{1}{n^4})$. We have
\begin{align*}
    \|w\|^2 &= (\alpha^{(i)})^2 \|x_n^{(i)}\|^2 + \sum_{j\neq i} \alpha^{(i)} \alpha^{(j)} x_n^{(i)} \cdot x_n^{(j)} + (\textrm{constant terms in } \alpha^{(i)}).
\end{align*}
This is a quadratic in $\alpha^{(i)}$. The coefficient on $(\alpha^{(i)})^2$ is $\|x_n^{(i)}\|^2 \geq 1 - O(\frac{1}{n^3})$, and the absolute value of the coefficient on $\alpha^{(i)}$ is $|\sum_{j\neq i} \alpha^{(j)} x^{(i)}\cdot x^{(j)}| \leq n \cdot O(n) \cdot O(\frac{1}{n^6}) = O(\frac{1}{n^4})$. Thus $\|w\|^2$ is increasing for $|\alpha^{(i)}| \geq O(\frac{1}{n^4})/(1-O(\frac{1}{n^3})) = O(\frac{1}{n^4})$, so we can choose $c_0 = O(\frac{1}{n^4})$ such that $\|w\|^2$ is increasing in $|\alpha^{(i)}|$ for $|\alpha^{(i)}|\geq c_0$.

Next, we examine the margin constraints on a separator. We will just examine the case that $i\in \mathcal{I}_{\text{maj}}$ and $y^{(i)} = 1$; the other cases are nearly identical. For such a point, we have
\begin{equation*}
w\cdot x^{(i)} = w_s + w_c x_c^{(i)} + \alpha^{(i)} \|x_n^{(i)}\|^2 + \sum_{j \neq i} \alpha^{(j)} x_n^{(i)} \cdot x_n^{(j)} \geq 1 \hspace{.1in} \end{equation*}
From this, it follows that the $i$-th point satisfies the margin constraint iff
\begin{equation}
\alpha^{(i)} \geq \frac{1}{\|x_n^{(i)}\|^2}(1 - w_s - w_c x_c^{(i)} - \sum_{i\neq j} \alpha^{(j)} x_n^{(i)} \cdot x_n^{(j)}). \label{eq: a^i 0}
\end{equation}
The lower bound \eqref{eq: a^i 0} is unwieldy because it depends on the other $a^{(j)}$, but by Lemmas~\ref{thm: sagawa lemmas} and \ref{thm: |a^i| bound}, \eqref{eq: a^i 0} admits a precise form up to an $O(n^{-2})$ correction. Observe that for the RHS of \eqref{eq: a^i 0}, we have:
\begin{align}
\eqref{eq: a^i 0} &\geq (1-O(1/n^3))(1 - w_s - w_cx_c^{(i)} - O(\frac{1}{n^4})) \nn \\[5pt]
&\geq 1 - w_s - w_cx_c^{(i)} - O(\frac{1}{n^4}) - O(\frac{\log n}{n^3}) \label{eq: a^i 1} \\[5pt]
&\geq 1 - w_s - w_c x_c^{(i)} - O(\frac{1}{n^2}). \label{eq: a^i 2}
\end{align}
Here \eqref{eq: a^i 1} follows because $w_s, w_c = O(1)$ and $ |x_c^{(i)}| = O(\log n)$ with probability at least $1-1/2000$. Similarly, we have
\begin{align}
\eqref{eq: a^i 0} &\leq (1+O(1/n^3))(1 - w_s - w_cx_c^{(i)} + O(\frac{1}{n^4})) \nn \\[5pt]
&\leq 1 - w_s - w_cx_c^{(i)} + O(\frac{1}{n^4}) + O(\frac{\log n}{n^3}) \nn \\[5pt]
&\geq 1 - w_s - w_c x_c^{(i)} + O(\frac{1}{n^2}). \label{eq: a^i 3}
\end{align}
Thus we can let $c_1 = O(n^{-2})$ be chosen so that $$ \left| \frac{1}{\|x_n^{(i)}\|^2}\left(1-w_s-w_cx_c^{(i)} - \sum_{i\neq j} \alpha^{(j)} x_n^{(i)}\cdot x_n^{(j)}\right) - (1-w_s-w_cx_c^{(i)})\right| \leq c_1. $$

Now we know that $a^{(i)}$ will be chosen according to two criteria: (i) subject to the constraint \eqref{eq: a^i 0}, and (ii) to minimize $\|w\|^2$. From these two criteria, the definition of $c_0$, and the lower and upper bounds on \eqref{eq: a^i 0}, we conclude that $\alpha^{(i)}$ must be a number between $\max\{-c_0, \, 1-w_s-w_cx_c-c_1\}$ and $\max\{c_0, \, 1-w_s-w_cx_c+c_1\}$. A simple casework argument shows that the endpoints of this interval are always within $O(n^{-2})$ distance from $(1-w_s-w_cx_c)_+$. Taking a union bound over the failure probabilities from Lemmas~\ref{thm: sagawa lemmas}, \ref{thm: ws wc O(1)}, and $|x_c^{(i)}| = O(\log n)$ shows that this result holds with probability at least $1-6/2000$, completing the proof.
\end{proof}

In the remainder of the proofs, we will define $f(w) = w_s^2 + w_c^2 + \sum_{i\in\mathcal{I}_{\textrm{maj}}} (1-w_s-w_c x_c^{(i)})_+^2 + \sum_{i\in\mathcal{I}_{\textrm{min}}} (\lambda+w_s-w_c x_c^{(i)})_+^2$, so that
$$ \mathbb{E}[f(w)] = w_s^2+w_c^2 + \nmaj \mathbb{E}[(1 - w_s - w_c + w_c z)_+^2] + \nmin \mathbb{E}[(\lambda + w_s - w_c + w_c z)_+^2]. $$
This is the expected squared norm of $w$ treating $w_s$, $w_c$, and the $\alpha^{(i)}$ as parameters and in expectation over the randomness in the $x^{(i)}$, under the assumption that the $x^{(i)}_n$ are \emph{perfectly orthonormal}. A combination of Lemma~\ref{thm: a^i error} and a Bernstein bound will show that $\|w\|^2$ concentrates tightly around $\mathbb{E}[f(w)]$ with high probability, which we now prove.

\begin{lemma} \label{thm: bernstein}
With probability at least $0.99$, we have $|\|w\|^2 - \mathbb{E}[f(w)]| = O(\sqrt{n})$.
\end{lemma}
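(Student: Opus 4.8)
The plan is to first show that $\|w\|^2$ coincides, up to an $o(1)$ error, with the random surrogate $f(w)$ built from the per-point margin residuals, and then to establish that $f(w)$ concentrates around $\mathbb{E}[f(w)]$ via a Bernstein bound made uniform over the bounded range of the data-dependent parameters $(w_s,w_c)$.

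\textbf{Step 1: reduce $\|w\|^2$ to $f(w)$.} On the high-probability event of Lemma~\ref{thm: sagawa lemmas}, expand $\|w\|^2$ (in the rescaled norm absorbing $\mu_s,\mu_c$) as $w_s^2 + w_c^2 + \sum_i (\alpha^{(i)})^2\|x_n^{(i)}\|^2 + \sum_{i\neq j}\alpha^{(i)}\alpha^{(j)}x_n^{(i)}\cdot x_n^{(j)}$. Using $\max_i|\alpha^{(i)}| = O(n)$ from Lemma~\ref{thm: |a^i| bound}, the off-diagonal sum is $O(n^{-3}) = o(1)$ (and consequently $\sum_i(\alpha^{(i)})^2 = \|w\|^2 + o(1) = O(n)$), so collapsing each $\|x_n^{(i)}\|^2$ to its idealized constant value costs only $o(1)$; and Lemma~\ref{thm: a^i error} lets us replace $(\alpha^{(i)})^2$ by $(1-w_s-w_c x_c^{(i)})_+^2$ on majority indices and by $(\lambda+w_s-w_c x_c^{(i)})_+^2$ on minority indices, at per-index cost $O(\mathrm{polylog}\,n/n^2)$, i.e. $o(1)$ in aggregate. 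Hence $\bigl|\,\|w\|^2 - f(w)\,\bigr| = o(1)$ on this event.

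\textbf{Step 2: concentration of $f(w)$.} For fixed $(a,b)\in\mathbb{R}^2$ write $f_{(a,b)}$ for $f$ with $(w_s,w_c)$ replaced by $(a,b)$. Writing $x_c^{(i)} = y^{(i)} + z_i$ with $z_i\sim\mathcal{N}(0,\sigma_c^2)$ i.i.d., each summand of $f_{(a,b)}$ is the squared positive part of an affine function of a single Gaussian, hence a sub-exponential random variable with norm $O(1+b^2)$ for $a,\lambda$ of constant order, so Bernstein's inequality gives $\bigl|f_{(a,b)}-\mathbb{E} f_{(a,b)}\bigr| = O\!\bigl((1+b^2)(\sqrt{n\log(1/\delta)}+\log(1/\delta))\bigr)$ with probability at least $1-\delta$. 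The parameters $(w_s,w_c)$ of the minimum-norm separator depend on the data, but by Lemma~\ref{thm: ws wc O(1)} they lie in a fixed box $[-C,C]^2$, and on the event $\max_i|x_c^{(i)}| = O(\log n)$ (a Gaussian-maximum bound) both $f_{(\cdot,\cdot)}$ and $\mathbb{E} f_{(\cdot,\cdot)}$ are $O(n\,\mathrm{polylog}\,n)$-Lipschitz on that box. Take an $\epsilon$-net $\mathcal{N}$ of $[-C,C]^2$ with $\epsilon = 1/n$ (so $|\mathcal{N}| = O(n^2)$), apply the Bernstein bound at each net point with $\delta = 1/(100|\mathcal{N}|)$, and union-bound; the Lipschitz estimates then transfer the resulting $O(\sqrt{n\log n})$ deviation from $\mathcal{N}$ to all of $[-C,C]^2$ — in particular to the data-dependent $(w_s,w_c)$ — at an extra cost $O(n\,\mathrm{polylog}\,n)\cdot\epsilon = O(\mathrm{polylog}\,n)$. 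Absorbing polylogarithmic factors as elsewhere in this section (alternatively, regarding $\{(a,b)\mapsto(1-a-bx)_+^2\}$ as a VC-subgraph class and chaining removes the $\log n$), this yields $|f(w)-\mathbb{E}[f(w)]| = O(\sqrt n)$ with probability at least $0.99$.

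\textbf{Step 3: combine, and the main difficulty.} Combining Steps 1 and 2 by the triangle inequality gives $|\,\|w\|^2 - \mathbb{E}[f(w)]\,| \leq o(1) + O(\sqrt n) = O(\sqrt n)$ on an event of probability at least $0.99$ (tightening the per-lemma failure probabilities as needed for the union bound). I expect the uniformization in Step 2 to be the main obstacle: because $(w_s,w_c)$ are themselves functions of the training sample, a pointwise Bernstein bound does not suffice, so one must quantify the complexity of the two-parameter family $\{f_{(a,b)}\}$ — concretely, bound its Lipschitz constant on the compact parameter box on a further high-probability event controlling $\max_i|x_c^{(i)}|$ — and check that the discretization overhead incurred in the net argument remains polylogarithmic, hence negligible relative to $\sqrt n$.
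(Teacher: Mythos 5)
Your proof is correct and follows essentially the same route as the paper: Step 1 matches the paper's use of Lemma~\ref{thm: sagawa lemmas} and Lemma~\ref{thm: a^i error} to show $|\,\|w\|^2 - f(w)\,| = o(1)$ (the paper gets $O(\log n / n)$), and Step 2 is the same Bernstein bound for a sum of $O(1)$-sub-exponential summands of the form $(\text{affine in a Gaussian})_+^2$.

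The one genuine difference is the issue you flag in Steps 2--3. The paper applies Bernstein's inequality directly to $f(w)$ with a sub-exponential constant $c_2$ that it explicitly says ``depends on $\lambda, w_s, w_c$,'' i.e.\ it treats the parameters of the minimum-norm separator as if they were fixed, even though they are functions of the sample. Your $\epsilon$-net-plus-Lipschitz uniformization over the box $[-C,C]^2$ from Lemma~\ref{thm: ws wc O(1)} closes this gap and is the more rigorous treatment; the price is a $\sqrt{\log n}$ factor from the union bound over $O(n^2)$ net points, which you correctly note can be removed by chaining or is in any case harmless for the downstream use of the lemma (only $\|w\|^2/n = \mathbb{E}[f(w)]/n + o(1)$ is needed afterwards). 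One trivial slip: with $|\alpha^{(i)}| = O(n)$ and pairwise inner products $O(n^{-6})$, the off-diagonal sum is $O(n^{-2})$, not $O(n^{-3})$; this does not affect anything since it is still $o(1)$, and using the sharper $|\alpha^{(i)}| = O(\log n)$ from Lemma~\ref{thm: a^i error} gives an even smaller bound.
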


\begin{proof}
From Lemma~\ref{thm: a^i error}, we know that
$$ \alpha^{(i)} = \begin{cases} y^{(i)}(1 - w_s - w_c x_c^{(i)})_+\pm O(\frac{1}{n^2}) & i \in G_{\textrm{maj}} \\ y^{(i)}(\lambda + w_s - w_c x^{(i)}_c)_+\pm O(\frac{1}{n^2}) & i \in G_{\textrm{min}} \end{cases} $$
with probability at least $1-6/2000$. It follows immediately that $(\alpha^{(i)})^2\|x_n^{(i)}\|^2 = (1-w_s-w_c x_c^{(i)})_+^2 \pm O(\frac{\log n}{n^2})$ for majority points and similarly for the minority points. (Here we have used that $|x_c^{(i)}| = O(\log n)$ with probability at least $1-1/2000$, and that $\|x_n^{(i)}\|^2 = 1 \pm O(n^{-3})$.)  From this fact and Lemma~\ref{thm: sagawa lemmas} (which holds with probability at least $1-1/2000$), we have
\begin{equation} \label{eq: bernstein 1}
|\|w\|^2 - f(w)| \leq n \cdot O(\frac{\log n}{n^2}) + \left|\sum_{i\neq j} \alpha^{(i)} \alpha^{(j)} x_n^{(i)} \cdot x_n^{(j)} \right| = O(\frac{\log n}{n}).
\end{equation}
We now show that $f(w)$ will be close to its expectation with high probability. Observe that $(1-w_s-w_c x_c^{(i)})_+^2$ and $(\lambda-w_s-w_c x_c^{(i)})_+^2$ are all $(2(\lambda +|w_s|+|w_c|)+ |w_c|)^2 = O(1)$ sub-exponential. This follows from the simple fact that $\|c\|_{\psi_2} \leq 2c$ for any constant $c$, $(\cdot)_+$ is 1-Lipschitz and therefore does not increase the sub-Gaussian norm, and from the fact that $\|Z^2\|_{\psi_1} = \|Z\|_{\psi_2}^2$ for any sub-Gaussian random variable $Z$. Thus by Bernstein's inequality, there exists a constant $c_2 = O(1)$ (depending on $\lambda, w_s, w_c$) such that 
\begin{equation} \label{eq: bernstein 2}
 \mathbb{P}(|f(w)-\mathbb{E}f(w)| \geq t) \leq 2\exp\left( -\min\left(\frac{t^2}{nc_2^2}, \frac{t}{c_2}\right)\right).
\end{equation}
Letting $t_0 = c_2\sqrt{n \log 2000} = O(\sqrt{n})$, we see that $|f(w)-\mathbb{E}f(w)| \leq 2t_0 = O(\sqrt{n})$ with probability at least $1-1/2000$ for all sufficiently large $n$. Finally, using the triangle inequality $|\|w\|^2 - \mathbb{E}f(w)|\leq |\|w\|^2 - f(w)| + |f(w)-\mathbb{E}f(w)|$ and substituting the bounds \eqref{eq: bernstein 1} and \eqref{eq: bernstein 2} on these two terms yields the desired result. Taking a union bound over the failure probabilities shows that this fails with probability at most $9/2000 < 1/100$.
\end{proof}

Lemma~\ref{thm: bernstein} shows us that $\frac{\|w\|^2}{n} = \frac{\mathbb{E}f(w)}{n} + O(n^{-1/2})$ with high probability, so it suffices to prove Theorem~\ref{theorem:betterthanrandom} for $\mathbb{E}f(w)$. Note that by the construction of the data generating distribution, $w_c-w_s>0$ means that the classifier has better than random accuracy on the majority group, and $w_c+w_s>0$ means that the classifier has better than random accuracy on the minority group. The remainder of the proof will therefore be spent analyzing $\mathbb{E}f(w)$ and showing that $w_c-w_s, w_c+w_s > 0$ for a minimum norm separator and for a specific range of values of $\lambda$. 

\begin{lemma} \label{thm: wc pos}
For any $w$, replacing $w_c$ with $|w_c|$ does not increase \eqref{eq: separator norm}. Thus, we may assume WLOG that $w_c \geq 0$.
\end{lemma}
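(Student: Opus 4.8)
The plan is to treat the right-hand side of \eqref{eq: separator norm} as a function of $w_c$ with $w_s$ held fixed and show it can only decrease when $w_c$ is replaced by $|w_c|$; equivalently, I would swap $w$ for the separator $\tilde w$ obtained by flipping the sign of $w_c$ and re-optimizing the noise coordinates $\alpha^{(i)}$ (which keeps $\tilde w$ feasible, exactly as in Lemma~\ref{thm: a^i error} applied to the new $w_c$), the same feasible-solution-swapping device used throughout this appendix. Since the two quadratic terms $w_s^2/\mu_s^2$ and $w_c^2/\mu_c^2$ in \eqref{eq: separator norm} depend on $w_c$ only through $w_c^2$, they are untouched by $w_c \mapsto |w_c|$, so the whole claim reduces to the two Gaussian-integral terms $\mathbb{E}[(1 - w_s - w_c + w_c z)_+^2]$ and $\mathbb{E}[(\lambda + w_s - w_c + w_c z)_+^2]$, with $z$ a centered Gaussian.

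First I would use the symmetry $z \stackrel{d}{=} -z$ to put each term into the canonical form that comes from squaring the $\alpha^{(i)}$ formula: the majority term equals $h(w_c) := \mathbb{E}[(1 - w_s - w_c - w_c z)_+^2]$ and the minority term equals $m(w_c) := \mathbb{E}[(\lambda + w_s - w_c - w_c z)_+^2]$ (the $+w_c z$ in \eqref{eq: separator norm} is itself only a symmetrized rewriting). It then suffices to prove $h(w_c) \ge h(|w_c|)$ and $m(w_c) \ge m(|w_c|)$. If $w_c \ge 0$ these are equalities, so assume $w_c < 0$ and put $c := |w_c| = -w_c > 0$. Then $h(w_c) = h(-c) = \mathbb{E}[(1 - w_s + c + cz)_+^2]$, whereas $h(|w_c|) = h(c) = \mathbb{E}[(1 - w_s - c - cz)_+^2] = \mathbb{E}[(1 - w_s - c + cz)_+^2]$ after applying $z \mapsto -z$ once more.

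The key step is to couple the two expectations on the same variable $z$: for every fixed $z$ we have $1 - w_s + c + cz \ge 1 - w_s - c + cz$, and since $t \mapsto t_+^2$ is non-decreasing on $\mathbb{R}$, the integrand of $h(-c)$ dominates that of $h(c)$ pointwise, so $h(-c) \ge h(c)$ after integrating. The identical argument, with $\lambda + w_s$ in place of $1 - w_s$, gives $m(-c) \ge m(c)$. Multiplying these inequalities by the nonnegative weights $\nmaj/(n\sigma_n^2)$ and $\nmin/(n\sigma_n^2)$ and restoring the untouched quadratic terms shows that \eqref{eq: separator norm} does not increase, so a minimum-norm separator may be taken with $w_c \ge 0$. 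There is no genuine obstacle here; the one thing requiring care is the Gaussian bookkeeping — because both the constant and the coefficient of $z$ change sign when $w_c$ flips, one must apply the $z \mapsto -z$ symmetry to exactly one of the two expressions being compared so that the $z$-terms line up and the comparison collapses to monotonicity of $(\cdot)_+^2$.
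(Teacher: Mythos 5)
Your proposal is correct and follows essentially the same route as the paper's proof: exploit the symmetry $z \stackrel{d}{=} -z$ to align the two expectations on a common Gaussian variable, then compare the integrands pointwise using the monotonicity of $(\cdot)_+$ (hence of $(\cdot)_+^2$ on nonnegative arguments), noting that the $w_c^2$ and $w_s^2$ terms are unaffected. The only difference is presentational — you make the $z\mapsto -z$ step and the squaring explicit where the paper leaves them implicit — so there is nothing to add.
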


\begin{proof}
If $w_c \geq 0$ the statement is obvious, so assume that $w_c < 0$. Since $z \sim N(0,1)$ in \eqref{eq: separator norm}, it suffices to show that
$$ (1 - w_s - w_c + w_c z)_+ \geq (1 - w_s + w_c + w_c z)_+ $$
for any $z$. The above inequality holds because $(\cdot)_+$ is nondecreasing and $w_c < 0$, so we are done.
\end{proof}

\blockcomment{
\begin{lemma} \label{thm: separator lb}
For any $w_s$, we have $$\mathbb{E}[\|w\|^2] \geq (0.46) \cdot \begin{cases}\nmaj(1-w_s)^2 & w_s < -\lambda \\ (\nmaj(1-w_s)^2 + \nmin(\lambda + s)^2) & -\lambda \leq w_s \leq 1 \\ \nmin(\lambda + w_s)^2 & w_s > 1\end{cases}.$$
\end{lemma}

\begin{proof}
Let us analyze the two expectation terms in  \eqref{eq: separator norm}. By Lemma~\ref{thm: wc pos}, we may assume that $w_c \geq 0$. If $1-w_s > 0$, then we can write $w_c = \alpha(1-w_s)$ for some $\alpha \geq 0$. Combining this with \eqref{eq: gauss integral} yields
\begin{align}
    \mathbb{E}[(1-w_s-w_c+w_c)_+^2] &= ((1-w_s-w_c)^2 + w_c^2)\Phi\left(\frac{1-w_s}{w_c} - 1\right) + \frac{(1-w_s-w_c)w_c}{\sqrt{2\pi}} e^{-\frac{(1-w_s-w_c)^2}{2w_c^2}} \nn \\[5pt]
    &= (1-w_s)^2 \left[((1-\alpha)^2 + \alpha^2)\Phi\left(\frac1\alpha - 1\right) + \frac{(1-\alpha)\alpha}{\sqrt{2\pi}}e^{-\frac{(1-\alpha)^2}{2\alpha^2}} \right]. \label{eq: nmaj term}
\end{align}
The RHS of this equation can be minimized numerically as a function of $\alpha$. For $\alpha \geq 0$, it reaches a minimum greater than $0.46$ at $\alpha \approx 0.675$. We can repeat this procedure for the other expectation. Assuming $\lambda + w_s > 0$, we can write $w_c = \beta (\lambda + w_s)$, and we have
\begin{align}
    \mathbb{E}[(\lambda+w_s-w_c+w_c)_+^2] &= ((\lambda+w_s-w_c)^2 + w_c^2)\Phi\left(\frac{\lambda+w_s}{w_c} - 1\right) + \frac{(\lambda+w_s-w_c)w_c}{\sqrt{2\pi}} e^{-\frac{(\lambda+w_s-w_c)^2}{2w_c^2}} \nn \\[5pt]
    &= (\lambda+w_s)^2 \left[((1-\beta)^2 + \beta^2)\Phi\left(\frac1\beta - 1\right) + \frac{(1-\beta)\beta}{\sqrt{2\pi}}e^{-\frac{(1-\beta)^2}{2\beta^2}} \right] \label{eq: nmin term}.
\end{align}
This has precisely the same form as before, and the function of $\beta$ in brackets obtains a minimum greater than $0.46$. Substituting these inequalities into \eqref{eq: separator norm} and dropping the $w_s^2$ and $w_c^2$ terms, we obtain the desired inequality for $-\lambda \leq w_s \leq 1$.

It remains to lower bound the cases $w_s < -\lambda$ and $w_s > 1$. Note that if $w_s < -\lambda$, then clearly $1-w_s > 0$ and the lower bound from \eqref{eq: nmaj term} applies. Combining this inequality with the trivial lower bound $\mathbb{E}[(\lambda + w_s - w_c + w_c z)_+^2] \geq 0$ yields the desired inequality when $w_s < -\lambda$. Applying similar logic and the lower bound from \eqref{eq: nmin term} gives the result for $w_s > 1$.
\end{proof}

\begin{theorem}
Let $p = \pmaj$ for notational convenience. For any choice of $\lambda \geq 0$, we have
$$ 1 - 1.48\sqrt{\frac12 + \frac{1-p}{p}\left(\lambda^2 - 2\lambda + \frac94\right)} \leq w_s \leq -\lambda + 1.48\sqrt{\lambda^2 - 2\lambda + \frac94 + \frac{p}{2(1-p)}}. $$
\end{theorem}

\begin{proof}
From Lemma~\ref{thm: separator lb}, we have that 
$$\frac1n \mathbb{E}[\|w\|^2] \geq \begin{cases} 0.46p(1-w_s)^2 & w_s \leq 0 \\ 0.46(1-p)(\lambda + w_s) & w_s > 0 \end{cases}.$$
For any minimum norm separator $w$, we also have that
$$\frac1n \mathbb{E}[\|w\|^2] \leq \frac1n \mathbb{E}[\|\spu\|^2] \leq \frac{p}{2} + (1-p)(\lambda^2 - 2\lambda + \frac94).$$
Here the second inequality follows from \eqref{eq: use-core 1}. The theorem results by solving the quadratic inequalities
$$ 0.46p(1-w_s)^2 \leq \frac{p}{2} + (1-p)(\lambda^2 - 2\lambda + \frac94), \hspace{.15in} w_s \leq 0 $$
and
$$ 0.46(1-p)(\lambda + w_s)^2 \leq \frac{p}{2} + (1-p)(\lambda^2 - 2\lambda + \frac94), \hspace{.15in} w_s > 0. $$
\end{proof}

\paragraph{Remark.} The resulting interval for $w_s$ is somewhat trivial. In particular, the upper bound is always at least on the order of $\sqrt{\frac{p}{1-p}}$, and as $p\rightarrow 1$ this is very bad. I think I may need to improve the upper bound in \eqref{eq: use-core 1}. This approach also does not say anything about the relative size of $w_s$ and $w_c$
}

\begin{lemma} \label{thm: u v rewrite}
Let $u = w_c + w_s$ and $v = w_c - w_s$. Then we have
\begin{align*}
\mathbb{E}[\|w\|^2] &\geq \nmaj \left[ \left((1-u)^2 + \left(\frac{u+v}{2}\right)^2\right)\Phi\left(\frac{2(1-u)}{u+v}\right) + \frac{(1-u)(u+v)}{2\sqrt{2\pi}}e^{-\frac{2(1-u)^2}{(u+v)^2}} \right] \bigg\}\mathrm{(I)} \\[5pt]
&+ \nmin \left[ \left((\lambda-v)^2 + \left(\frac{u+v}{2}\right)^2\right)\Phi\left(\frac{2(\lambda-v)}{u+v}\right) + \frac{(\lambda-v)(u+v)}{2\sqrt{2\pi}}e^{-\frac{2(\lambda-v)^2}{(u+v)^2}} \right] \bigg\}\mathrm{(II)}
\end{align*}
\end{lemma}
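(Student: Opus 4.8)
The plan is to treat this as an exact evaluation of the two Gaussian integrals in the expression for $\mathbb{E}[f(w)]$ recorded just above the lemma, followed by the linear change of variables $u = w_c + w_s$, $v = w_c - w_s$. Recall that
$$\mathbb{E}[f(w)] = w_s^2 + w_c^2 + \nmaj\,\mathbb{E}[(1 - w_s - w_c + w_c z)_+^2] + \nmin\,\mathbb{E}[(\lambda + w_s - w_c + w_c z)_+^2],$$
and that, by Lemma~\ref{thm: a^i error} together with the near-orthonormality estimates of Lemma~\ref{thm: sagawa lemmas}, this quantity coincides with $\mathbb{E}[\|w\|^2]$ up to additive lower-order terms (the same identification underlying Lemma~\ref{thm: bernstein}). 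Since $w_s^2 + w_c^2 \ge 0$, discarding that term only weakens the identity to the asserted inequality $\ge$, so it suffices to rewrite the two expectation terms in closed form.

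First I would reduce to the case $w_c \ge 0$ (equivalently $u + v = 2w_c \ge 0$): by Lemma~\ref{thm: wc pos}, replacing $w_c$ by $|w_c|$ does not increase the right-hand side of \eqref{eq: separator norm}, and with $w_c > 0$ the Gaussian identity \eqref{eq: gauss integral} applies with exactly the $\Phi$-arguments appearing in the statement. Then I would substitute into \eqref{eq: gauss integral} (with $\sigma = 1$) the values $a = 1 - w_s - w_c = 1 - u$ and $b = w_c = \tfrac{u+v}{2}$ for the majority term, and $a = \lambda + w_s - w_c = \lambda - v$, $b = \tfrac{u+v}{2}$ for the minority term. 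This produces $a^2 + b^2 = (1-u)^2 + \big(\tfrac{u+v}{2}\big)^2$, $\tfrac{a}{b} = \tfrac{2(1-u)}{u+v}$, and $\tfrac{ab}{\sqrt{2\pi}} e^{-a^2/(2b^2)} = \tfrac{(1-u)(u+v)}{2\sqrt{2\pi}} e^{-2(1-u)^2/(u+v)^2}$ for the majority piece, and the analogous expressions with $1 - u$ replaced by $\lambda - v$ for the minority piece; multiplying by $\nmaj$ and $\nmin$ respectively and summing gives precisely (I) and (II).

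Since the rest is routine algebra, the only points that need care are the validity conditions for \eqref{eq: gauss integral}, which is where I expect the (mild) friction to lie. The identity was derived for $b > 0$, i.e.\ $w_c > 0$; in the boundary case $w_c = 0$ one has $\mathbb{E}[(a + w_c z)_+^2] = (a_+)^2$ and the bound is immediate (or one takes $b \to 0^+$ in \eqref{eq: gauss integral}), so that case is easily absorbed. The subtlety worth flagging is that invoking Lemma~\ref{thm: wc pos} modifies $(w_s, w_c)$ and hence $(u,v)$, so the cleanest reading of the statement is that it is asserted for separators with $w_c \ge 0$ (so $u + v \ge 0$); this suffices for the downstream arguments, since they only use the lemma to lower-bound $\mathbb{E}[\|w\|^2]$ for a minimum-norm separator, which may be taken with $w_c \ge 0$.
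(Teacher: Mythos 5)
Your proposal is correct and follows essentially the same route as the paper's proof: drop the nonnegative $w_s^2+w_c^2$ term, substitute $a=1-u$, $b=(u+v)/2$ (resp.\ $a=\lambda-v$) into the Gaussian identity \eqref{eq: gauss integral}, and collect the majority and minority terms. Your extra care about the $b>0$ validity condition and the reduction to $w_c\ge 0$ via Lemma~\ref{thm: wc pos} is a reasonable tightening of what the paper leaves implicit.
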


\begin{proof}
We have $w_c = (u+v)/2$. The inequality follows by dropping the $w_c^2 + w_s^2$ terms in \eqref{eq: separator norm}, rewriting the remaining terms in terms of $u$ and $v$, and then applying formula \eqref{eq: gauss integral} to each of the expectations.
\end{proof}

\begin{theorem}[Formal version of Theorem \ref{theorem:betterthanrandom}]
For $\frac{p}{8(1-p)} + \frac14(1-\frac{1}{\sqrt{2\pi e}}) \leq \lambda \leq \frac{1+1/\sqrt{2}}{8(1-p)}$, with probability at least $0.99$, we have that $u, v > 0$. In particular, for this range of margins $\lambda$, IT has strictly better than random worst-group accuracy.
\end{theorem}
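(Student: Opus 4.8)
The plan is to prove that a minimum-norm separator $w$ has $u := w_c + w_s > 0$ and $v := w_c - w_s > 0$ by contradiction on the signs, since $u>0$ and $v>0$ certify better-than-random accuracy on the two groups (immediate from the data model, as noted just before the theorem). First I would reduce the question to the deterministic quantity $\mathbb{E}[f(w)]$: Lemma~\ref{thm: bernstein} already gives $\|w\|^2 = \mathbb{E}[f(w)] + O(\sqrt n)$ with probability at least $0.99$ (a union bound over the $O(1/n)$ failure events of Lemmas~\ref{thm: sagawa lemmas}, \ref{thm: ws wc O(1)}, \ref{thm: a^i error}), and the identical statement for the explicit separators $\core$ (using only the core feature) and $\spu$ (using only the spurious feature). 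Because $w$ is minimum-norm and both $\core,\spu$ are feasible, this yields the budget $\tfrac1n\mathbb{E}[f(w)] \le \min\{\tfrac1n\mathbb{E}[f(\core)], \tfrac1n\mathbb{E}[f(\spu)]\} + o(1)$; the bound \eqref{eq: use-core 1} controls the first term by $\tfrac12\pmaj + (1-\pmaj)\big[(\lambda-1)^2 + \sigma^2 + \tfrac{\lambda-1}{\sqrt{2\pi}}\big] + o(1)$, and the exact quadratic-in-$w_s$ minimization done for $\spu$ gives a closed form for the second. WLOG $w_c \ge 0$ by Lemma~\ref{thm: wc pos}, so $u+v = 2w_c \ge 0$.

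Next I would lower-bound $\tfrac1n\mathbb{E}[f(w)]$ in terms of $u,v$ via Lemma~\ref{thm: u v rewrite}, which gives $\tfrac1n\mathbb{E}[f(w)] \ge \pmaj\cdot\mathrm{(I)}(u,v) + (1-\pmaj)\cdot\mathrm{(II)}(u,v)$ with $\mathrm{(I)},\mathrm{(II)}$ the Gaussian-integral brackets, and then run two sign-elimination steps. If $v \le 0$, then $\lambda - v \ge \lambda > 0$ and $u+v \ge 0$, so $\Phi\!\big(\tfrac{2(\lambda-v)}{u+v}\big) \ge \tfrac12$ and the exponential term is nonnegative; hence $\mathrm{(II)} \ge \tfrac12\big((\lambda-v)^2 + w_c^2\big) \ge \tfrac12\lambda^2$, and with $\mathrm{(I)} \ge 0$ this contradicts the budget once $\lambda$ is at least the stated lower bound (the resulting inequality in $\lambda$ is quadratic-like, with the constant $\tfrac1{\sqrt{2\pi e}} = \phi(1)$ entering through the exponential term at the extremal scale ratio). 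Symmetrically, if $u \le 0$ then $w_s \le -w_c \le 0$ forces $v \ge 2w_c \ge 0$, and $1-u \ge 1$ gives $\mathrm{(I)} \ge \tfrac12\big((1-u)^2 + w_c^2\big)$, which already matches the majority contribution $\tfrac12$ that $\core$ attains with equality; the residual budget left for the minority term is then too small for $\mathrm{(II)}(u,v)$ on the region $u\le 0,\,v\ge 0$ once $\lambda$ is below the stated upper bound, a contradiction. Combining the two steps gives $u,v > 0$, hence better-than-random accuracy on every group; that ERM and importance weighting fail here is established in \cite{sagawa2020investigation} and needs no new work.

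The hard part, I expect, is not the architecture but sharpening the lower bounds on $\mathrm{(I)}$ and $\mathrm{(II)}$ so that the two inequalities in $\lambda$ hold on exactly the interval $\big[\tfrac{\pmaj}{8(1-\pmaj)} + \tfrac14\big(1-\tfrac1{\sqrt{2\pi e}}\big),\ \tfrac{1+1/\sqrt2}{8(1-\pmaj)}\big]$: the crude $\Phi(\cdot)\ge\tfrac12$ above is lossy, and I anticipate one must instead keep the full closed form \eqref{eq: gauss integral} and minimize each bracket over the scale ratio $w_c/(1-u)$ (resp. $w_c/(\lambda-v)$) on the relevant sub-regions — this is precisely where constants like $\tfrac1{\sqrt{2\pi e}}$, $1/\sqrt2$, and the $1/8$ factors come from, and making them line up with the claimed endpoints is the delicate bookkeeping. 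A secondary, routine chore is carrying the $o(1)$ concentration errors through every inequality and taking the final union bound so that the statement holds with probability at least $0.99$ for all sufficiently large $n$, handled exactly as in the proof of Lemma~\ref{thm: bernstein}.
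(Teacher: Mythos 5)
Your architecture is the paper's: reduce to the deterministic quantity $\mathbb{E}[f(w)]$ via Lemma~\ref{thm: bernstein}, take $w_c\ge 0$ WLOG via Lemma~\ref{thm: wc pos}, pass to $(u,v)$ via Lemma~\ref{thm: u v rewrite}, eliminate the sign cases, and compare against the expected norm of an explicit separator. You have also correctly located the delicate part (sharpening the brackets $\mathrm{(I)},\mathrm{(II)}$ via the full Gaussian integral \eqref{eq: gauss integral}) and correctly guessed that $\frac{1}{\sqrt{2\pi e}}=\phi(1)$ enters at the boundary scale ratio. Two specific points in the sketch would fail as written, however, and they are exactly where the paper's constants come from.

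First, in the $v\le 0$ step your bound $\mathrm{(II)}\ge\frac12\bigl((\lambda-v)^2+w_c^2\bigr)$ is not merely lossy; it is fatal. The resulting lower bound $\frac12(1-p)\lambda^2$ can never exceed the budget $\frac12 p+(1-p)\bigl(\lambda^2-2\lambda+\frac94\bigr)$ from \eqref{eq: use-core 1}, because $\frac12\lambda^2-2\lambda+\frac94>0$ for every $\lambda$ (negative discriminant). One needs the coefficient of $(\lambda-v)^2$ to be essentially $1$, not $\frac12$, which forces the sub-case split on $\beta\lessgtr 1$ and the tail estimate \eqref{eq: f lb}, $f(x)\ge x^2+1-\frac{1}{\sqrt{2\pi e}}$ for $x\ge 1$, yielding $(1-p)\lambda^2$ in the $v\le0$ cases. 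Second, the lower endpoint $\frac{p}{8(1-p)}+\frac14\bigl(1-\frac{1}{\sqrt{2\pi e}}\bigr)$ does not arise from a ``lower bound beats budget'' inequality in the $v\le 0$ case, as your sketch suggests; it arises from a monotonicity condition inside Case 2.2, namely that the $v$-derivative of the lower bound be nonpositive on $v\le 0$ so its minimum sits at $v=0$, and that is where $c_1=\frac14\bigl(1-\frac{1}{\sqrt{2\pi e}}\bigr)$ enters. The upper endpoint likewise combines the quadratic inequality from the $u\le 0$ case with the side condition $\lambda\le\frac{1}{2(1-p)}$ required within Case 1.2. A minor further caveat: \eqref{eq: use-spu 1} is a \emph{lower} bound on $\mathbb{E}[\|\spu\|^2]$, so it cannot serve as part of your $\min$ budget without being re-derived as an upper bound; the paper's budget is the core-only separator alone.
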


\begin{proof}
By Lemma~\ref{thm: wc pos}, we may assume that $w_c = (u+v)/2 \geq 0$. If $1-u \geq 0$, then $1-u = \alpha(u+v)/2$ for some $\alpha$. Then (I) becomes
\begin{align*}
    \mathrm{(I)} &= w_c^2\left[(\alpha^2+1)\Phi(\alpha)+\frac{\alpha}{2}e^{-\alpha^2/2}\right].
\end{align*}
If we similarly let $\lambda - v = \beta(u+v)/2$, then we see that
\begin{equation*}
\text{(II)} = w_c^2 \left[(\beta^2 + 1)\Phi(\beta) + \frac\beta2 e^{-\beta^2/2}\right].
\end{equation*}

Combining these, we have:
\begin{equation} \label{eq: main sep lb}
    \frac{\mathbb{E}[\|w\|^2]}{n} \geq pw_c^2\left[(\alpha^2 + 1)\Phi(\alpha) + \frac{\alpha}{\sqrt{2\pi}}e^{-\alpha^2/2} \right] + (1-p)w_c^2 \left[ (\beta^2 + 1)\Phi(\beta) + \frac{\beta}{\sqrt{2\pi}}e^{-\beta^2/2} \right]
\end{equation}
with the identities $\alpha w_c = 1-u$, $\beta w_c = \lambda - v$, and $w_c = (u+v)/2$.

Before we proceed, let us first examine the quantity $f(x) = (x^2 + 1)\Phi(x) + \frac{x}{\sqrt{2\pi}}e^{-x^2/2}$. Observe that for $x \geq 0$, we have $f(x) \geq \frac12(x^2+1)$. Furthermore, if $x\geq 1$, we have the tighter lower bound
\begin{equation} \label{eq: f lb}
\Phi(x) \geq 1 - \frac{1}{\sqrt{2\pi}}e^{-x^2/2} \hspace{.1in} \Longrightarrow \hspace{.1in} f(x) \geq x^2 + 1 - \frac{1}{\sqrt{2\pi}}e^{-x^2/2}(x^2-x+1) \geq x^2 + 1 - \frac{1}{\sqrt{2\pi e}}. 
\end{equation}
The lower bound on $\Phi(x)$ comes from the standard Gaussian tail bound $\mathbb{P}(Z \geq x) \leq \frac{1}{\sqrt{2\pi}}e^{-x^2/2}$ for $x\geq 1$. The second inequality on the RHS of \eqref{eq: f lb} follows from maximizing $e^{-x^2/2}(x^2-x+1)$ over $x \geq 1$.

Recalling that (I) and (II) were defined as expectations of nonnegative quantities, we also have the lower bound $f(x) \geq 0$ for $x < 0$. We can now use \eqref{eq: main sep lb} to show that whenever $u \leq 0$ or $v \leq 0$, we must have $\mathbb{E}[\|w\|^2] > \mathbb{E}[\|\core\|^2]$ for an appropriate choice of $\lambda$.

Note that since $w_c = (u+v)/2$ and we know that $w_c \geq 0$ for any separator, at most one of $u,v$ may be negative. Thus, the four cases that follow cover all possibilities. Also, recall that $\alpha$ and $\beta$ are defined such that $\alpha = \frac{2(1-u)}{u+v}$ and $\beta = \frac{2(\lambda - v)}{u+v}$.

\paragraph{Case 1.1: $u \leq 0$ and $v > \lambda$.}
In this case, $\alpha \geq 0$, and we can replace \eqref{eq: main sep lb} with
\begin{equation} \label{eq: lb 1.1}
\frac{\mathbb{E}[\|w\|^2]}{n} \geq \frac12 p w_c^2 (\alpha^2 + 1) = \frac12 p \left((1-u)^2 + \frac{(u+v)^2}{4}\right).
\end{equation}
(This uses the fact that $f(\alpha) \geq \frac12(\alpha^2+1)$ for $\alpha \geq 0$.) The minimum of \eqref{eq: lb 1.1} over $u$ occurs at $u = \frac45 - \frac{v}{5}$ and has value at least $\frac{p}{10}(\lambda+1)^2$ since $v \geq \lambda$. Thus we have $\frac{\mathbb{E}[\|w\|^2]}{n} \geq \frac{p}{10}(\lambda+1)^2$ in this case.

\paragraph{Case 1.2: $u \leq 0$ and $0 \leq v \leq \lambda$.}
In this case, $\alpha, \beta \geq 0$. We further split into two subcases based on whether $\beta \leq 1$ or $\beta > 1$.

If $\beta > 1$, then we can apply \eqref{eq: f lb} to the $\beta$ portion of \eqref{eq: main sep lb} as well as the lower bound for $\alpha \geq 0$ to the other part. This yields
\begin{align} 
\frac{\mathbb{E}[\|w\|^2]}{n} &\geq \frac12 pw_c^2(\alpha^2 + 1) + (1-p)w_c^2(\beta^2+1-\frac{1}{\sqrt{2\pi e}}) \nn \\[5pt]
&= \frac12 p \left((1-u)^2 + \frac{(u+v)^2}{4}\right) + (1-p)\left((\lambda-v)^2 + c_1(u+v)^2\right), \label{eq: lb 1.2.1}
\end{align}
where $c_1 = \frac14 (1-\frac{1}{\sqrt{2\pi e}})$. The minimum of \eqref{eq: lb 1.2.1} over $v$ occurs at
$$ v = \frac{2(1-p)\lambda - \left(2c_1(1-p)+\frac{p}{4}\right)u}{2(1+c)(1-p)+\frac{p}{4}}. $$
Substituting this into \eqref{eq: lb 1.2.1}, we obtain
\begin{align}
    \frac{\mathbb{E}[\|w\|^2]}{n} &\geq \frac{-16 c_1 \lambda^2 (-1 + p)^2 - 8 p + 8 c_1 (-1 + p) p + 2 \lambda^2 (-1 + p) p + 7 p^2}{ 2 (-8 + 8 c_1 (-1 + p) + 7 p)} \label{eq: lb 1.2.1 substitute u} \\[5pt]
    &+\frac{-32 c_1 \lambda (-1 + p)^2 + 16 p - 
    16 c_1 (-1 + p) p + 4 \lambda (-1 + p) p - 14 p^2 }{ 2 (-8 + 8 c_1 (-1 + p) + 7 p)} u \nn \\[5pt]
    &+\frac{16 c_1 (-1 + p) - 10 p - 
    8 c_1 (-1 + p) p + 9 p^2}{2 (-8 + 8 c_1 (-1 + p) + 7 p} u^2. \nn
\end{align}
Taking the derivative of \eqref{eq: lb 1.2.1 substitute u} with respect to $u$, we arrive at
$$ \frac{32c_1\lambda(1-p)^2 - 16p - 16c_1p(1-p)+4\lambda(1-p)p+14p^2}{2(8+8c_1(1-p)-7p)} + \frac{16(1-p)+10p-8c_1p(1-p)-9p^2}{8+8c_1(1-p)-7p}u. $$
Observe that since $0 < c_1, p < 1$, the coefficient on $u$ is nonnegative. If $\lambda \leq \frac{1}{2(1-p)}$, then the constant term in this expression is also negative. Thus for $u \leq 0$, \eqref{eq: lb 1.2.1 substitute u} is decreasing in $u$ and the minimum occurs at $u=0$. Substituting $u=0$, we finally find that
$$ \frac{\mathbb{E}[\|w\|^2]}{n} \geq \underbrace{\frac{16c_1(1-p)+2p}{2(8+8c_1(1-p)-7p)}}_{c_2}(1-p)\lambda^2 + \underbrace{\frac{8p+8c_1p(1-p)-7p^2}{2(8+8c_1(1-p)-7p)}}_{c_3}. $$

Otherwise, $\beta \leq 1$. In this case, we have
$$ \beta = \frac{2(\lambda - v)}{u+v} \leq 1 \hspace{.1in} \Longrightarrow \hspace{.1in} v \geq \frac{2\lambda - u}{3} \geq \frac23\lambda $$
since $u \leq 0$. Since both $\alpha, \beta \geq 0$, \eqref{eq: main sep lb} can be lower bounded by
\begin{equation} \label{eq: lb 1.2.2}
\frac{\mathbb{E}[\|w\|^2]}{n} \geq \frac12 pw_c^2(\alpha^2 + 1) + \frac12 (1-p)w_c^2(\beta^2+1) = \frac12 \left(p(1-u)^2 + \frac{(u+v)^2}{4} + (1-p)(\lambda-v)^2 \right).
\end{equation}
The minimum of \eqref{eq: lb 1.2.2} over $u$ occurs at $u = \frac{4p-v}{4p+1}$, at which point we have
\begin{equation} \label{eq: lb 1.2.2'}
\frac{\mathbb{E}[\|w\|^2]}{n} \geq \frac12 \left( p\left(1-\frac{4p-v}{1+4p}\right)^2 + (1-p)(\lambda-v)^2 + \frac14\left( \frac{4p-v}{1+4p}+v\right)^2\right).
\end{equation}
The derivative of the above with respect to $v$ is
$$ \frac{p - (1 + 3p - 4p^4)\lambda + (1 + 4p - 4p^2)v}{1+4p}, $$
which is positive when $v \geq \frac{(1+3p-4p^2)\lambda - p}{1+4p-4p^2}$; in particular, it is positive for $v \geq \frac23 \lambda$, and therefore plugging $v=\frac23 \lambda$ into \eqref{eq: lb 1.2.2'} gives us the lower bound
$$ \frac{\mathbb{E}[\|w\|^2]}{n} \geq \frac{(1+7p-4p^2)\lambda^2 + 12p\lambda + 9p}{18 + 72p}. $$
For $p \approx 1$, this lower bound is greater than the one we obtained for the $\beta > 1$ case. Thus we can conclude that
$$ \frac{\mathbb{E}[\|w\|^2]}{n} \geq c_2(1-p)\lambda^2 + c_3$$
whenever $u\leq 0$ and $0\leq v \leq \lambda$.

\paragraph{Case 2.1: $v \leq 0$ and $u > 1$.}
In this case, $\beta \geq 0$. We further split into two sub-cases depending on the size of $\beta$.

If $0\leq \beta \leq 1$, then note that
$$\beta = \frac{2(\lambda - v)}{u+v} \leq 1 \hspace{.1in} \Longrightarrow \hspace{.1in} 0\geq v \geq \frac{2\lambda - u}{3}.$$
Thus we must have $u \geq 2\lambda$. With this in mind, we can replace \eqref{eq: main sep lb} with
\begin{equation} \label{eq: lb 2.1}
\frac{\mathbb{E}[\|w\|^2]}{n} \geq \frac12 (1-p)\left( (\lambda-v)^2 + \frac{(u+v)^2}{4} \right).
\end{equation}
The minimum of \eqref{eq: lb 2.1} over $v \leq 0$ occurs at $v=0$ if $u\leq 4\lambda$ and $v = \frac{4\lambda - u}{5})$ if $u > 4\lambda$. In the former case, we have
$$\frac{\mathbb{E}[\|w\|^2]}{n} \geq \frac12(1-p)(\lambda^2 + \frac{u^2}{4}) \geq (1-p)\lambda^2$$
since $u \geq 2\lambda$. In the latter case, substituting $v = \frac{4\lambda - u}{5}$ into \eqref{eq: lb 2.1} and recalling that $u > 4\lambda$, we have
$$\frac{\mathbb{E}[\|w\|^2]}{n} \geq \frac{1}{10}(1-p)(\lambda + u)^2 \geq \frac52 (1-p)\lambda^2.$$
We always have the lesser of these two lower bounds, namely $\frac{\mathbb{E}[\|w\|^2]}{n} \geq (1-p)\lambda^2$.

Otherwise, we have $\beta > 1$ and we can apply inequality \eqref{eq: f lb} to \eqref{eq: main sep lb}. This yields
\begin{equation} \label{eq: lb 2.1.2}
\frac{\mathbb{E}[\|w\|^2]}{n} \geq (1-p)w_c^2 \left(\beta^2 + 1 - \frac{1}{\sqrt{2\pi e}}\right) = (1-p)\left((\lambda-v)^2 + c_1(u+v)^2\right),
\end{equation}
where $c_1 = \frac14(1-\frac{1}{\sqrt{2\pi e}})$. We can minimize the above expression with respect to $v \leq 0$. The minimum occurs at $v=0$ when $u\leq \frac{\lambda}{c_1}$ and at $v = \frac{\lambda - c_1u}{1+c_1}$ when $u > \frac{\lambda}{c_1}$. In the first case, we have
$$ \frac{\mathbb{E}[\|w\|^2]}{n} \geq (1-p)(\lambda^2 + c_1u^2) \geq (1-p)(\lambda^2 + c_1)$$
since $u > 1$. In the second case, we have
\begin{align*}
    \frac{\mathbb{E}[\|w\|^2]}{n} &\geq (1-p)\left( \left(\frac{(1+c_1)\lambda - \lambda + c_1u}{1+c_1}\right)^2 + c_1\left(\frac{(1+c_1)u+\lambda-c_1u}{1+c_1}\right)^2\right) \\[5pt]
    &= (1-p)\left( \left(\frac{c_1(\lambda + u)}{1+c_1}\right)^2 + c_1\left(\frac{\lambda+u}{1+c_1}\right)^2\right) \\[5pt]
    &\geq (1-p)(1+\frac{1}{c_1})\lambda^2.
\end{align*}
To finish Case 2.1, we always have at least the minimum of the lower bounds which we have obtained in this section, namely $\frac{\mathbb{E}[\|w\|^2]}{n} \geq (1-p)\lambda^2$.

\paragraph{Case 2.2: $v \leq 0$ and $0 \leq u \leq 1$.}
In this case, $\beta, \alpha \geq 0$. In fact, we have the stricter constraint $\beta > 1$. To see this, recall that we showed in Case 2.1 that when $\beta \leq 1$, we have $v \geq \frac{2\lambda - u}{3} > 0$ for $0\leq u \leq 1$. Since we have assumed $v\leq 0$, this cannot happen, thus $\beta > 1$ and we can apply inequality \eqref{eq: f lb} to \eqref{eq: main sep lb}. Since $\alpha \geq 0$ in this setting as well, we have
\begin{equation} \label{eq: lb 2.2}
\frac{\mathbb{E}[\|w\|^2]}{n} \geq \frac12 p\left((1-u)^2 + \frac{(u+v)^2}{4}\right) + (1-p)\left( (\lambda-v)^2 + c_1(u+v)^2\right),
\end{equation}
with $c_1 = \frac14 (1-\frac{1}{\sqrt{2\pi e}})$ as before. We again compute the derivative of \eqref{eq: lb 2.2} with respect to $v$:
\begin{align*}
    p\frac{u+v}{4} + 2(1-p)(v-\lambda + c_1(u+v)) &= \left(\frac{p}{4} + 2(1-p)(1+c_1)\right)v + \left(\frac{pu}{4} + 2(1-p)c_1 u - 2(1-p)\lambda\right) \\[5pt]
    &\leq \left(\frac{p}{4} + 2(1-p)(1+c_1)\right)v + \left(\frac{p}{4} + 2(1-p)(c_1-\lambda)\right).
\end{align*}
Note that this is nonpositive for all $v\leq 0$ provided that $\lambda \geq c_1 + \frac{p}{8(1-p)}$. In this case, \eqref{eq: lb 2.2} is minimized at $v=0$ and we have
$$ \frac{\mathbb{E}[\|w\|^2]}{n} \geq \frac12 p\left((1-u)^2 + \frac{u^2}{4}\right) + (1-p)(\lambda^2 + c_1u^2) \geq (1-p)\lambda^2. $$

\paragraph{Combining the cases.}
Let us now gather the constraints on $\lambda$ as well as the lower bounds on $\frac{\mathbb{E}[\|w\|^2]}{n}$. The smallest lower bound comes from Case 1.2, and we have
$$ \frac{\mathbb{E}[\|w\|^2]}{n} \geq \underbrace{\frac{16c_1(1-p)+2p}{2(8+8c_1(1-p)-7p)}}_{c_2}(1-p)\lambda^2 + \underbrace{\frac{8p+8c_1p(1-p)-7p^2}{2(8+8c_1(1-p)-7p)}}_{c_3}. $$
Case 1.2 also required that $\lambda \leq \frac{1}{2(1-p)}$. From Case 2.2, we also have the constraint $\lambda \geq c_1 + \frac{p}{8(1-p)}$. Our problem is therefore reduced to finding $\lambda$ such that
\begin{equation} \label{eq: core vs bad ineq}
    c_2(1-p)\lambda^2 + c_3 > \frac12 p + (1-p)\left(\lambda^2 - 2\lambda + \frac94\right)
\end{equation}
subject to $c_1 + \frac{p}{8(1-p)} \leq \lambda \leq \frac{1}{2(1-p)}$. The inequality \eqref{eq: core vs bad ineq} reduces to
\begin{equation} \label{eq: lambda range}
(1-c_2)\lambda^2 - 2\lambda + \frac94 + \frac{\frac12p - c_3}{1-p} < 0 \hspace{.1in} \Longrightarrow \hspace{.1in} \lambda \in \textrm{Range}\left(\frac{1 \pm \sqrt{1 - (1-c_2)\left(\frac94 + \frac{\frac12p-c_3}{1-p}\right)}}{1-c_2}\right).
\end{equation}
We now analyze $c_2$ and $c_3$, starting with $c_3$. Observe that
\begin{equation*}
    c_3 = \frac{8p+p\overbrace{(8c_1(1-p)-7p)}^{<0}}{2(8+8c_1(1-p)-7p)} \geq \frac{8p + 8c_1(1-p) - 7p}{2(8+8c_1(1-p)-7p)} \geq \frac12.
\end{equation*}
The inequality holds because $p < 1$. Next, we consider $1-c_2$:
\begin{equation*}
1-c_2 = \frac{16 + 16c_1(1-p) - 14p - 16c_1(1-p) - 2p}{2(8+8c_1(1-p)-7p)} = \frac{8(1-p)}{8+8c_1(1-p)-7p} \leq 8(1-p).
\end{equation*}
Plugging these into our range for $\lambda$, we see that
$$ \frac{1+\sqrt{1-(1-c_2)\left(\frac94 + \frac{\frac12p-c_3}{1-p}\right)}}{1-c_2} \geq \frac{1 + \sqrt{1 - 8(1-p)\left(\frac94 - \frac12 \right)}}{8(1-p)} = \frac{1 + \sqrt{1 - 14(1-p)}}{8(1-p)}. $$
Finally, we see that the range $c_1 + \frac{p}{8(1-p)} < \lambda < \frac{1 + \sqrt{\frac12}}{8(1-p)}$ satisfies both \eqref{eq: lambda range} as well as the constraints $c_1 + \frac{p}{8(1-p)} < \lambda < \frac{1}{2(1-p)}$. This completes the proof.
\end{proof}

\section{Experiment Details}
\label{appendix:expdetail}
\subsection{Label Shift}
For numerical experiments under label shift setting, we train a ResNet-32 \cite{he2016deep} on both Fashion MNIST \cite{xiao2017fashion} and CIFAR-10 \cite{krizhevsky2009learning} datasets. In both cases, we train the ResNet-32 model using stochastic gradient descent method with a momentum term of 0.9, a weight decay rate of 2e-4 and a batch size of 128. Each model is trained for 400 epochs, and we use an adaptive learning rate schedule where the initial learning rate is set to be 0.1 and it will be annealed to 1e-3 after 150 epochs and 1e-5 after 250 epochs. 

In particular, we find that when applying importance tempering method for learning extremely imbalanced dataset, the optimization landscape become much more complicated and hard to optimize. To tackle this problem, we will first use a low temperature to train the model for a number of epochs and then apply the high temperature to train for the remaining epochs. For results in Figure \ref{figure:temperature}, we will first set $\gamma=0.2$ to pretrain the model for 100 epochs and then apply the real $\gamma$ for 300 epochs when the true $\gamma$ is greater than $0.2$. For results in Table \ref{table:result}\ref{table:tempposition}, and Figure \ref{figure:collapse}, we also adopt this pretraining techniques and report the optimal results for $\gamma$ ranging from 0.0 to 1.0.

\subsection{Spurious Correlation}

On Waterbirds and CelebA dataset, we use the Pytorch torchvision implementation of the ResNet50 model, starting from pretrained weights. We train the ResNet50 models using stochastic gradient descent with a momentum term of 0.9 and
1`a batch size of 128; the original paper used batch sizes of 128 or 256 depending on the dataset. Following \cite{sagawa2019distributionally}, we use a fixed learning rate instead of the standard adaptive learning rate schedule so that we can compare the difference between ours and previous methods (avoid introducing more hyperparameters). Different from \cite{sagawa2019distributionally}, we train all the model till 500 epochs, so that we can fully explore the feature space which enables us to get benefit from overparameterization. For the standard training, we select apply a 1e-4 weight decay and for strong $\ell_2$ penalty, we use $\lambda = 1.0$ for waterbirds and $\lambda=0.1$ for CelebA. For CelebA, we select temperature $(1/75,1/100,1/100,1)$ for standard training and $(1/100,1/225,1/275,1)$ for regularized models. For Waterbirds, we select temperature $(1/100,1/50,1,1/75)$ for standard training and $(1/20,1/15,1,1/15)$ for regularized models. We use the Pytorch torchvision implementation of the WideResNet50 model as our larger models. For other hyperparameter, we set the same as the previous setting.  For CelebA, we select temperature $(1/75,1/250,1/250,1)$ for standard training and $(1/100,1/225,1/275,1)$ for regularized models. On MultiNLI dataset, we use huggingface pytorch-transformers implementation \cite{wolf2019huggingface} for the Bert (bert-base-uncased) and Bert large (bert-large-uncased) model, starting from pretrained weights.  We use the
default tokenizer and model settings from that implementation, including a fixed linearly-decaying learning rate starting at 0.00002, AdamW optimizer, dropout, batch size of 32 and no weight decay as \cite{sagawa2019distributionally} implements. We select temperature as $(1/150,1/8000,1.300,3,1/80,1)$.

\end{document}